\documentclass{article}

\usepackage{amsmath,amsfonts,bm}




\def\Figref#1{Figure~\ref{#1}}


\def\Secref#1{Section~\ref{#1}}


\def\eqref#1{equation~\ref{#1}}









\def\1{\bm{1}}








\def\vx{{\bm{x}}}


\def\mA{{\bm{A}}}

\def\mC{{\bm{C}}}
\def\mD{{\bm{D}}}
\def\mE{{\bm{E}}}

\def\mI{{\bm{I}}}

\def\mL{{\bm{L}}}

\def\mS{{\bm{S}}}

\def\mU{{\bm{U}}}
\def\mV{{\bm{V}}}
\def\mW{{\bm{W}}}
\def\mX{{\bm{X}}}
\def\mY{{\bm{Y}}}

\DeclareMathAlphabet{\mathsfit}{\encodingdefault}{\sfdefault}{m}{sl}
\SetMathAlphabet{\mathsfit}{bold}{\encodingdefault}{\sfdefault}{bx}{n}

\def\gA{{\mathcal{A}}}

\def\gL{{\mathcal{L}}}

\def\gO{{\mathcal{O}}}

\def\gR{{\mathcal{R}}}

\def\gT{{\mathcal{T}}}



\def\sR{{\mathbb{R}}}
\def\sS{{\mathbb{S}}}

\def\sY{{\mathbb{Y}}}










\usepackage{PRIMEarxiv}

\usepackage[utf8]{inputenc} 
\usepackage[T1]{fontenc}    
\usepackage{hyperref}       
\usepackage{url}            
\usepackage{booktabs}       
\usepackage{amsfonts}       
\usepackage{nicefrac}       
\usepackage{microtype}      
\usepackage{lipsum}
\usepackage{fancyhdr}       
\usepackage{graphicx}       
\usepackage{multicol}
\usepackage{amsthm}
\usepackage{multirow}
\usepackage{wrapfig}
\graphicspath{{media/}}     
\newcommand{\Min}{\textrm{minimize}}

\def\EQREF#1{(\ref{#1})}

\usepackage{verbatim}
\newtheorem{thm}{Theorem}
\newtheorem{cor}{Corollary}

\newtheorem{prop}{Proposition}

\newtheorem{assump}{Assumption}

\pagestyle{fancy}
\thispagestyle{empty}
\rhead{ \textit{ }} 

\fancyhead[LO]{AIR}

\title{Adaptive and Implicit Regularization for Matrix Completion
}

\author{
  Zhemin Li\\
  Department of Mathematics\\
  National University of Defense Technology \\
  Changsha 410008,Peoples Republic of China\\
  \texttt{lizhemin@nudt.edu.cn} \\
  \And
  Tao Sun\\
  College of Computer\\
  National University of Defense Technology \\
  Changsha 410008,Peoples Republic of China\\
  \texttt{nudtsuntao@163.com} \\
  \And
  Hongxia Wang\\
  Department of Mathematics\\
  National University of Defense Technology \\
  Changsha 410008,Peoples Republic of China\\
  \texttt{wanghongxia@nudt.edu.cn}, corresponding author \\
  \And
  Bao Wang\\
  Department of Mathematics, Scientific Computing and Imaging Institute\\
  University of Utah, Salt Lake City, UT, 84112, USA\\
  \texttt{wangbaonj@gmail.com} \\
}

\begin{document}
\maketitle

\begin{abstract}
The explicit low-rank regularization, e.g., nuclear norm regularization, has been widely used in imaging sciences. However, it has been found that implicit regularization outperforms explicit ones in various image processing tasks. Another issue is that the fixed explicit regularization limits the applicability to broad images since different images favor different features captured by different explicit regularizations. As such, this paper proposes a new adaptive and implicit low-rank regularization that captures the low-rank prior dynamically from the training data. The core of our new adaptive and implicit low-rank regularization is parameterizing the Laplacian matrix in the Dirichlet energy-based regularization, which we call the regularization \textit{AIR}. Theoretically, we show that the adaptive regularization of AIR enhances the implicit regularization and vanishes at the end of training. We validate AIR's effectiveness on various benchmark tasks, indicating that the AIR is particularly favorable for the scenarios when the missing entries are non-uniform. The code can be found at \href{https://github.com/lizhemin15/AIR-Net}{https://github.com/lizhemin15/AIR-Net}.
\end{abstract}

\keywords{low-rank regularization, adaptive and implicit regularization, matrix completion, deep learning}

\section{Introduction}
Natural images usually lie in a high-dimensional ambient space, but with a much lower intrinsic dimension \cite{Tenenbaum2000AGG}, which has motivated many statistical priors for image processing, including sparsity \cite{Cands2007EnhancingSB} and low-rank \cite{Dong2013NonlocalIR}. Other priors, such as Total Variation (TV) and Dirichlet Energy (DE), also play vital roles in image processing. There is no standard way to choose a proper prior to a particular task, and people always depend on empirical experiences.

This paper is devoted to establishing a new adaptive regularization. The core of our proposed adaptive regularization is parameterizing the Laplacian matrix in the conventional DE regularization, and this regularization changes on the fly as the underlying matrix gets updated. In particular, the proposed adaptive regularizer estimates the prior matrix based on the historical information during learning iterations for matrix completion. Compared to the existing regularization schemes for matrix completion, the proposed regularizer is adaptable to matrix completion problems from different applications with different missing patterns. Furthermore, our adaptive regularizer learns hyperparameters from data rather than manually tuning, saving tremendous computational costs. Theoretically, we prove that solving our proposed model with gradient descent can enhance the low-rank property of the recovered matrices, c.f. Theorem \ref{thm..ImplicitReg}, and the gradient descent iterations will result in a non-trivial minimum, c.f. Theorem \ref{thm..ConvReg}. Numerically, we verify our proposed adaptive regularizer's superiority over existing ones on various benchmark matrix completion tasks.

We organize this paper as follows: we first introduce the preliminary for readers unfamiliar with the regularization methods in \Secref{sec..preliminary}.
We will integrate our proposed adaptive regularization scheme with DMF in \Secref{sec..adaptivereg}. Then we illustrate and theoretically prove two implicit regularization properties of our proposed adaptive regularization scheme in \Secref{sec..TheoreticalAna}. We empirically validate the efficacy of our proposed adaptive regularization scheme in \Secref{sec..ExpAna}, followed by concluding remarks. The missing details of technical proofs are provided in the appendix.

\section{Preliminary}
\label{sec..preliminary}
\subsection{Related works}
\subsubsection{Low-rank matrix completion}
Modeling and learning the low-dimensional and low-rank structures of natural images is an important and exciting research area in signal processing and machine learning communities \cite{RemiLam2020MultifidelityDR,Bigoni2021NonlinearDR,Scetbon2021DeepKD,Golts2021DeepET,Khatib2021LearnedGM}.
The low-rank structure is fascinating and widely used prior to image processing. One of the major obstacles in enforcing low-rank prior for image processing is that the exact low-rank minimization is a discrete optimization problem and is NP-hard \cite{fazel2002matrix}. The computational challenge of direct low-rank minimization has motivated several surrogate models of the low-rank regularization. In particular, the authors of \cite{Cands2009ExactMC} relax the low-rank minimization to the nuclear norm minimization, which can be further relaxed to convex optimization with linear constraints. Nevertheless, the convex relaxation solves the low-rank minimization problem while sacrificing the model's accuracy significantly. Indeed, the convex relaxation can introduce unacceptable errors, especially for the problems that are insufficiently low-rank\cite{Boyarski2019SpectralGM}.  
In response, researchers have shifted their attention to the nonconvex models for low-rank guarantees. For instance, the matrix factorization model has been proposed to compensate for the model of nuclear norm relaxation \cite{DanielDLee1999LearningTP,AndrzejCichocki2009NonnegativeMA,AjitPSingh2008AUV,YuXiongWang2013NonnegativeMF}. The matrix factorization model can be formulated as follows: 

Given a matrix $\mX^*\in \sR^{m\times n}$, we seek the decomposition $\mX=\mW^{[0]}\cdot\mW^{[1]}$ and ensure that $\mX\approx \mX^*$, where $\mW^{[0]}\in\sR^{m\times r}$ and $\mW^{[1]}\in\sR^{r\times n}$ with $r\leq \min\{m,n\}$. The decomposition $\mW^{[0]}\cdot\mW^{[1]}$ ensures the rank of $\mX$ to be capped by $r$, and such a decomposition can be numerically computed by alternating the update of $\mW^{[0]}$ and $\mW^{[1]}$. It is straightforward to generalize the above matrix decomposition to the product of multiple matrices (a.k.a. multi-layer matrix factorization), i.e., $\mX = \prod_{\ell=0}^{L-1} \mW^{[\ell]}$\footnote{For the sake of simplicity, we denote the decomposition as $\mX = \prod_{\ell=0}^{L-1} \mW^{[\ell]}$.}, which is called the Deep Matrix Factorization (DMF) \cite{Arora2019ImplicitRI}. Interestingly, the multi-layer matrix factorization enjoys implicit low-rank regularization without any special requirement on the initialization and constraint on $r$, which overcomes the sensitivity on initialization in the two-layer matrix factorization \cite{albright2006algorithms}.

\subsubsection{Implicit regularization: deep matrix factorization}
Due to the advantage of DMF over many existing matrix completion algorithms, various efforts have been contributed to this area. The seminal research about implicit regularization originates from \cite{Gunasekar2018ImplicitRI}, which considers the recovery of a positive semi-definite matrix from symmetric measurements. Based on the shallow matrix factorization model $\mX=\mW^{[0]}\mW^{[1]}$, the authors of \cite{Gunasekar2018ImplicitRI} prove that when $\mW^{[0]},\mW^{[1]}$ are initialized to $\alpha \mI$ with $\alpha>0$, their methods can find the minimal nuclear norm when $\alpha \rightarrow 0$.
In \cite{Arora2019ImplicitRI}, Arora et al. prove a similar result for arbitrary depth and asymmetric cases; however, they ignore the depth effects. As the main contribution of \cite{Arora2019ImplicitRI}, Arora et al. relax the assumptions on the initialization and get rid of the assumption of symmetric measurements. Therefore, we can use more general and convenient initializations that factors in the factorization is balanced at the initialization, i.e., $\left(\mW^{[j+1]}(0)\right)^\top\mW^{[j+1]}(0)=\mW^{[j]}(0)\left(\mW^{[j]}(0)\right)^\top$ for $j=1,\ldots,L-1$. It is worth mentioning that the initialization adopted in \cite{Gunasekar2018ImplicitRI} also satisfy the balanced initialization condition. The theoretical results in \cite{Arora2019ImplicitRI} are established by using the gradient dynamics of DMF. In particular, the singular values of $\mX(t)=\prod_{\ell=0}^{L-1} \mW^{[\ell]}(t)$ evolve with different speeds and thus induce the implicit low-rank regularization. More interestingly, Arora et al. find that the implicit regularization becomes more significant as $L$ increases \cite{Arora2019ImplicitRI}.

Nevertheless, the theoretically-principled initialization methods mentioned above are not popular in practice, instead, Gaussian initialization is preferred for practical usage \cite{He2015DelvingDI}. The bottleneck of the Gaussian initialization is that it
may not yield a low-rank bias starting from the Neural Tangent Kernel (NTK) regime \footnote{This phenomenon happens when DMF is sufficiently wide, and the variance of the initialization is large enough.} \cite{ArthurJacot2018NeuralTK,SimonSDu2018GradientDP,LnacChizat2019OnLT,SanjeevArora2019OnEC,JaehoonLee2019WideNN,JiaoyangHuang2019DynamicsOD}. However, outside the NTK regime, there exists an active regime where the dynamics of DMF are nonlinear and favor the low-rank solution \cite{LnacChizat2018OnTG,GrantMRotskoff2018ParametersAI,SongMei2018AMF,SongMei2019MeanfieldTO,LnacChizat2020ImplicitBO,BlakeWoodworth2020KernelAR,EdwardMoroshko2020ImplicitBI}. Jacot et al. further discuss the effects of initialization for implicit low-rank regularization in \cite{Jacot2021DeepLN}. As the norm of initialized parameters goes to zero, Jacot et al. conjecture that the trajectory of the gradient flow goes from one saddle point to another saddle point, corresponding to matrices with increased ranks. Therefore, we only need to initialize the parameters of DMF to be sufficiently small for the two-layer matrix factorization model. Another advantage of DMF is that we do not need to estimate $r$ for matrix factorization, instead we can simply choose $r=\min\left\{m,n\right\}$. Empirically, DMF outperforms the nuclear norm regularization and the two-layer matrix factorization models for low-rank matrix completion. As shown in Figure~\ref{fig:WhyDMF} (a), DMF achieves remarkable performance for exact low-rank matrix completion problems, and the performance becomes better as the depth of DMF model increases. Moreover, DMF has been applied to many other practical applications, including natural image processing and recommendation systems \cite{Li2020ARD,You2020RobustRV,Boyarski2019SpectralGM}.

\begin{figure}
\begin{center}
\begin{tabular}{ccc}
\hspace{-0.2cm}\includegraphics[width=0.3\columnwidth]{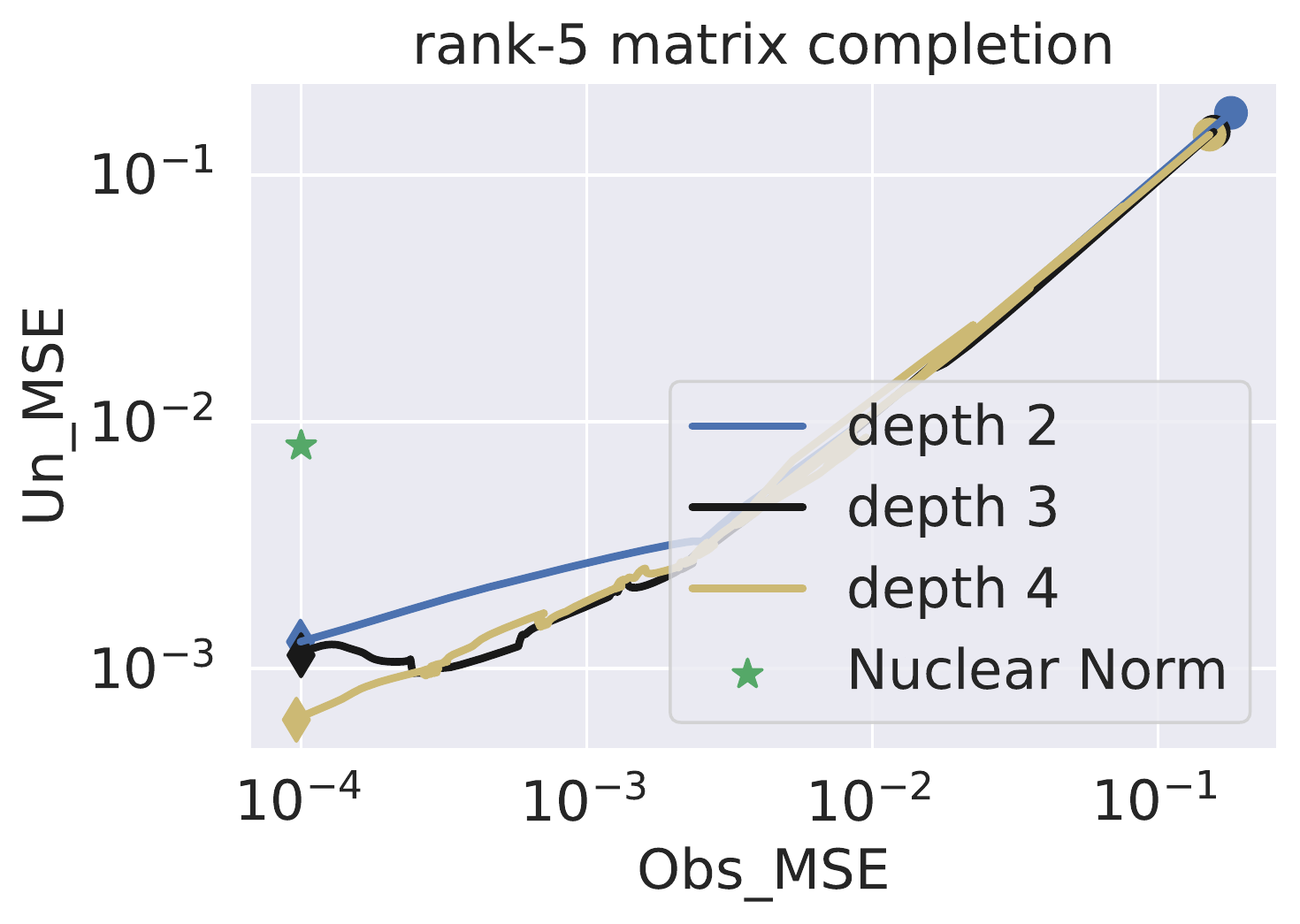}&
\hspace{-0.3cm}\includegraphics[width=0.3\columnwidth]{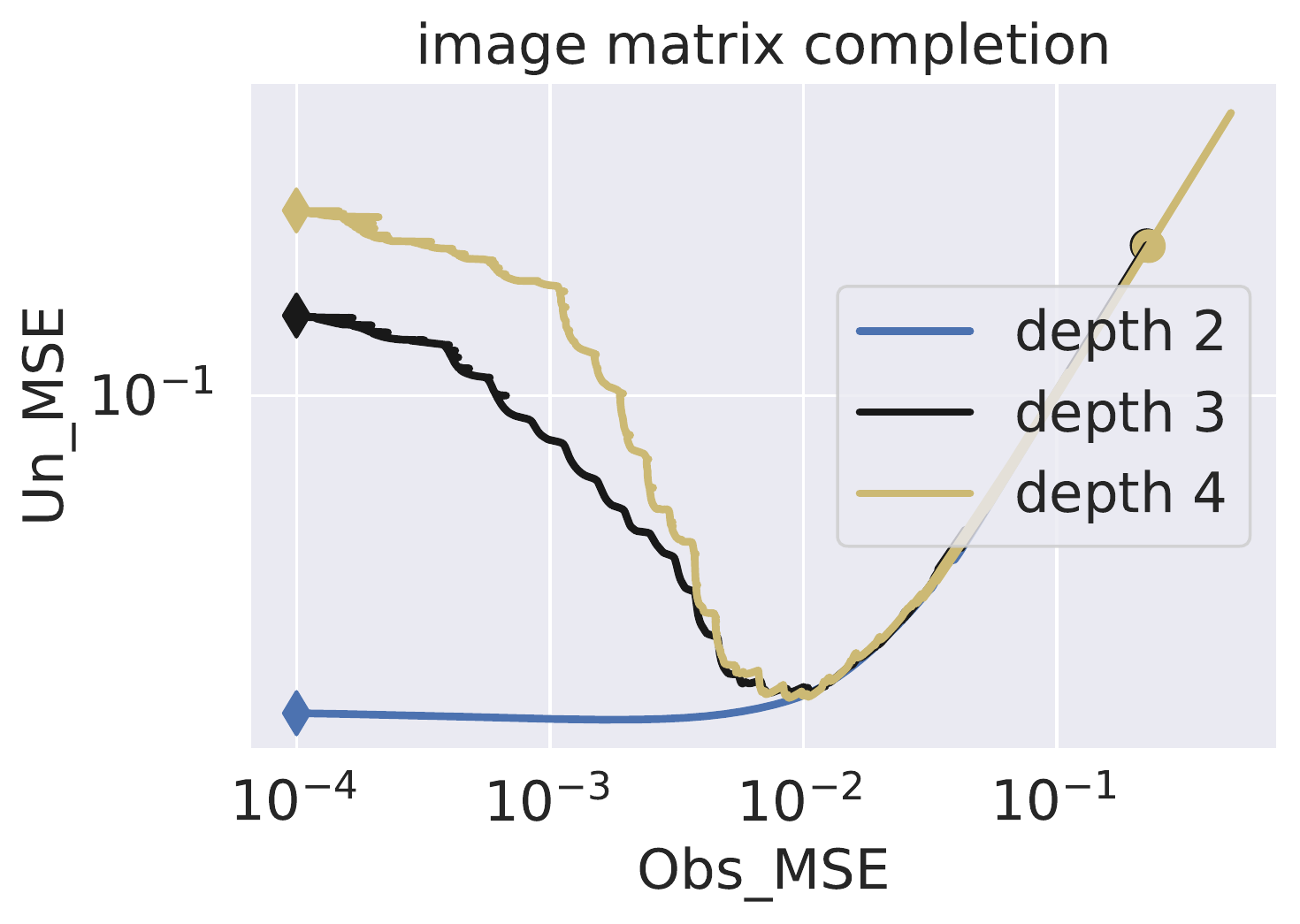}&
\hspace{-0.3cm}\includegraphics[width=0.3\columnwidth]{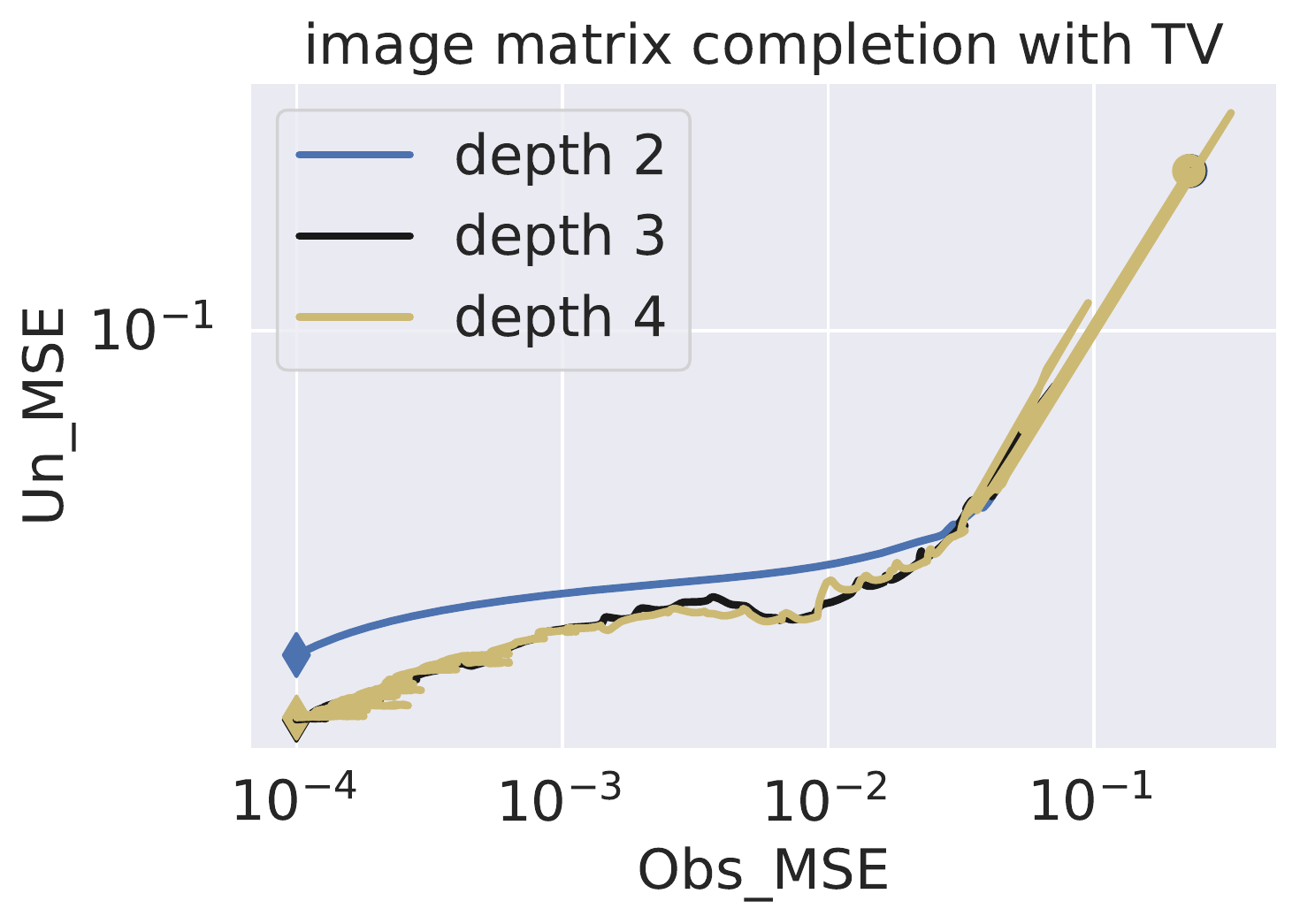}\\[-3pt]
{\footnotesize (a)   Random missing} & {\footnotesize (b) Random missing} & {\footnotesize (c)  Random missing}  \\
\hspace{-0.2cm}\includegraphics[width=0.3\columnwidth]{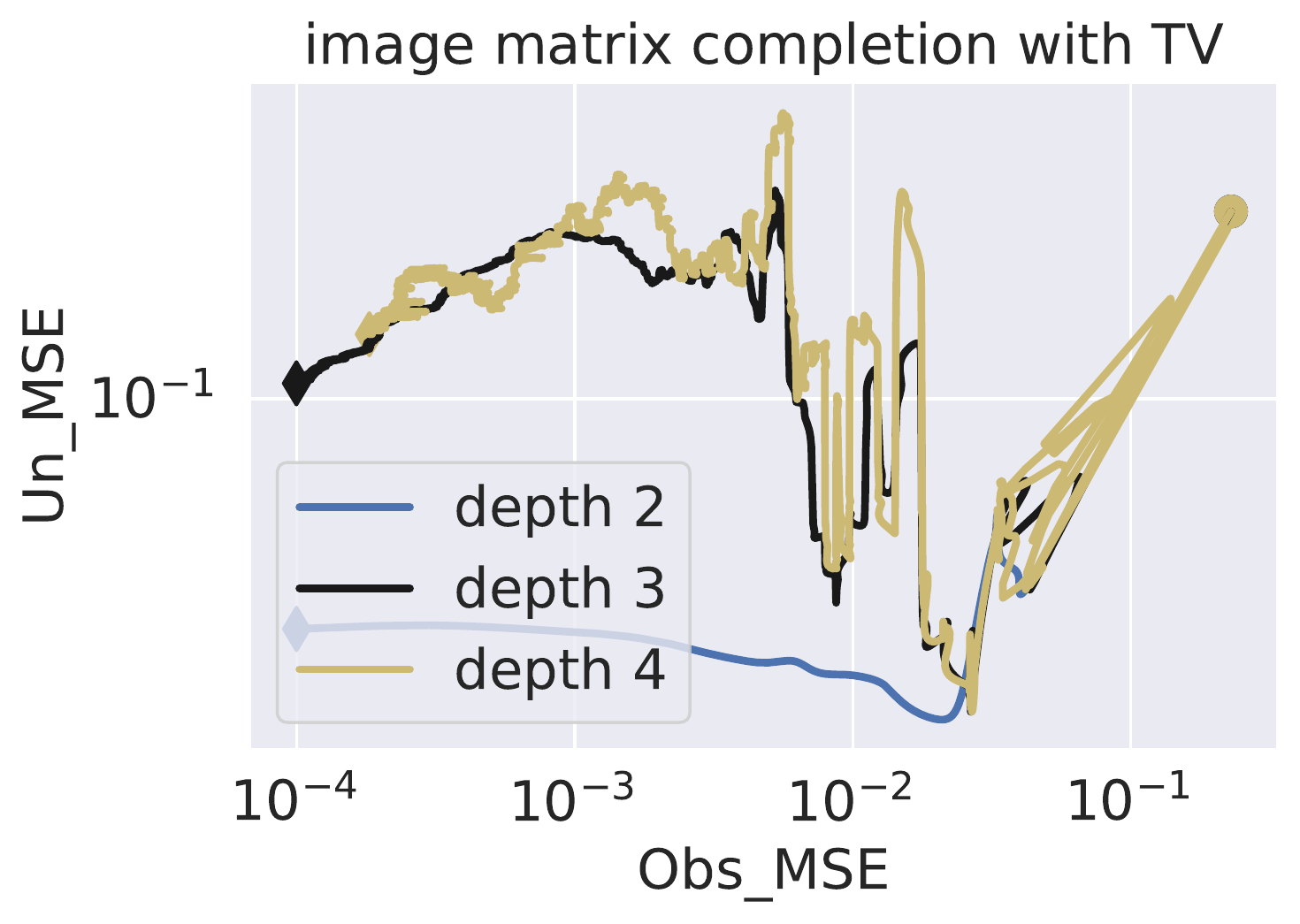}&
\hspace{-0.3cm}\includegraphics[width=0.3\columnwidth]{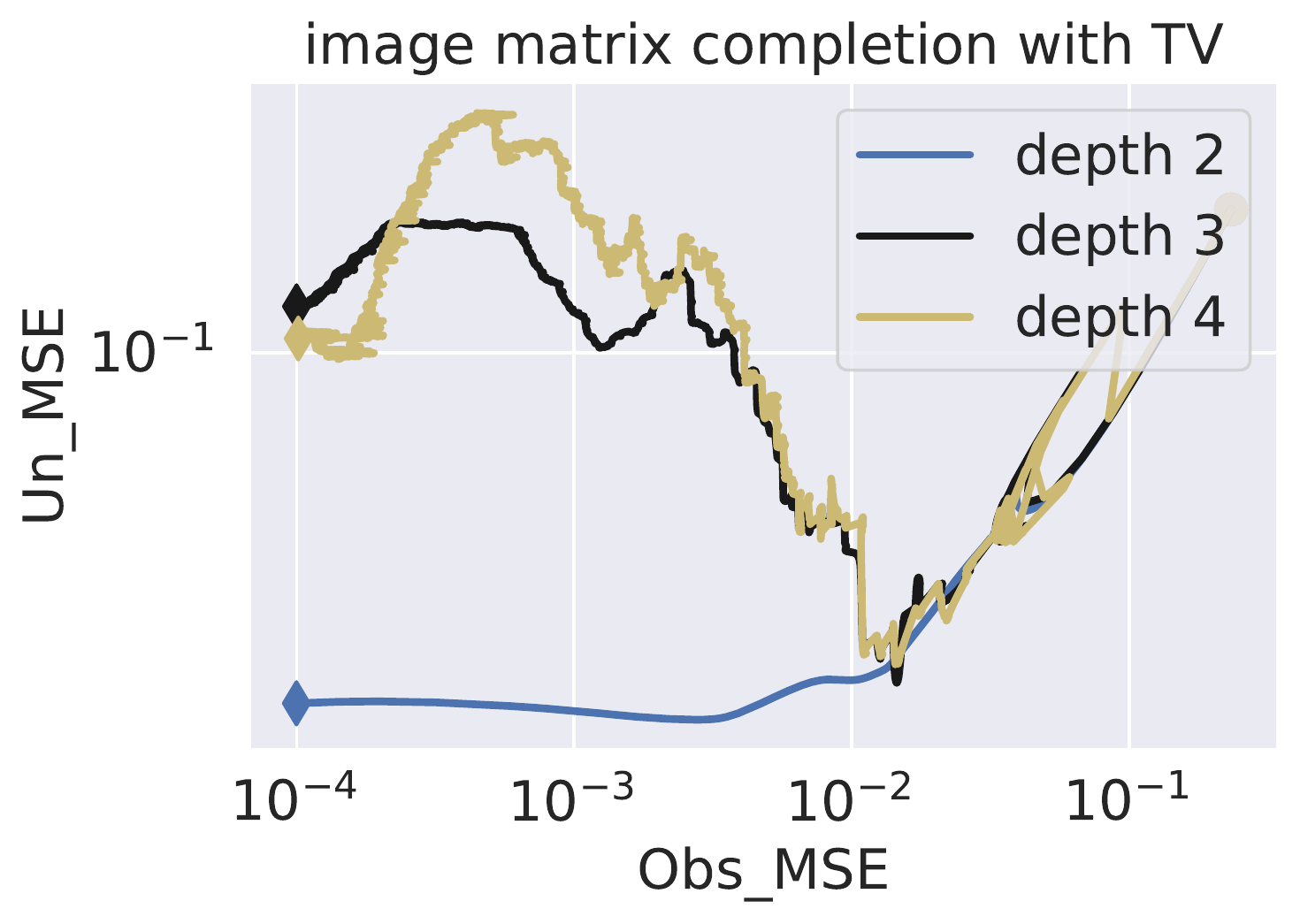}&
\hspace{-0.3cm}\includegraphics[width=0.3\columnwidth]{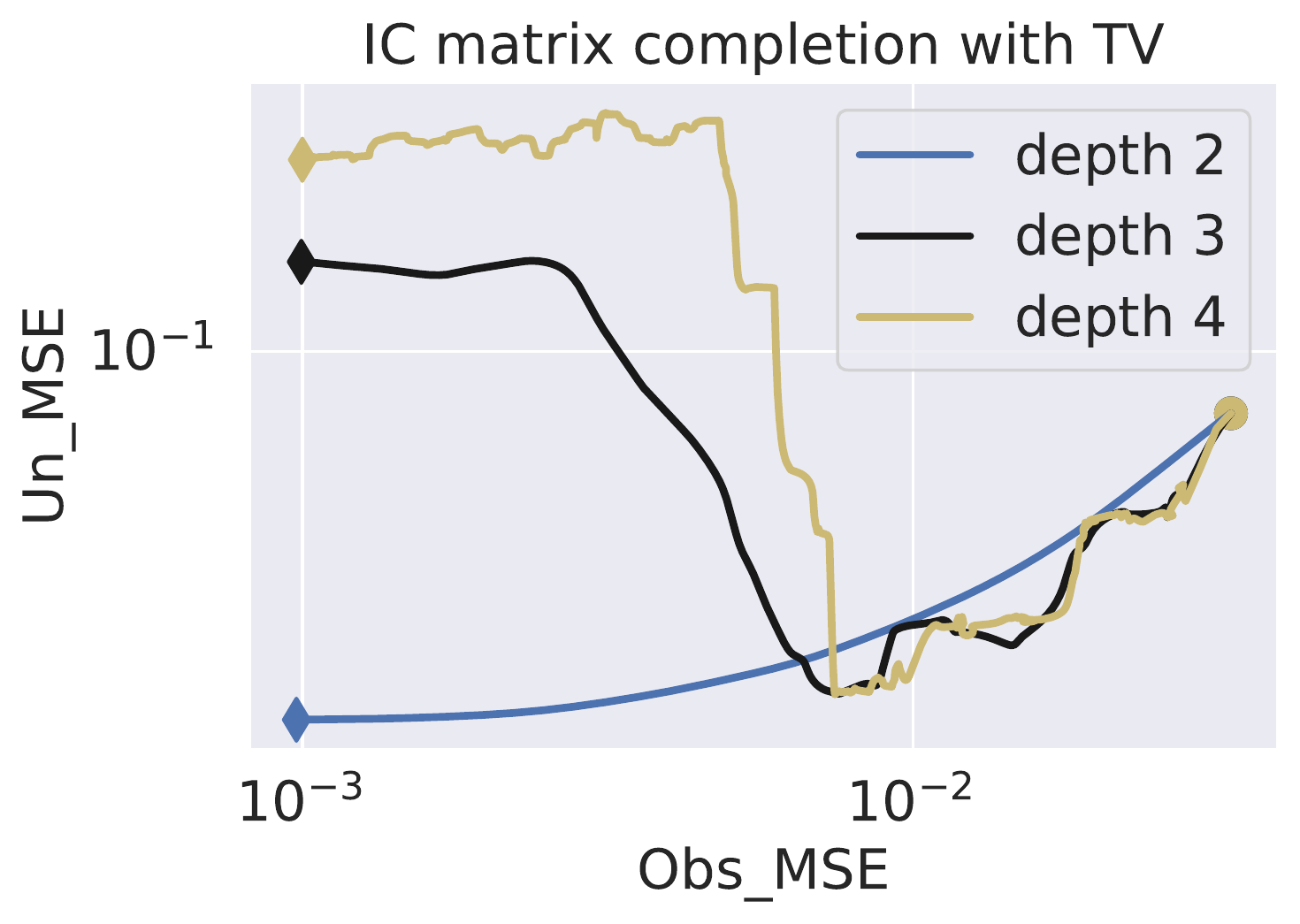}\\[-3pt]
{\footnotesize (d)Patch missing} & {\footnotesize (e) Textural  missing} & {\footnotesize (f)  Random missing}  \\
\end{tabular}
\end{center}\vspace{-0.3cm}
\caption{\footnotesize Trajectories of optimizing DMF with different depths for different matrix completion problems, where we set the mean squared error (MSE) on observed elements to be less than $10^{-3}$ as the stop criterion. Obs\_MSE and Un\_MSE stand for the MSE on observed and unobserved elements, respectively. (a) DMF for completing a rank-5 random matrix of size $100\times 100$ with $80\%$ random missing entries; (b) DMF for completing the gray scale Barbara image, which has the size $240\times 240$ but $80\%$ of its randomly sampled entries are missing; (c) DMF + TV for completing the gray scale Barbara image, which has the size $240\times 240$ but $80\%$ of its randomly sampled entries are missing; (d) DMF + TV for completing the gray scale Barbara image with missing patch (see \Figref{fig:rec_img} (c)), which has the size $240\times 240$; (e) DMF + TV for completing the gray scale Barbara image with missing textural (see \Figref{fig:rec_img} (b)), which has the size $240\times 240$; (f) DMF + TV for completing the IC  (\cite{Boyarski2019SpectralGM}), which has the size $240\times 240$ but $80\%$ of its randomly sampled entries are missing.}\label{fig:WhyDMF}
\end{figure}

\subsubsection{Total variation and Dirichlet energy regularization}
Compared to solving exact low-rank matrix completion problems, directly applying DMF for solving low-rank-related practical problems, including image inpainting, often gives opposite results. As shown in \Figref{fig:WhyDMF} (b), the deep matrix factorization model performs inferior to the shallow models. By investigating the optimization trajectory during the gradient descent procedure, we notice that the major issue that causes deeper models to perform worse than the shallow ones is because the underlying image inpainting problem cannot be precisely formulated as a low-rank matrix completion problem. Indeed, some detailed structures of the matrix correspond to small singular values of the matrix, which the low-rank prior cannot entirely model. To resolve the above issue, we need to enforce other priors to compensate for subtleties in solving image inpainting problems and beyond.

A simple but valuable prior for natural image processing is piece-wise smoothness, which can be induced using the Total Variation (TV) regularization \cite{Rudin1992NonlinearTV,JingQin2021BlindHU,MujiburRahmanChowdhury2020NonblindAB}. 
TV has been integrated with DMF, achieving promising recovery results for matrix completion problems, especially when the rate of missing entries is high \cite{Li2020ARD}. As illustrated in \Figref{fig:WhyDMF} (c), deep DMF models outperform shallow DMF models when TV regularization is enforced. Though TV is a natural prior for natural images, it is suboptimal or inappropriate for many other practical applications, including recommendation systems through matrix completion. Besides the TV regularization, the Dirichlet Energy (DE)-based regularization, defined through the Laplacian matrix \cite{Boyarski2019SpectralGM}, has also been employed for low-rank matrix completion. The DE regularization can describe the similarity between rows and columns of a given matrix. Moreover, we can view the DE regularization as a nonlocal extension of the TV regularization.

\subsection{Do we need a new regularization: Fixed or adaptive}
Given a matrix completion problem, we can integrate several existing explicit regularizers with DMF or other matrix completion models to obtain some successful results. However, the design or selection of explicit regularizers is often task- and missing pattern-dependent. As shown in \Figref{fig:WhyDMF} (d), (e), and (f), although TV regularization performs quite well for random missing patterns and natural images, the performance of using TV regularizers dramatically degrades when they are applied for other missing patterns and data. As such, we need to develop a new regularizer adaptable to different missing patterns and different types of data.

\subsection{Notations}
We denote scalars by lower or upper case letters; vectors and matrices by lower and upper case boldface letters, respectively. For a vector $\vx = (x_1, \cdots, x_d)^\top\in \mathbb{R}^d$, we use $\|\vx\| := {(\sum_{i=1}^d |x_i|^2)^{1/2}}$ to denote its $\ell_2$-norm. We denote the matrix whose entries are all 1s as $\mathbf{1}_{m,n}\in\mathbb{R}^{m\times n}$. For a matrix $\mA$, we use $\mA^\top$,  $\mA^{-1}$, and $\|\mA\|$  to denote its transpose, inverse, and spectral norm, respectively. Given a matrix $\mA$, we denote $\mA_{ij}$ as its $(i,j) $-th entry, and denote its $k$-th column and $k$-th row as $\mA_{:,k}$ and $\mA_{k,:}$, respectively. For a function $f(\vx): \mathbb{R}^d \rightarrow \mathbb{R}$, we denote $\nabla f(\vx)$ as its gradient.

\section{Adaptive Laplacian Regularization}
\label{sec..adaptivereg}
Besides low-rank, self-similarity is another widely used prior for solving inverse problems, and the self-similarity is usually measured by the DE. A particular type of DE is the TV \cite{Rudin1992NonlinearTV}, which describes the piece-wise self-similarity within a given image. To adapt the DE regularization to different missing patterns and different types of data, we raise its degree of freedom by introducing a learnable DE regularization. In particular, we propose the following model
\begin{equation}
\label{eq..AirnetFramework}
    \Min_{\mX,\mW_i}\left\{\gL_{\textrm{all}} := \Bigg(\gL_{\sY}\left(\mY, \gA(\mX)\right) +
	\sum_{i=1}^N \lambda_i \cdot \gR_{\mW_i} \left(\gT_i\left(\mX\right)\right)\Bigg)\right\},
\end{equation}
where the linear operator $\gA(\cdot):\mathbb{R}^{m\times n}\mapsto \mathbb{R}^o$, $\mY\in\sY\subseteq\mathbb{R}^o$ is a measure vector and $\gL_\sY$ is a distance metric on $\sY$. Different from other regularization models for matrix completion, here \emph{$\mX\in\mathbb{R}^{m\times n}$ is parameterized by products of matrix which tends to be low-rank implicitly}, and $\gR_{\mW_i}$ is an adaptive regularizer and $\lambda_i\geq 0$ is a constant. $\gT_i(\cdot)$ is a linear or nonlinear transformation. See \Secref{subsec..AdaReg} for details. A particular case of \EQREF{eq..AirnetFramework} for matrix completion is given in \Secref{subsec..AIRNet} below.

\subsection{Self-similarity and Dirichlet energy}
Given a matrix $\mX\in\sR^{m\times n}$, DE associated with the weighted adjacency matrix $\mA\in\sR^{m\times m}$, along rows of $\mX$, is a classical approach to encode the self-similarity prior of the matrix $\mX$. The adjacency matrix $\mA$ measures the similarity between rows of $\mX$, and $\mA_{ij}$ is bigger if rows $i$ and $j$ of $\mX$ are more similar. The corresponding Laplacian matrix of $\mA$ is $\mL:=\mD-\mA$, where $\mD$ is the degree matrix with $\mD_{ii}=\sum_{j=1}^n\mA_{ij}$ and $\mD_{ij}=0$ if $i\neq j$. Then we can mathematically formulate the DE as follows
$$
\text{tr}(\mX^\top \mL\mX)=\sum_{1\leq i,j\leq m}\mA_{ij}\left\|\mX_{i,:}-\mX_{j,:}\right\|^2.
$$
It is evident that when DE is minimized, rows $\mX_{i,:}$ and $\mX_{j,:}$ become closer with bigger $\mA_{ij}$.

There are two major obstacles to using DE in applications: (i) $\mL$ is unknown for an incomplete matrix, and (ii) DE only encodes the similarity between rows of $\mX$; other similarities such as block similarity cannot be captured. To resolve the above two issues, we parameterize $\mL$ as a function of $\mX$ and learn it during completing $\mX$.

\subsection{Learned Dirichlet energy as an adaptive regularizer}
\label{subsec..AdaReg}
Directly parameterizing $\mL$ without enforcing any particular structure and then optimizing it by minimizing $\text{tr}(\mX^\top \mL\mX)$ might result in all entries of $\mL$ being very negative. To remedy this problem, we first recall a few essential properties of $\mL$ as follows:

\begin{enumerate}
    \item The Laplacian matrix $\mL$ can be generated from the corresponding adjacency matrix $\mA$, that is $\mL=\mD-\mA$, where $\mD$ is the degree matrix. 
    \item $\mA$ is a symmetric matrix and $\mA_{ij}\geq 0$ for all $i,j$.
\end{enumerate}

One of the most straightforward approaches is to model $\mA$ as the covariance matrix, e.g., $\mA:=\mX^\top\mX$, of the rows of $\mX$. However, this model may not capture sufficient similarities among rows of 
$\mA$. To this end, the self-attention mechanism \cite{vaswani2017attention} has been proposed to model similarities among different rows. However, the learning self-attention mechanism usually requires a large amount of training data, which is not the case for the matrix completion problem. Therefore, we cannot leverage the off-the-shelf self-attention mechanism for matrix completion.

We propose a new parameterization of the adjacency matrix that is similar to but simpler than the self-attention mechanism. Our new parametrization can capture row similarities of a given matrix beyond the covariance. In particular, we parameterize $\mA$ as follows
\begin{equation}\label{eq:parameterization}
	\left\{
	\begin{array}{l}
	\mA=\frac{\exp(\mW+\mW^\top)}{\mathbf{1}_m^\top \exp(\mW)\mathbf{1}_m}\\
		\mL=\left(\mA\cdot \mathbf{1}_{m\times m}\right)\odot \mathbf{I}_m-\mA
	\end{array}
	\right. ,
\end{equation}
where $\mathbf{1}_m$ is an $m$-dimensional vector whose entries are all 1s, $\mathbf{I}_m$ is the $m\times m$ identity matrix, $\mathbf{1}_{m\times m}$ is the $m\times m$ matrix whose entries are all 1s and $\odot$ is Hadamard product. $\mW\in\sR^{m\times m}$ has the same size as $\mA$ and $\exp(\cdot)$ denotes the element-wise exponential. Note that the above parameterization of $\mA$ guarantees $\mA$ to be symmetric, and all its entries are positive. To search for the optimal $\mA$, we need to update $\mW$ start from a certain initialization. It is clear that the minimum value of DE is 0, i.e., $\text{tr}(\mX^\top \mL\mX)=0$, which is obtained when $\mA=\mL={\bf 0}$ and this is known as the trivial solution. Our parameterization \EQREF{eq:parameterization} directly avoids the above trivial solution since $\mA_{ij}>0$. In Theorem \ref{thm..ConvReg}, we will show that $\mL$ will converge to a non-trivial minimum under the gradient descent dynamics using the parameterization in \EQREF{eq:parameterization}.

To capture more types of self-similarity, we further replace $\mX$ with 
$\gT_i\left(\mX\right)$, where $\gT_i(\cdot)$ is a linear or nonlinear transformation that will be specified in particular applications. And we can generalize the above adaptive regularizer as follows

\begin{equation}\label{eq:adp-reg-transform}
\gR_{\mW_i} \left(\gT_i\left(\mX\right)\right)=\mathrm{tr}\left(\left[\gT_i\left(\mX\right)\right]^{\top} \mL_i\left(\mW_i\right)\gT_i\left(\mX\right) \right), i=1,2,\ldots,N,
\end{equation}
where $\mL_i\in\sR^{m_i\times m_i}$ is parameterized by $\mW_i\in\sR^{m_i\times m_i}$ following \EQREF{eq:parameterization}. 

$\gT_i:\sR^{m\times n}\mapsto \sR^{m_i\times n_i}$ transforms $\mX$ into another domain, making the adaptive regularization able to capture different similarities in data.
The common choice of $\gT_i(\cdot)$ can be $\gT_i\left(\mX\right)=\mX$ and $\gT_i\left(\mX\right)=\mX^\top$, which capture the row and column correlation, respectively.
Another particularly interesting case is 
$\gT_i(\mX)=\left[\textbf{vec}\left(\text{block}(\mX)\right)_1,\ldots,\textbf{vec}\left(\text{block}(\mX)\right)_{m_i}\right]^\top$, where $\textbf{vec}\left(\text{block}(\mX)\right)_j\in\sR^{n_i}$ is the vectorization of $j$-th block of $\mX$ (we divide $\mX$ into $m_i$ blocks), then the similarity among blocks can be modeled. 
Apart from these handcraft transformations to model specific structured correlations of the image, we can also parameterize $\gT_i$ with a neural network to learn more broad classes of correlations from data.  
However, we noticed that learning $\gT_i$ using a neural network is a very challenging task, which we leave as future work. This work focuses on handcrafted transformations $\gT_i$ that capture similarities among rows and columns of the matrix $\mX$.
Another interesting result is that $\gR_{\mW_i}$ vanishes at the end of the training, which avoids over-fitting without early stopping. We show that $\gR_{\mW_i}$ vanishes at the end of training both theoretically (see Theorem \ref{thm..ConvReg}) and numerically (see \Secref{sec..ExpAna}).


\subsection{Adaptive regularizer for matrix completion}
\label{subsec..AIRNet}
This subsection considers applying our proposed adaptive regularizer for matrix completion. In particular, we consider two transformations:
$\gT_1\left(\mX\right)=\mX$ and $\gT_2\left(\mX\right)=\mX^{\top}$ 
to capture 
row and column similarities of $\mX$, respectively.
We denote the corresponding subscript of row and column as $r$ and $c$, respectively.
And the DMF model with the regularization in \EQREF{eq..AirnetFramework} becomes
\begin{equation}\small
\label{eq..DMF_AIR_Completion}
\begin{aligned}
\Min{\boldsymbol{\mW^{[l]},\mW_r,\mW_c}}&\left\{\gL:=\Bigg(
    \frac{1}{2}\left\|\mY-\gA\left(\prod_{l=0}^{L-1}\mW^{[l]}\right) \right\|_F^2\right.\\
    &\left.+\lambda_r\cdot \gR_{\mW_r}\left(\prod_{l=0}^{L-1}\mW^{[l]}\right)+\lambda_c\cdot \gR_{\mW_c}\left(\left(\prod_{l=0}^{L-1}\mW^{[l]}\right)^{\top}\right)\Bigg)\right\},
\end{aligned}
\end{equation}
for $l=0,1,\cdots, L-1.$ We name (\ref{eq..DMF_AIR_Completion}) as \textbf{Adaptive and Implicit Regularization (AIR)}.
The parameters in AIR are updated by using gradient descent or its variants. 
We stop the iteration when $\left|\gR_{\mW_r (T+1)}-\gR_{\mW_r (T)}\right|<\delta$ and $\left|\gR_{\mW_c (T+1)}-\gR_{\mW_c (T)}\right|<\delta$. The recovered matrix is $\hat{\mX}(T)=\mW^{[L-1]}(T)\cdots \mW^{[0]}(T)$, which tends to be low-rank implicitly \cite{Arora2019ImplicitRI}. With the implicit low-rank property, we do not need to estimate the shared dimension of $\mW^{[l]}$ in advance (more details can be found in Appendix \ref{sec..deepwidth}). The framework of algorithm is shown in \Figref{fig:flowchart}.

Some works that combine implicit and explicit regularization can be regarded as special cases of \EQREF{eq..AirnetFramework}. For instance, both 
TV \cite{Rudin1992NonlinearTV} and DE \cite{Mullen2008SpectralCP} can be 
considered as fixed $\mL$ in \EQREF{eq..AirnetFramework}. Therefore, our proposed framework in \EQREF{eq..AirnetFramework} contains DMF+TV \cite{Li2020ARD} and DMF+DE \cite{Boyarski2019SpectralGM}.

\begin{figure}
    \includegraphics[width=\linewidth]{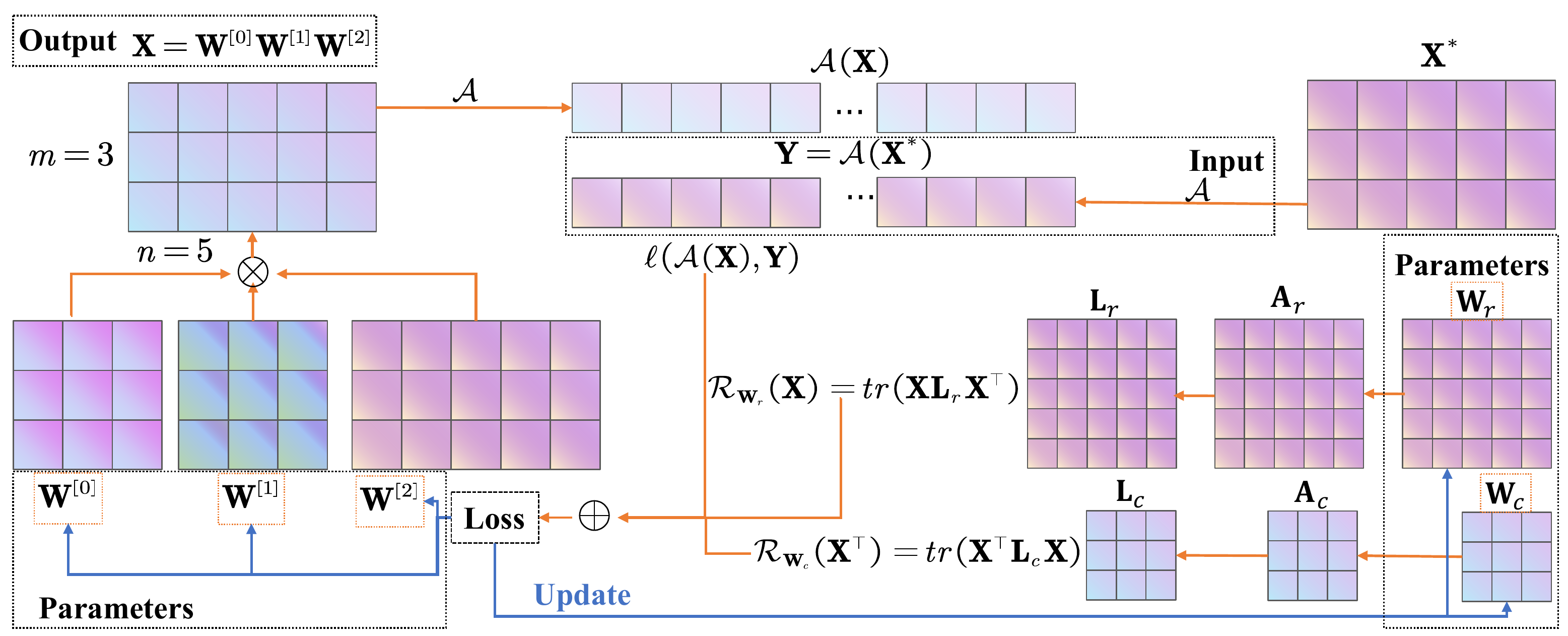}
    \caption{We give an example of AIR when completing a $3\times 5$ matrix. The $\gA$ and $\mY$ are inputs of our algorithm. In this case, we factorized $\mX$ into three matrices. The $\mW_r$ and $\mW_c$ are parameters of AIR. In each iteration, we calculate the loss function based on these parameters and update them with an optimization algorithm such as gradient descent.}
    \label{fig:flowchart}
\end{figure}

\section{Theoretical Analysis of AIR}
\label{sec..TheoreticalAna}
In this section, we will analyze the theoretical properties of
solving problem \EQREF{eq..DMF_AIR_Completion} using gradient descent.
In particular, we will show that (a) AIR enhances the implicit low-rank of DMF (see Theorem \ref{thm..ImplicitReg} below); and (b) the adaptive regularization will converge to a minimum under gradient descent, and the resulting model captures the intrinsic structure of data flexibly. Although we focus on matrix completion problems, the following theoretical analyses apply to general inverse problems.
For the ease of notation, as $\gA$ and $\mY$ are fixed during optimization, we denote $\gL_{\sY}\left(\mY,\gA(\mX)\right)$ as $\gL_{\sY}\left(\mX\right)$ below.

\subsection{Adaptive regularizer has implicit low-rank regularization}
We show that the proposed adaptive regularizer introduces implicit low-rank regularization. Below we denote the Laplacian matrices parameterized by $\mW_r$ and $\mW_c$, following \EQREF{eq:parameterization}, as $\mL_r$ and $\mL_c$, respectively. 

\begin{thm}\label{thm..ImplicitReg}
Consider the following dynamics with initial data satisfying the balanced initialization in Assumption \ref{assump..balance} (see Appendix~\ref{app..IntroDMF} for details)
    \begin{align*}
        \dot{\mW}^{[l]}(t)
        = -\frac{\partial}{\partial \mW^{[l]}} \gL(\mX(t)), \quad t \geq 0, \quad l=0, \ldots, L-1,\label{eq..DynamicsReg}
    \end{align*}
    where $ \gL\left(\mX\right) := \gL_{\sY}(\mX)+\lambda_r\cdot \gR_{\mW_r}\left(\mX\right)+\lambda_c\cdot \gR_{\mW_c}\left(\mX\right)$. Then for $k=1,2,\ldots$, we have
    \begin{equation}
    \label{eq..DynamicArora}
    \begin{aligned}
        \dot{\sigma}_k(t)=&-L \left(
        \sigma_k^{2}(t)\right)^{1-\frac{1}{L}} 
         \left\langle\nabla_{\mW} \gL_{\sY}(\mX(t)), \mU_{k,:}(t)\left[\mV_{k,:}(t)\right]^\top\right\rangle
        -2L \left(\sigma_k^2(t)\right)^{\frac{3}{2}-\frac{1}{L}} \gamma_k(t),
    \end{aligned}
    \end{equation}
    where 
    $\mU(t) \mS(t) \mV(t)^\top$ is the SVD of the matrix $\mX\left(t\right)$, $\sigma_k(t)$ is the $k$-th singular value in descending order of $\mX(t)$, $\mX=\prod_{l=0}^{L-1}\mW^{[l]}=\sum\limits_s\sigma_s \mU_{s,:} \mV_{s,:}^{\top},\gamma_k(t)=\mU_{k,:}^{\top}\mL_r\mU_{k,:}+\mV_{k,:}^{\top}\mL_c\mV_{k,:}\geq 0$.
\end{thm}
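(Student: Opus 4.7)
The plan is to reduce the theorem to the singular-value dynamics formula of Arora et al.\ (extended to our composite loss by linearity of the gradient) and then close the computation with elementary SVD algebra. Specifically, the Arora-style lemma established under Assumption~\ref{assump..balance} (see Appendix~\ref{app..IntroDMF}) states that for \emph{any} smooth $f(\mX)$, gradient flow on $\prod_l \mW^{[l]}$ from balanced factors produces
\[
\dot{\sigma}_k(t) \;=\; -L\bigl(\sigma_k^2(t)\bigr)^{1-1/L}\,\bigl\langle \nabla_\mX f(\mX(t)),\,\mU_{k,:}(t)\mV_{k,:}(t)^\top\bigr\rangle.
\]
Since only the factors $\mW^{[l]}$ are updated in the dynamics of the theorem, the Laplacians $\mL_r,\mL_c$ are frozen when computing $\partial/\partial\mW^{[l]}$, so the above lemma applies directly with $f = \gL_\sY + \lambda_r \gR_{\mW_r} + \lambda_c \gR_{\mW_c}$. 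By linearity of the gradient I can split the inner product into its three components and analyze the regularizer pieces separately from the data-fit piece, which simply stays as $\langle \nabla_\mX \gL_\sY(\mX), \mU_{k,:}\mV_{k,:}^\top\rangle$ and already matches the first term of \EQREF{eq..DynamicArora}.

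Next, I would compute the two regularizer gradients in closed form. Because $\mL_r$ is symmetric, differentiating $\gR_{\mW_r}(\mX)=\mathrm{tr}(\mX^\top\mL_r\mX)$ yields $\nabla_\mX\gR_{\mW_r}(\mX) = 2\mL_r\mX$, and analogously $\nabla_\mX\gR_{\mW_c}(\mX^\top) = 2\mX\mL_c$. The crucial step is substituting the SVD $\mX=\sum_s\sigma_s\mU_{s,:}\mV_{s,:}^\top$ into the inner products $\bigl\langle 2\mL_r\mX,\mU_{k,:}\mV_{k,:}^\top\bigr\rangle$ and $\bigl\langle 2\mX\mL_c,\mU_{k,:}\mV_{k,:}^\top\bigr\rangle$. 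Orthonormality of the singular vectors collapses every cross term: in the row contribution the $\mV$-orthogonality $\mV_{s,:}^\top\mV_{k,:}=\delta_{sk}$ leaves $2\sigma_k\,\mU_{k,:}^\top\mL_r\mU_{k,:}$, while in the column contribution $\mU$-orthogonality leaves $2\sigma_k\,\mV_{k,:}^\top\mL_c\mV_{k,:}$.

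Assembling the three contributions inside the Arora formula and simplifying with the elementary identity $(\sigma_k^2)^{1-1/L}\sigma_k = (\sigma_k^2)^{3/2-1/L}$ produces exactly the claimed equation \EQREF{eq..DynamicArora}, with the advertised $\gamma_k(t)$. The non-negativity $\gamma_k(t)\geq 0$ is then immediate from the parameterization \EQREF{eq:parameterization}: the matrix $\mA$ is symmetric and entrywise nonnegative by construction, so $\mL = \mathrm{diag}(\mA\mathbf{1})-\mA$ is the standard graph Laplacian of a nonnegatively weighted graph and hence positive semi-definite, making both quadratic forms comprising $\gamma_k$ individually nonnegative.

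The main obstacle I expect is actually the first step—invoking the Arora-style dynamics formula rigorously in our setting. Two things must be verified (they are the content of the appendix the paper cites): (i) the balancedness condition $\bigl(\mW^{[j+1]}\bigr)^\top\mW^{[j+1]} = \mW^{[j]}\bigl(\mW^{[j]}\bigr)^\top$ is preserved along the flow whenever $f$ depends on the factors only through their product $\mX$---this remains true in our extended setting because both regularizer terms are functions of $\mX=\prod_l\mW^{[l]}$ alone, even though their coefficients $\mL_r,\mL_c$ evolve in their own decoupled dynamics; and (ii) from balancedness one derives a closed-form evolution of $\mX$ itself, which projected onto each singular direction yields the displayed formula. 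Once these prerequisites are accepted, everything else reduces to the routine trace calculations sketched above.
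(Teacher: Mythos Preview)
Your proposal is correct and follows essentially the same approach as the paper's own proof: invoke the Arora singular-value dynamics formula (Proposition~\ref{prop..SinuglarValue}) with the full loss $\gL$, compute $\nabla_{\mX}(\lambda_r\gR_{\mW_r}+\lambda_c\gR_{\mW_c})=2\lambda_r\mL_r\mX+2\lambda_c\mX\mL_c$, substitute the SVD, and use orthonormality of the singular vectors to collapse the inner product to $2\sigma_k\gamma_k(t)$. Your justification of $\gamma_k\geq 0$ via positive semidefiniteness of the graph Laplacian is in fact more explicit than the paper's, which merely asserts the inequality.
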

\begin{proof}
	By direct calculation, we have
	$$
	\begin{aligned}
	\nabla_{\mX}\left(\lambda_r\gR_{\mW_r}+\lambda_c\gR_{\mW_c}\right)
	&=\frac{\partial \mathrm{tr}\left(\lambda_r\mX^{\top}\mL_r\mX+\lambda_c\mX\mL_c\mX^{\top}\right)}{\partial \mX}
	=2\lambda_r\mL_r\mX+2\lambda_c\mX\mL_c\\
	&=2\lambda_r\mL_r\sum\limits_s\sigma_s \mU_{s,:} \mV_{s,:}^{\top}+2\lambda_c\sum\limits_s\sigma_s \mU_{s,:} \mV_{s,:}^{\top} \mL_c.
	\end{aligned}
	$$
	Note that
	$$
	\langle\mV_{s,:},\mV_{s',:}\rangle=\langle\mU_{s,:},\mU_{s',:}\rangle=\delta_{ss'}=
	\left\{
	\begin{array}{cc}
	1, & s=s'\\
	0, & s\neq s'\\
	\end{array}\right. ,$$
	Therefore
	$$
	\begin{aligned}
	\mU_{k,:}^{\top}\left(\nabla_{\mX}\left(\lambda_r \gR_{\mW_r}+\lambda_c \gR_{\mW_c}\right)    \right)\mV_{k,:} 
	&= 2\sigma_k(\lambda_r\mU_{k,:}^{\top}\mL_r \mU_{k,:}+\lambda_c\mV_{k,:}^{\top}\mL_c\mV_{k,:})
	=2\sigma_k\gamma_k(t),
	\end{aligned}
	$$
	 where the term $\gamma_k(t)=2\sigma_k(\lambda_r\mU_{k,:}^{\top}\mL_r \mU_{k,:}+\lambda_c\mV_{k,:}^{\top}\mL_c\mV_{k,:})\geq 0$.
	 Furthermore, according to \EQREF{eq..arora} in the appendix, we have 
	 $$
	 \mU_{k,:}^{\top}\nabla_{\mX}\gL\mV_{k,:}=-L \left(
        \sigma_k^{2}(t)\right)^{1-\frac{1}{L}} 
         \left\langle\nabla_{\mX}\gL(\mX(t)), \mU_{k,:}(t)\left[\mV_{k,:}(t)\right]^\top\right\rangle.
         $$
    Combined the above results with $\gL(\mX)=\gL_{\sY}(\mX)+\lambda_r\gR_{\mW_r}+\lambda_c\gR_{\mW_c}$, we have 
        $$\dot{\sigma}_k(t)=-L \left(
        \sigma_k^{2}(t)\right)^{1-\frac{1}{L}} 
         \left\langle\nabla_{\mX} \gL_{\sY}(\mX(t)), \mU_{k,:}(t)\left[\mV_{k,:}(t)\right]^\top\right\rangle
        -2L \left(\sigma_k^2(t)\right)^{\frac{3}{2}-\frac{1}{L}} \gamma_k(t),$$
which completes the proof.
\end{proof}

Compared with the result of the vanilla DMF whose order of $\sigma_k(t)$ is $2-\frac{2}{L}$, Theorem \ref{thm..ImplicitReg} demonstrates that AIR's $\sigma_k(t)$ has a higher dynamics order 
$3-\frac{2}{L}$. Note that the adaptive regularizer keeps $\gamma_k(t)\geq 0$. This way, a faster convergence rate gap appears between different singular values $\sigma_r$ than the vanilla DMF. Therefore, AIR enhances the implicit low-rank regularization over DMF.

\subsection{The auto vanishing property of the adaptive regularizer}
We suppose the matrix $\mX$ is fixed and then study the convergence of $\gR_{\gT_i(\mW_i)}$ based on the dynamics of $\mL_i$. Theorem \ref{thm..ConvReg} shows that $\gR_{\gT_i(\mW_i)}$ vanishes at the end and thus prevents AIR from suffering over-fitting. In the process of proof of the following, we can replace the $\gT_i,\mW_i$ with $\gT,\mW$ in the following proof.

\begin{thm}\label{thm..ConvReg}
    Consider the following gradient flow model,
    $$
    \dot{\mW}(t)=-\nabla_{\mW(t)}\gR_{\mW(t)} \left(\gT\left(\mX(t)\right)\right)=-\nabla_{\mW(t)}\mathrm{tr}\left(\gT\left(\mX(t)\right)^{\top} \mL\left(\mW(t)\right) \gT\left(\mX(t)\right)\right),
    $$
    where $\|\gT\left(\mX\right)_{k,:}\|_F^2=1$ and $\gT\left(\mX\right)_{kl}>0$. If we initialize $\mW\left(0\right)=\varepsilon \mathbf{1}_{m_i\times m_i},\forall \varepsilon\in\mathbb{R}$, then $\mW\left(t\right)$ will keep symmetric during the optimization procedure. Furthermore, we 
    have the following element-wise convergence 
    $$
    \left|\mL_{kl}(t)-\mL_{kl}^*\right|\leq 
    \left\{\begin{array}{cc}
        2 \exp(-D t)/\gamma, & (k,l)\in\sS_1\\
        \exp\left(-D t\right), & (k,l)\in\sS_2 \\
        2\left(m-1\right) \exp(-D t)/\gamma,  & k=l
    \end{array}\right. ,
    $$ 
    where 
    $$\sS_1 = \left\{(k,l)\mid k\neq l,\gT\left(\mX\right)_{k,:}\neq \gT\left(\mX\right)_{l,:}\right\},\quad 
    \sS_2=\left\{(k,l)\mid k\neq l,\gT\left(\mX\right)_{k,:}= \gT\left(\mX\right)_{l,:}\right\},$$
    and
    $$
    \mL_{kl}^*=\left\{\begin{array}{cc}
        0, &  (k,l)\in\sS_1\\
        \gamma, &  (k,l)\in\sS_2\\
        -\sum_{l'=1,l'\neq l}^{m} \mL_{kl'}^*,& k=l 
    \end{array}\right. .
    $$
    $\mL_{kl}(t)$ is the $(k,l)$-th 
    element of $\mL(t)$, $\mathbf{1}_{m\times m}$ is a matrix whose entries are all 1s. 
    $\gamma<1,D\geq 0$ are constant defined in Appendix \ref{app..Theorem2} and $D$ equals zero if and only if $\gT\left(\mX\right)=\textbf{1}_{m_i\times n_i}$. 
\end{thm}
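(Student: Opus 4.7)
The plan is to (i) show the symmetric initial condition $\mW(0)=\varepsilon\mathbf{1}$ is preserved under the flow, (ii) reduce the dynamics to an explicit coupled ODE, and (iii) analyze the three families of entries ($\sS_1$, $\sS_2$, and diagonal) separately. For step (i), starting from the parameterization
$$\mA_{ij}=\frac{e^{W_{ij}+W_{ji}}}{Z}, \qquad Z=\sum_{k,\ell}e^{W_{k\ell}},$$
a direct computation of $\partial\mA_{ij}/\partial W_{pq}$ shows that $\partial\gR/\partial W_{pq}=\partial\gR/\partial W_{qp}$ whenever $\mW=\mW^{\top}$. Hence the symmetric subspace is invariant under the flow, so $\mW(t)=\mW(t)^{\top}$ for all $t\geq 0$.

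In the symmetric regime, using $\|\gT(\mX)_{k,:}\|^2=1$ the energy collapses to
$$\gR \;=\; \sum_{i\neq j}\mA_{ij}\,d_{ij} \;=\; \sum_{(i,j)\in\sS_1}\mA_{ij}\,d_{ij}, \qquad d_{ij}:=\|\gT(\mX)_{i,:}-\gT(\mX)_{j,:}\|^2,$$
since the $\sS_2$ and diagonal terms vanish. Differentiating and writing $\pi_{pq}:=e^{W_{pq}}/Z$ yields the clean ODE
$$\dot W_{pq} \;=\; -2\,d_{pq}\,\mA_{pq}\,\mathbf{1}_{(p,q)\in\sS_1} \;+\; \gR\cdot\pi_{pq}.$$
For $(p,q)\in\sS_1$ the first (strictly negative) term dominates. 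I would convert this into a differential inequality $\dot\mA_{pq}\leq -D\,\mA_{pq}$, with $D$ determined by $\min_{\sS_1}d_{ij}$ together with a uniform lower bound on the $\pi_{pq}$'s; Gronwall then yields $\mA_{pq}(t)\leq e^{-Dt}$. Since $\mL_{pq}=-\mA_{pq}$ for $p\neq q$, this delivers the claimed decay on $\sS_1$ (up to the factor $2/\gamma$ coming from normalization against the asymptotic scale on $\sS_2$).

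On $\sS_2$ the first term vanishes and $\dot W_{pq}=\gR\cdot\pi_{pq}$; because $\gR\lesssim e^{-Dt}$ from the $\sS_1$ analysis, integration shows $W_{pq}(t)$ converges, and hence $\mA_{pq}$ converges to a common value $\gamma$, with a perturbative estimate $|\dot\mA_{pq}|\lesssim e^{-Dt}$ producing the stated exponential bound. The diagonal estimate then follows from $\mL_{kk}=-\sum_{l\neq k}\mL_{kl}$ and the triangle inequality across the $m-1$ off-diagonal terms, accounting for the factor $(m-1)$. The main obstacle is the analysis on $\sS_2$: because $\gR$ does not directly constrain those entries, their convergence is driven only through the partition-function coupling with the decaying energy. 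Identifying the common limit $\gamma$, verifying $\gamma<1$, and obtaining a clean exponential rate all require careful use of the symmetry of the flow together with the uniform initialization $\mW(0)=\varepsilon\mathbf{1}$, which makes every $\pi_{pq}$ equal at $t=0$ and keeps their ratios controlled throughout the evolution.
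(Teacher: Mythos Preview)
Your setup---symmetry preservation and the elementwise ODE $\dot W_{pq}=(-4d_{pq}+2Ca)\pi_{pq}$ with $Ca=\mathrm{tr}(\mC\mA')$---matches the paper. The gap is in your $\sS_1$ analysis. The assertion that ``the first (strictly negative) term dominates'' is false in general: at $t=0$ all $\pi_{pq}=1/m^2$, so $\dot W_{pq}(0)$ has the sign of $-4d_{pq}+2Ca(0)$, and $2Ca(0)=\tfrac{2}{m^2}\sum_{k,l}C_{kl}$ can exceed $4d_{pq}$ when $d_{pq}$ is small relative to the other $\sS_1$ distances. Concretely, with four rows where rows~1,~2 are very close (say $d_{12}=\epsilon$) and row pairs involving~3 or~4 have $d\approx 2$, one gets $2Ca(0)\approx\tfrac{5}{4}$ while $4d_{12}=4\epsilon$, so $\dot W_{12}(0)>0$. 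Hence you cannot obtain a clean differential inequality $\dot\mA_{pq}\le -D\,\mA_{pq}$ by a direct dominance argument, and your Gronwall step does not go through. The phrase ``uniform lower bound on the $\pi_{pq}$'s'' is also in tension with the conclusion: on $\sS_1$ you need $\pi_{pq}\to 0$, not a lower bound.

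The paper avoids this obstruction by never bounding an individual $\dot W_{pq}$. Instead it picks any index $(\hat k,\hat l)$ with $C_{\hat k\hat l}=0$ (diagonal or $\sS_2$) and studies the \emph{difference} $W_{\hat k\hat l}-W_{kl}$. In that difference the positive $Ca$-contribution appears as $2Ca(\mA'_{\hat k\hat l}-\mA'_{kl})$, which is nonnegative once $W_{\hat k\hat l}\ge W_{kl}$, so one gets $\tfrac{d}{dt}(W_{\hat k\hat l}-W_{kl})\ge 4C_{kl}\mA'_{kl}\ge 0$; a bootstrap from the uniform initialization then yields monotone (indeed linear) growth of the gap and hence exponential decay of $e^{W_{kl}-W_{\hat k\hat l}}$. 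Writing $\mA'_{\hat l\hat k}=\big(\sum_{k,l}e^{W_{kl}-W_{\hat k\hat l}}\big)^{-1}$ immediately gives the squeeze $\tfrac{1}{m+2s+E(t)}\le \mA'_{\hat l\hat k}\le\tfrac{1}{m+2s}$ with $E(t)\le e^{-Dt}$, from which the $\sS_2$ limit $\gamma=\tfrac{2}{m+2s}$ and all three rate estimates follow. Your $\sS_2$ and diagonal steps are fine once this comparison argument is in place, but as written your $\sS_1$ step needs to be replaced by (or rerouted through) this difference trick.
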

\begin{proof}
See Appendix \ref{app..Theorem2}.
\end{proof}

The assumptions in Theorem \ref{thm..ConvReg} can be satisfied by applying appropriate normalization and linear transformation on $\mX$. Theorem \ref{thm..ConvReg}
gives the limit
and convergence rate of $\mL(t)$. 
In particular, the limit satisfies 
$\mL^*_{kl}=0$ unless $\gT\left(\mX\right)_{k,:}= \gT\left(\mX\right)_{l,:}$ or $k=l$.
That is, 
$\mathbf{L}^*_{i,j}$ reflects if the $i$-th and $j$-th rows of the matrix $\mathcal{T}\left(\mathbf{X}\right)$ are the same or not. $\mathbf{L}^*_{i,j}\neq 0$ if and only if the $i$-th and $j$-th rows of $\mathcal{T}\left(\mathbf{X}\right)$ are identical.
$\mL_{kl}(t)$ converges faster for those $(k,l)\in\sS_2$ than that in $\sS_1$. 
In other words, the adaptive regularizer first captures structural similarities in the matrix $\mX$. 
The different convergence rates for different $(k,l)$ indicate AIR generates a multi-scale similarity, which will be discussed in \Secref{subsec..spatial}. 
Moreover, 
$\gR_{\mW}(t)$ converges to 0 as shown in the following Corollary. The adaptive regularization will vanish at the end of training and not cause over-fitting.

\begin{cor}\label{cor..ConvReg}
    In the setting of Theorem \ref{thm..ConvReg}, we further have 
    $$0\leq \gR_{\mW}(t)\leq 2\left(m-1\right)m \exp(-Dt)/\gamma,i=1,2,\ldots,N.$$
\end{cor}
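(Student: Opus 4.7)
The plan is to rewrite the regularizer in its Dirichlet (edge-weighted) form so that both non-negativity and the exponential decay become transparent, and then read off the bound directly from Theorem \ref{thm..ConvReg}. Fix any $i\in\{1,\dots,N\}$ and drop the subscript; the argument is identical for each.

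First, I would use the parameterization $\mL=(\mA\cdot\mathbf{1}_{m\times m})\odot\mI_{m}-\mA$ from \EQREF{eq:parameterization}, together with the symmetry of $\mA$, to establish the standard Dirichlet identity
\begin{equation*}
\gR_{\mW}(t)=\mathrm{tr}\bigl(\gT(\mX)^{\top}\mL(t)\gT(\mX)\bigr)=\frac{1}{2}\sum_{k\ne l}\mA_{kl}(t)\,\bigl\|\gT(\mX)_{k,:}-\gT(\mX)_{l,:}\bigr\|^{2}.
\end{equation*}
This is a routine expansion using $\mL_{kk}=\sum_{j\ne k}\mA_{kj}$ and $\mL_{kl}=-\mA_{kl}$ for $k\ne l$. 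Because the parameterization $\mA=\exp(\mW+\mW^{\top})/(\mathbf{1}_{m}^{\top}\exp(\mW)\mathbf{1}_{m})$ keeps every entry of $\mA$ strictly positive throughout training, this identity gives the lower bound $\gR_{\mW}(t)\geq 0$ immediately.

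For the upper bound I would partition the off-diagonal index set into $\sS_{1}$ and $\sS_{2}$ as in Theorem \ref{thm..ConvReg}. The key observation is that on $\sS_{2}$ one has $\gT(\mX)_{k,:}=\gT(\mX)_{l,:}$, so those terms vanish identically from the sum, meaning the slower $\sS_{2}$ rate (and the even slower diagonal rate) never enter. Only pairs in $\sS_{1}$ survive, and on $\sS_{1}$ the limit satisfies $\mL^{*}_{kl}=0$, so Theorem \ref{thm..ConvReg} yields
\begin{equation*}
|\mA_{kl}(t)|=|\mL_{kl}(t)|=|\mL_{kl}(t)-\mL^{*}_{kl}|\leq \frac{2\exp(-Dt)}{\gamma},\qquad (k,l)\in\sS_{1}.
\end{equation*}

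Next, I would bound the row-difference factor using the two normalizations assumed in Theorem \ref{thm..ConvReg}: since $\|\gT(\mX)_{k,:}\|^{2}=1$ and every entry of $\gT(\mX)$ is positive, the inner product $\langle\gT(\mX)_{k,:},\gT(\mX)_{l,:}\rangle$ is non-negative, hence
\begin{equation*}
\bigl\|\gT(\mX)_{k,:}-\gT(\mX)_{l,:}\bigr\|^{2}=2-2\langle\gT(\mX)_{k,:},\gT(\mX)_{l,:}\rangle\leq 2.
\end{equation*}
Combining these pieces with $|\sS_{1}|\leq m(m-1)$ gives
\begin{equation*}
\gR_{\mW}(t)\leq \frac{1}{2}\cdot m(m-1)\cdot\frac{2\exp(-Dt)}{\gamma}\cdot 2=\frac{2(m-1)m\exp(-Dt)}{\gamma},
\end{equation*}
which is the claimed bound. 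The proof is essentially bookkeeping once the Dirichlet identity is in hand; the only step that requires any care is noticing that the $\sS_{2}$ contribution is exactly zero, since this is what allows the prefactor to depend only on the sharper $\sS_{1}$ rate. I do not anticipate any further obstacle.
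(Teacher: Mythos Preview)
Your proposal is correct and follows essentially the same route as the paper's proof: both use the Dirichlet identity to get non-negativity, bound $\|\gT(\mX)_{k,:}-\gT(\mX)_{l,:}\|^{2}\leq 2$ via the normalization and positivity assumptions, and then invoke the $\sS_{1}$-rate from Theorem~\ref{thm..ConvReg}. Your version is in fact more explicit than the paper's two-line computation, in particular in isolating the key observation that the $\sS_{2}$ contribution vanishes identically---the paper's proof leaves this implicit.
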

\begin{proof}
Direct computation gives
$$
\begin{aligned}
\gR_{\mW}(t)& =\sum_{k,l}\mL_{kl}(t)\left\|\gT(\mX)_{k,:}-\gT(\mX)_{l,:}\right\|_2^2\\
& \leq 2 \sum_{k\neq l} \mL_{kl}(t)\leq 2m\left(m-1\right)\exp(-Dt)/\gamma,
\end{aligned}
$$
which concludes the proof.
\end{proof}

We have shown that AIR can enhance implicit low-rank regularization and avoid over-fitting. In the next section, we will validate these theoretical results and the effectiveness of AIR numerically.

\section{Experimental Results}
\label{sec..ExpAna}
In this section, we numerically validate the following adaptive properties of AIR: (a) Laplacian matrices $\mL_r$ and $\mL_c$ capture the structural similarity in data from large scale to small scale; 
(b) the capability of capturing structural similarities at all scales is crucial to the success of matrix completion, which confirms the importance of our proposed adaptive regularizers;
(c) AIR is adaptive to different data, avoids overfitting, and achieves remarkable performance; (d) AIR behaves like the momentum that has been widely used in optimization \cite{polyak1964some,nesterov1983method,wang2020scheduled,wang2020stochastic,sun2021training} and neural network architecture design \cite{MomentumRNN,HBNODE2021}.

\textbf{Data types and sampling patterns.}
We consider completion of three types of matrices: 
gray-scale image, user-movie rating matrix \cite{Boyarski2019SpectralGM}, and drug-target interaction (DTI) data \cite{Boyarski2019SpectralGM}. In particular, we consider three benchmark images of size  $240\times 240$ \cite{Monti2017GeometricMC}, including Baboon, Barbara, and Cameraman.
We consider the Syn-Netflix dataset for the user-movie rating matrix, which has a size of $150\times 200$. The DTI data describes the interaction of Ion channels (IC) and G protein-coupled receptor (GPCR), which have the shape of $210\times 204$ and $223\times 95$, respectively \cite{Boyarski2019SpectralGM,Mongia2020DrugtargetIP}. 
Moreover, we study matrix completion with three different missing patterns: random missing, patch missing, and textural missing, see \Figref{fig:rec_img} for an illustration.
The random missing rate varies in different experiments, and the default missing rate is $30\%$.

\textbf{Parameter settings.}
We set $\lambda_r=\lambda_c={(\mY_{\max}-\mY_{\min})}/{(m n)}$ to ensure 
both fidelity and 
regularization have similar order of magnitude, 
where $\mY_{\max}$ and $\mY_{\min}$ are the maximum and the minimum entry of $\mY$, respectively. 
$\delta$ is a threshold, which has the default value of ${mn}/{1000}$.
All the parameters in AIR are initialized with Gaussian initialization of zero mean and variance $10^{-5}$. We use Adam \cite{Kingma2015AdamAM} to train the AIR.

\subsection{Multi-scale similarity captured by adaptive regularizer}\label{subsec..spatial}
In this subsection, we will validate Theorems~\ref{thm..ImplicitReg} and \ref{thm..ConvReg}. 
We focus on using AIR for inpainting the corrupted Baboon image and completing the Syn-Netflix matrix. \Figref{fig:baboon} and \Figref{fig:rela_recom} plot the heatmaps of Laplacian matrices $\mL_r(t)$ and $\mL_c(t)$ for Baboon and Syn-Netflix experiments, respectively. These heatmaps show what AIR learns during training.

\begin{figure}[htp]
    \hspace{1cm}\includegraphics[width=0.9\linewidth]{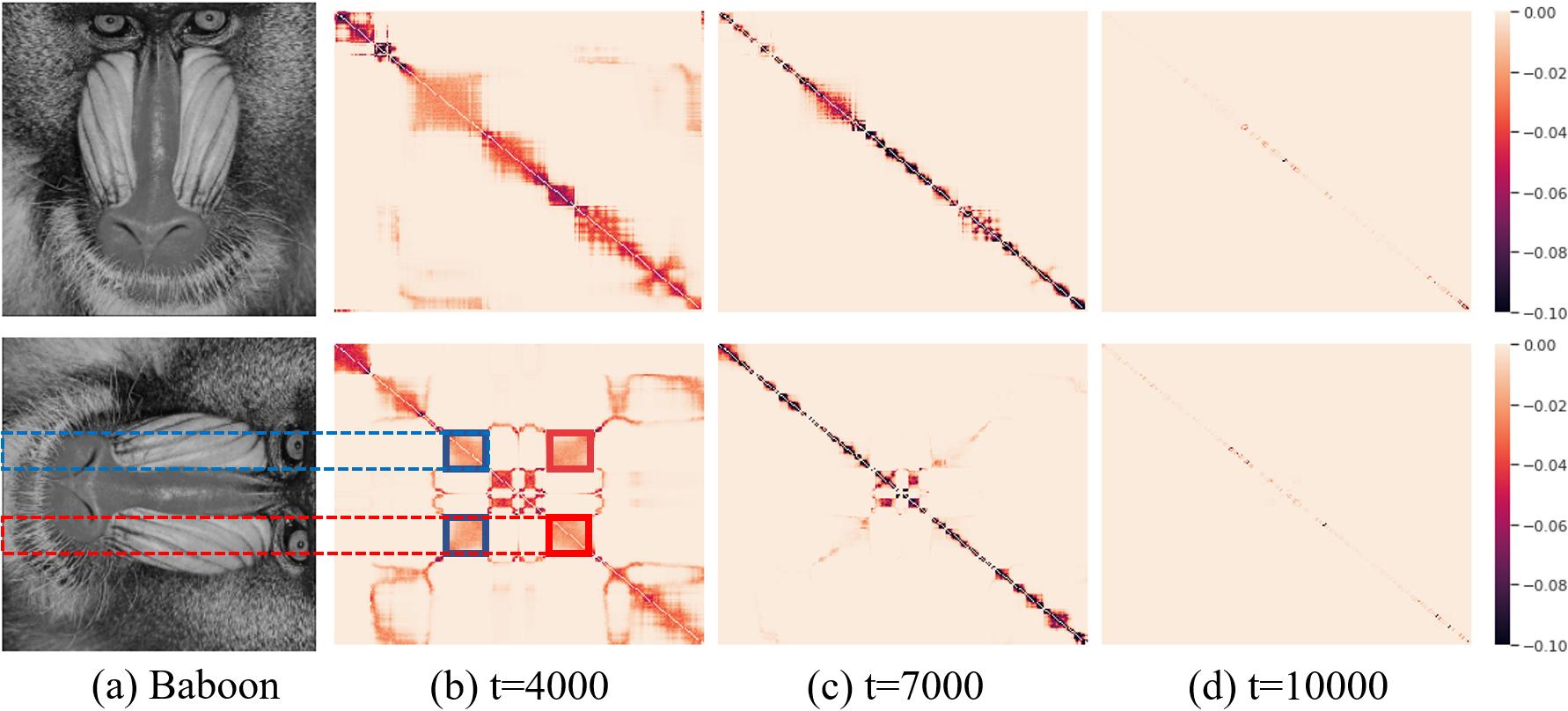}\vspace{-0.3cm}
    \caption{\footnotesize (a): first and second rows depict the Baboon image and its rotation. (b)-(d): first/second row shows the heatmap of $\mL_r$/$\mL_c$ at different $t$s. A darker color indicates a stronger similarity captured by the adaptive regularizer. The $(i,j)$-th element in the heatmap of $\mL_r(t)$ has a darker color than the $(i,j')$-th element indicates that the $i$-th row is more related to $j$-th row compared with $j'$-th row. 
    }
    \label{fig:baboon}
\end{figure}

\begin{figure}
\begin{center}
\begin{tabular}{cccc}
\hspace{-0.2cm}\includegraphics[width=0.25\columnwidth]{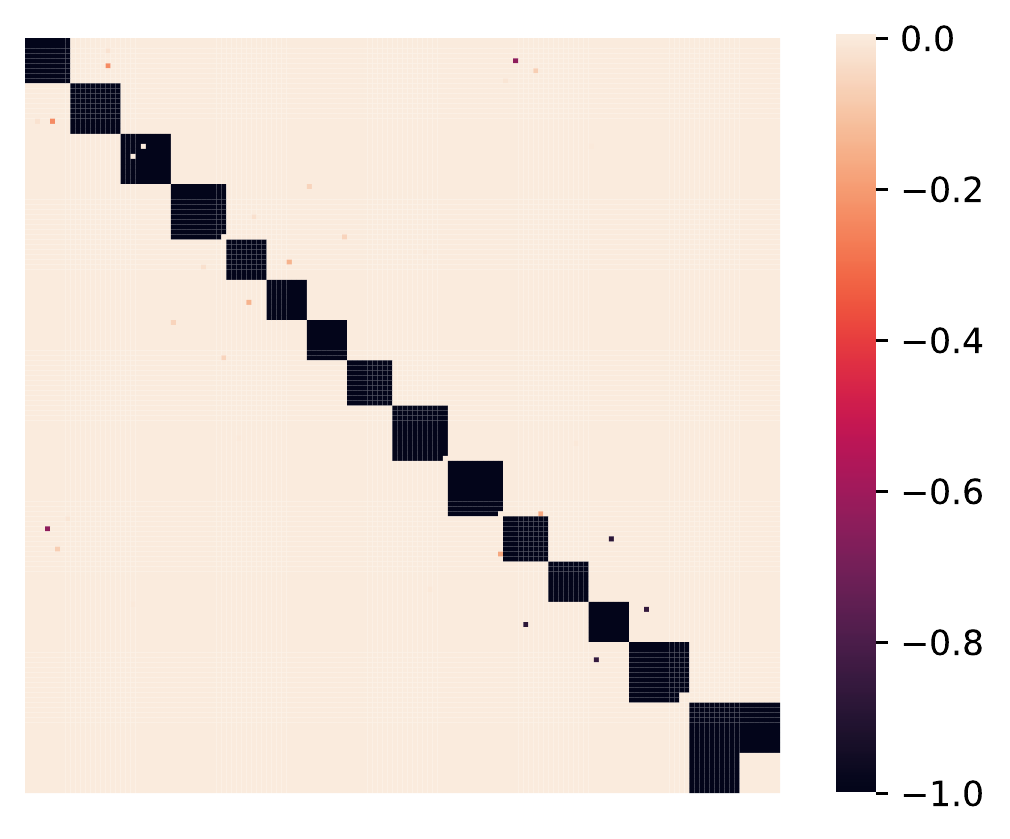}&
\hspace{-0.2cm}\includegraphics[width=0.2\columnwidth]{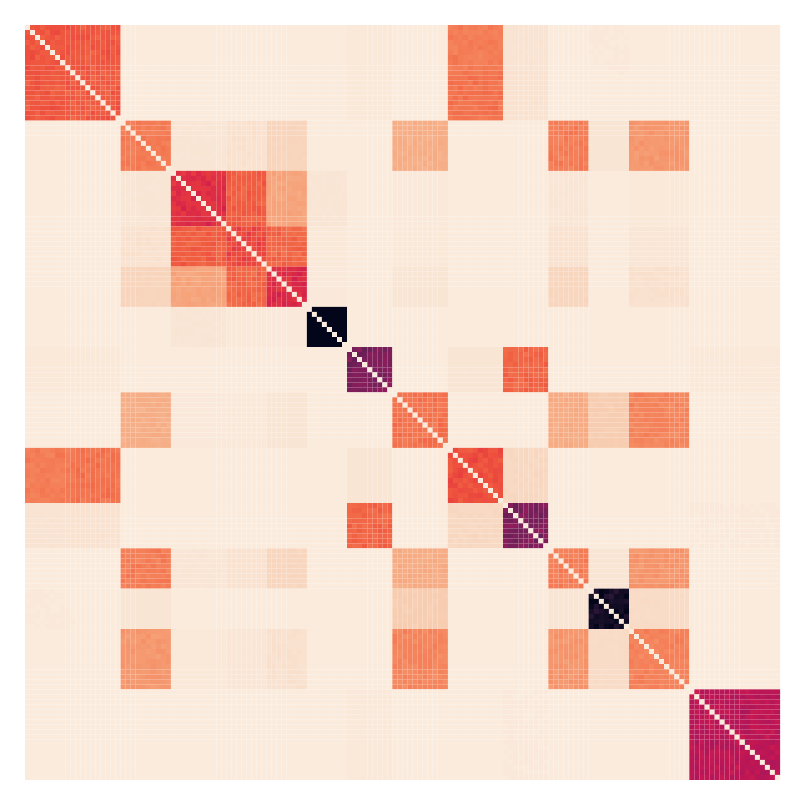}&
\hspace{-0.3cm}\includegraphics[width=0.2\columnwidth]{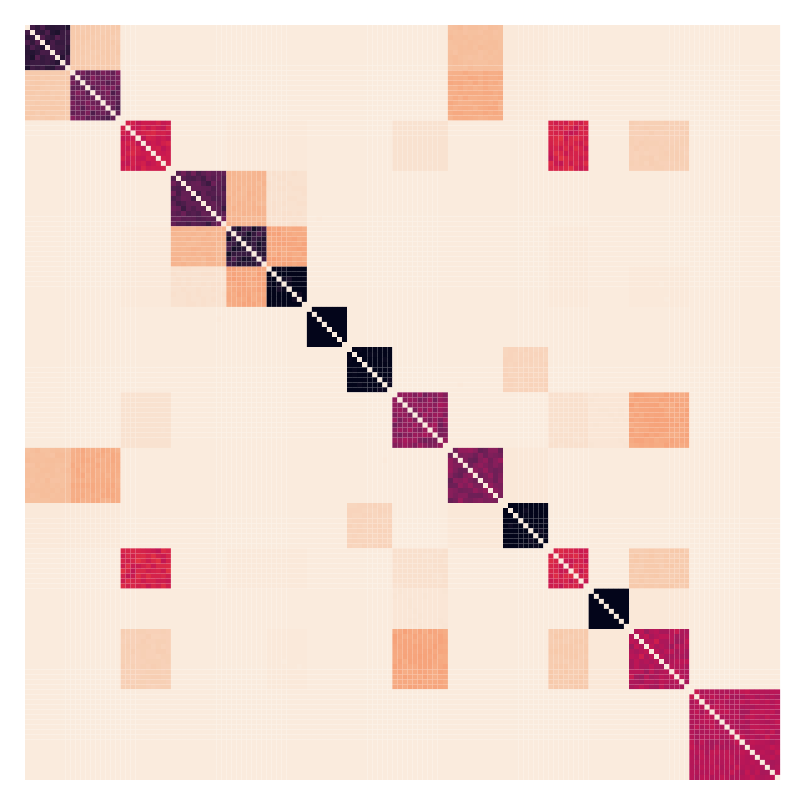}&
\hspace{-0.3cm}\includegraphics[width=0.2\columnwidth]{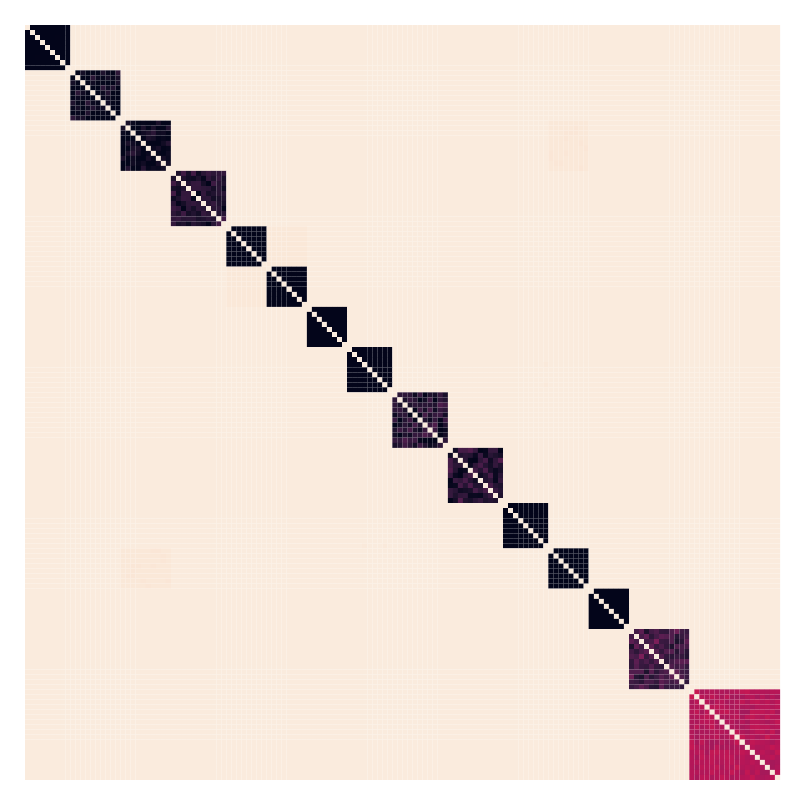}\\[-2pt]
\hspace{-0.2cm} {\footnotesize (a) Users relationship} &\hspace{-0.2cm} {\footnotesize (b) $\mL_r(t=5000)$} &\hspace{-0.2cm} {\footnotesize (c) $\mL_r(t=7000)$} &\hspace{-0.2cm} {\footnotesize (d) $\mL_r(t=10000)$} \\
\hspace{-0.2cm}\includegraphics[width=0.25\columnwidth]{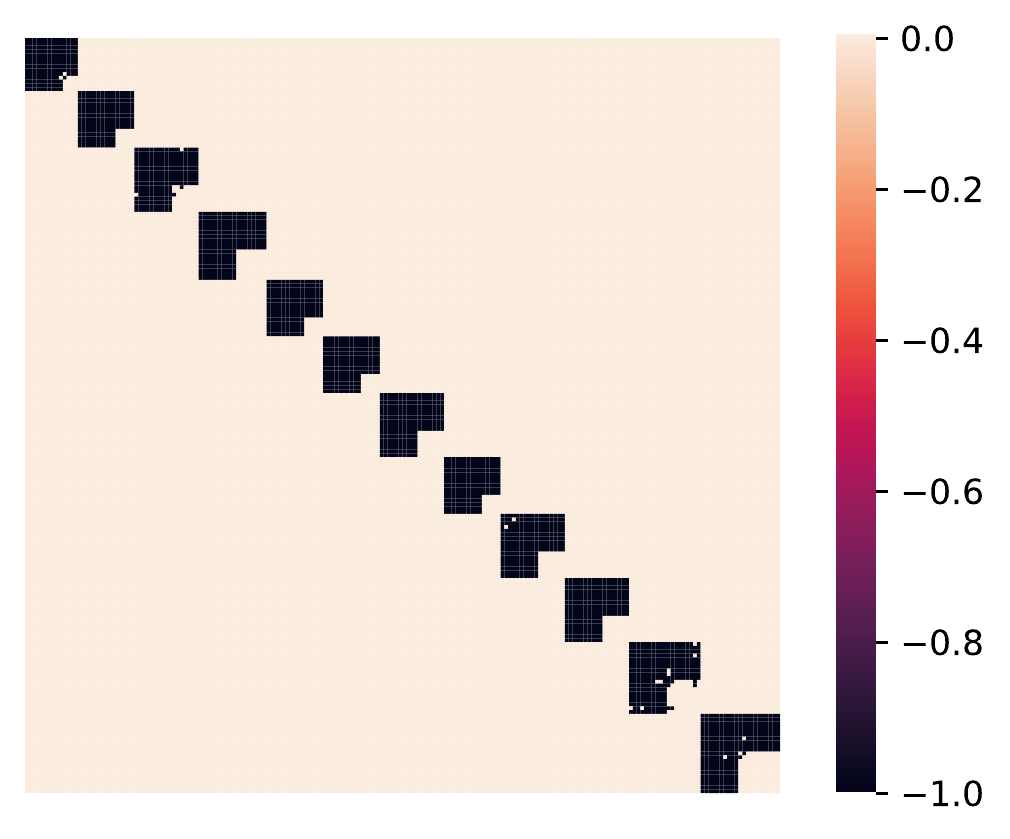}&
\hspace{-0.2cm}\includegraphics[width=0.2\columnwidth]{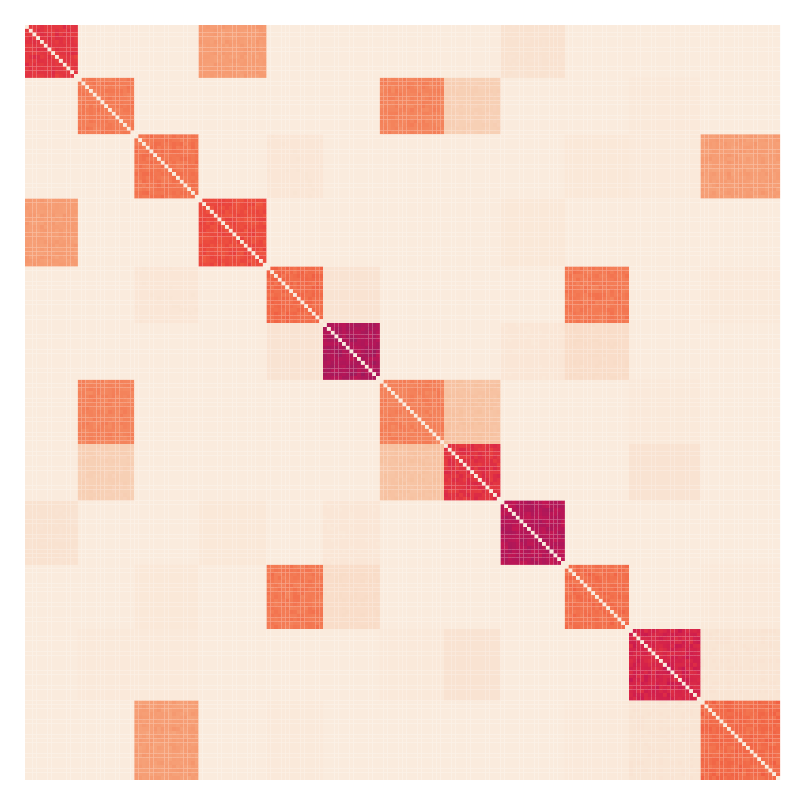}&
\hspace{-0.3cm}\includegraphics[width=0.2\columnwidth]{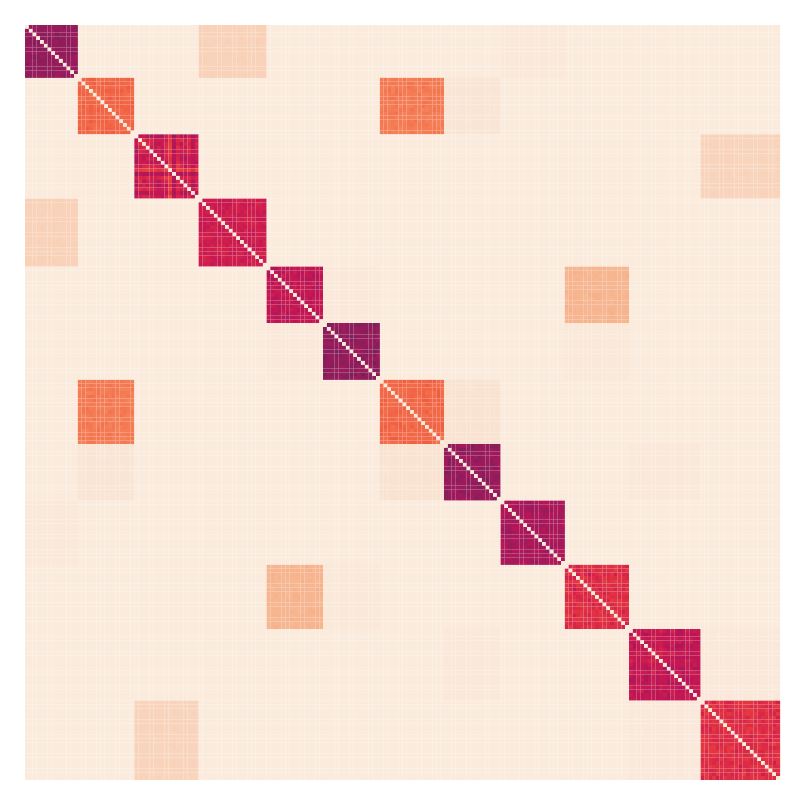}&
\hspace{-0.3cm}\includegraphics[width=0.2\columnwidth]{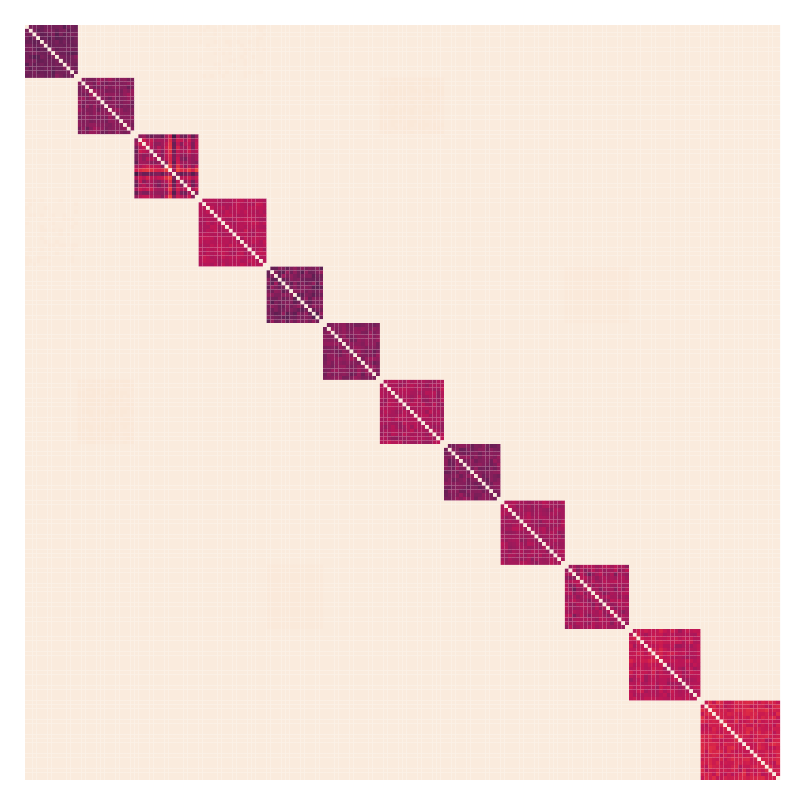}\\[-2pt]
\hspace{-0.2cm}{\footnotesize (e) Items relationship} & \hspace{-0.2cm} {\footnotesize (f) $\mL_c(t=5000)$} & \hspace{-0.2cm} {\footnotesize (g) $\mL_c(t=7000)$} & \hspace{-0.2cm} {\footnotesize (h) $\mL_c(t=10000)$}\\
\end{tabular}
\end{center}\vspace{-0.4cm}
\caption{\footnotesize The two panels in the first column shows the 
exact correlation among rows and columns, respectively. The other three columns are the Laplacian matrix learned by AIR at different iterations. 
}\label{fig:rela_recom}
\end{figure}

As shown in \Figref{fig:baboon}, 
both Laplacian matrices $\mL_r(t)$ and $\mL_c(t)$ 
appear many blocks during the initial training stage, e.g., $t=4000$. We consider a few boxed blocks of the matrix $\mL_c$ in \Figref{fig:baboon} (b, bottom); the values in these blocks reflect the strong similarity of the corresponding highlighted patches of the original Baboon image, which echos our intuition. Moreover, the slight difference between those selected columns will be future captured by the adaptive regularizer as the training goes, e.g., at $t=7000$. As the training goes further, the regularization gradually vanishes. We see that the regularization becomes rather weak at $t=10000$. Syn-Netflix experiments also support the above conclusions, $\mL_r$ and $\mL_c$ tend to the Users/Items relationship (in \Figref{fig:rela_recom} (a, e)). Different from the case in \Figref{fig:baboon}, the $\mL_r(t)$ and $\mL_c(t)$ in \Figref{fig:rela_recom} retain the value when the rating matrix in \Figref{fig:NMAEDuringTaining} (a) corresponding rows or columns are identical.

These results confirm that AIR captures the similarity from large to small. This raises an interesting question: \emph{does there exist an optimal $t^*$ such that $\mL_r(t^*)$ and $\mL_c(t^*)$ are precisely captured by AIR?
If yes, can we use these fixed optimal $\mL_r(t^*)$ and $\mL_c(t^*)$ for AIR and further train $\mX$ to obtain even better matrix completion?}

Next, we experimentally show that Laplacian matrices $\mL_r$ and $\mL_c$ learned by AIR are crucial for matrix completion. We fix $\mL_r$ and $\mL_c$ that were learned at a specific training step for AIR. In particular, we compare AIR with adaptive $\mL_r$ and $\mL_c$, and the variants of AIR with both Laplacian matrices fixed as that learned at $t=4000$, $7000$, and $9000$, respectively. We use the same hyperparameters for training AIR with fixed Laplacian matrices as that used for training AIR. To quantitatively compare the performance of AIR with different settings, we consider the following Normalized Mean Absolute Error (NMAE) to measure the gap between the recovered and exact elements at unobserved locations,
$$
\mathrm{NMAE}=\frac{\left\|\bar{\gA}(\mX)-\bar{\mY}\right\|_F^2}{(mn-o)\left(\mY^*_{\max}-\mY^*_{\min}\right)},
$$
where $\bar{\mY}$ is the unobserved elements, $\bar{\gA}$ maps $\mX$ to the corresponding unobserved location, and $\mY^*=\left[\mY,\bar{\mY}\right]^\top$ is the ground truth.

We contrast the vanilla AIR and AIR with fixed Laplacian matrices for Baboon image inpainting. \Figref{fig:DiffEpoch} shows how the NMAE changes during training.
AIR, which updates the regularization during training, achieves the best performance for all missing patterns. Fixing Laplacian matrices can accelerate the convergence of training AIR at the beginning. 
While the optimal time $t^*$ is unknown for AIR with fixed Laplacian matrices, the learned Laplacian matrices work without estimating $t^*$.

\begin{figure}
\begin{center}
\begin{tabular}{ccc}
\hspace{-0.2cm}\includegraphics[width=0.29\columnwidth]{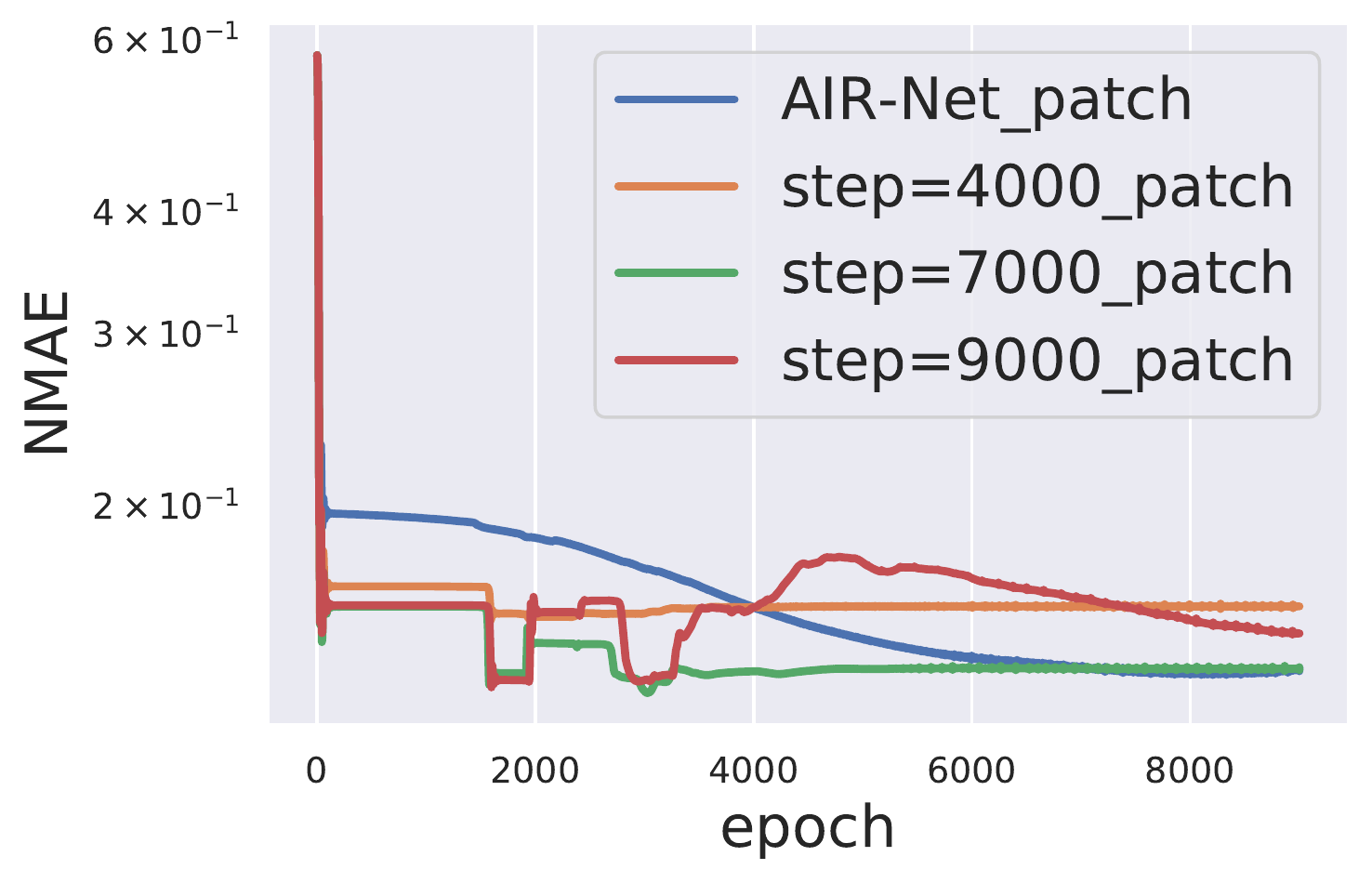}&
\hspace{-0.3cm}\includegraphics[width=0.29\columnwidth]{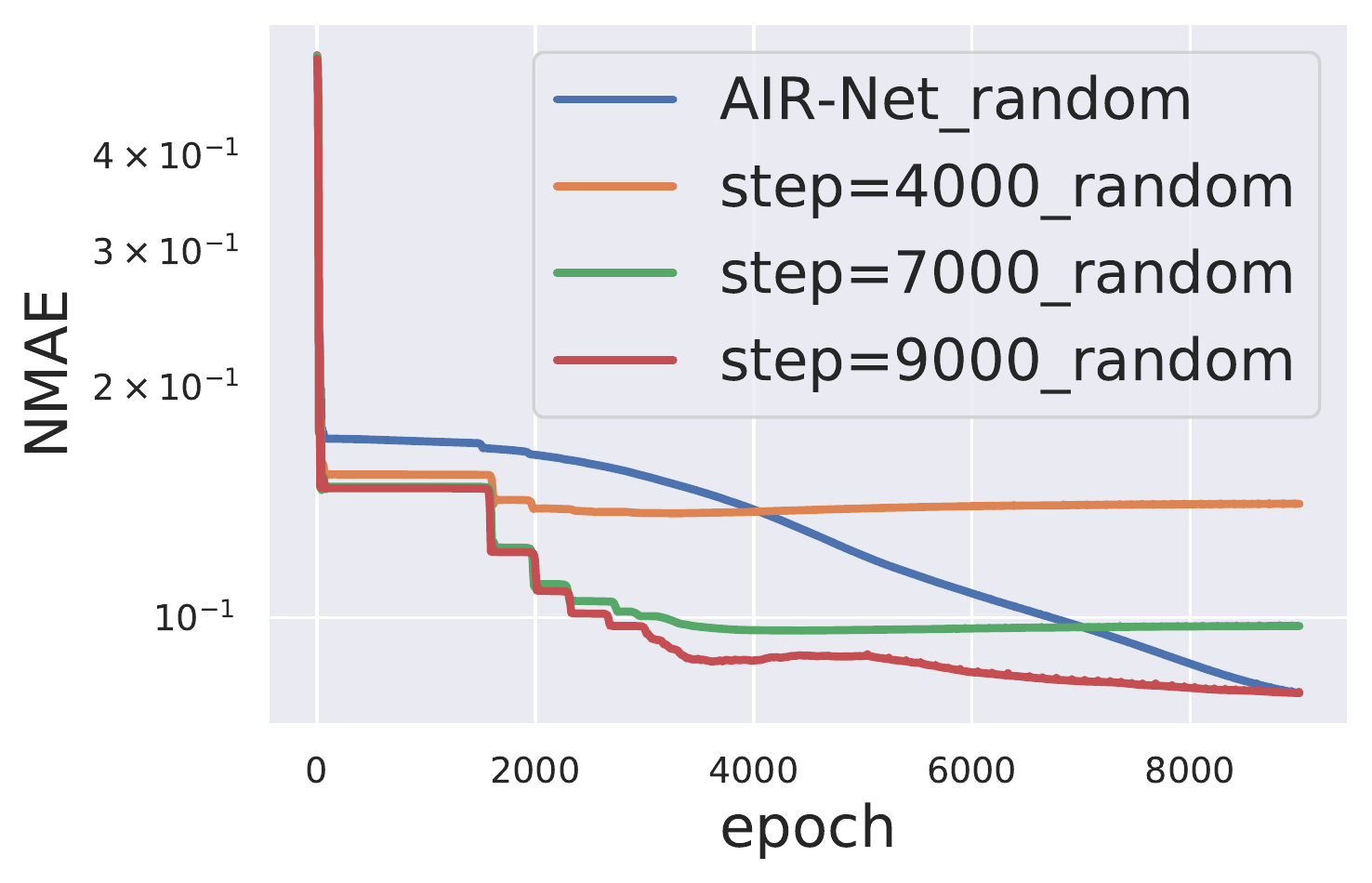}&
\hspace{-0.3cm}\includegraphics[width=0.29\columnwidth]{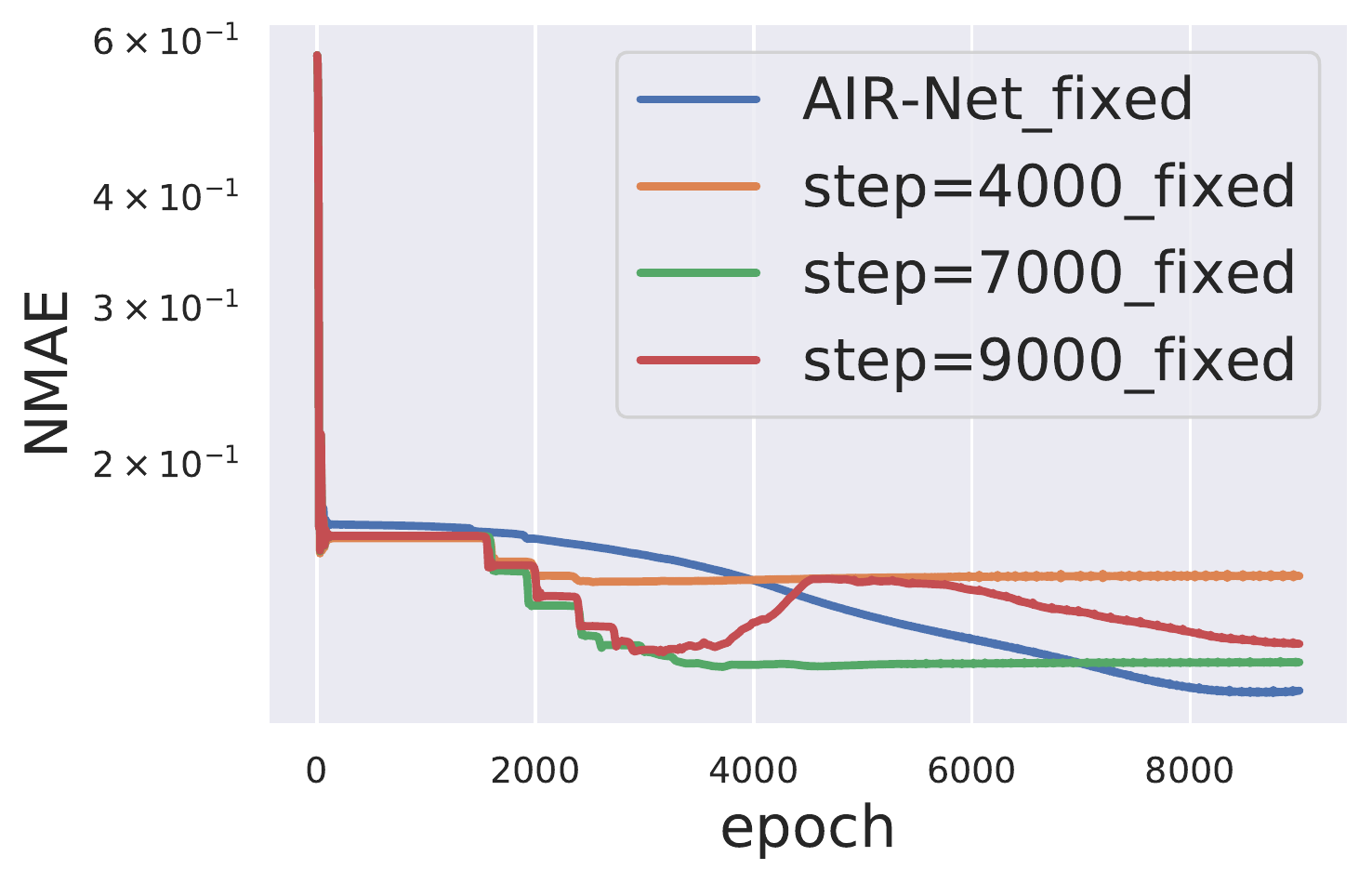}\\[-0.2cm]
{\footnotesize (a) Patch} & {\footnotesize (b) Random} & {\footnotesize (c) Fixed} \\
\end{tabular}
\end{center}\vspace{-0.4cm}
\caption{\footnotesize Contrasting the adaptive regularizer with the fixed regularizer for Baboon image recovery. We consider inpainting the Baboon image with three types of missing pixels: (a) patch missing, (b) randomly missing 30\% pixels, and (c) texture missing.
The blue lines plot the NMAE during training the vanilla AIR. The remaining three lines in each figure indicate replacing $\mL_r$ and $\mL_c$ with $\mL_r(t)$ and $\mL_c(t)$ at $t=4000$, $7000$, and $9000$, respectively.
}\label{fig:DiffEpoch}
\end{figure}

\subsection{AIR behaves like a momentum}
This subsection makes a heuristic and direct connection between AIR and the momentum method. 
AIR converges to the vanilla DMF model when the adaptive regularization vanishes, and Corollary \ref{cor..ConvReg} shows that the adaptive regularization vanishes exponentially fast. We need to distinguish AIR and DMF according to the optimization trajectory. 
We compare three models (DMF, DMF+TV, and DMF+AIR, i.e., AIR) in \Figref{fig:TraDMFVar}, and we see that the three models perform dramatically different near the convergence. At the beginning of training, the observed and unobserved MSEs of the three models drop similarly. 
When the observed MSE becomes smaller, the model learns 
details in observed elements. The unobserved MSE increased during the observed MSE decrease in the vanilla DMF and DMF+TV cases. Our proposed AIR keeps the decaying trend for both observed and unobserved MSEs.

Looking back into the training of AIR, we see that the update of $\mX(t+1)$ involves both $\mX(t)$ and $\mL(t)$, and the update of $\mL(t)$ depends on $\mX(t-1)$.
To understand the training dynamics, we consider the following simplified model 
$$
\Min_{\mX,\mL}\left\{\gL=\gL_{\mathbb{Y}}+\text{tr}\left(\mX^\top\mL\mX\right)\right\},
$$
and we have 
$$
\nabla_{\mX(t)}\gL=\nabla_{\mX(t)}\gL_{\mathbb{Y}}+2\mL(t)\mX(t),
$$
where $\mL(t)$ is the function of $\left\{\mX(t_0)\mid t_0<t\right\}$. Therefore, every iteration step of AIR 
leverages all the learned previous information $\left\{\mX(t_0)\mid t_0<t\right\}$. While the update of both vanilla DMF and DMF+TV only depends on $\mX(t)$.
From this viewpoint, AIR shares a similar spirit as the momentum method, which leverages history to improve performance.

\begin{figure}
\begin{center}
\begin{tabular}{ccc}
\hspace{-0.2cm}\includegraphics[width=0.29\columnwidth]{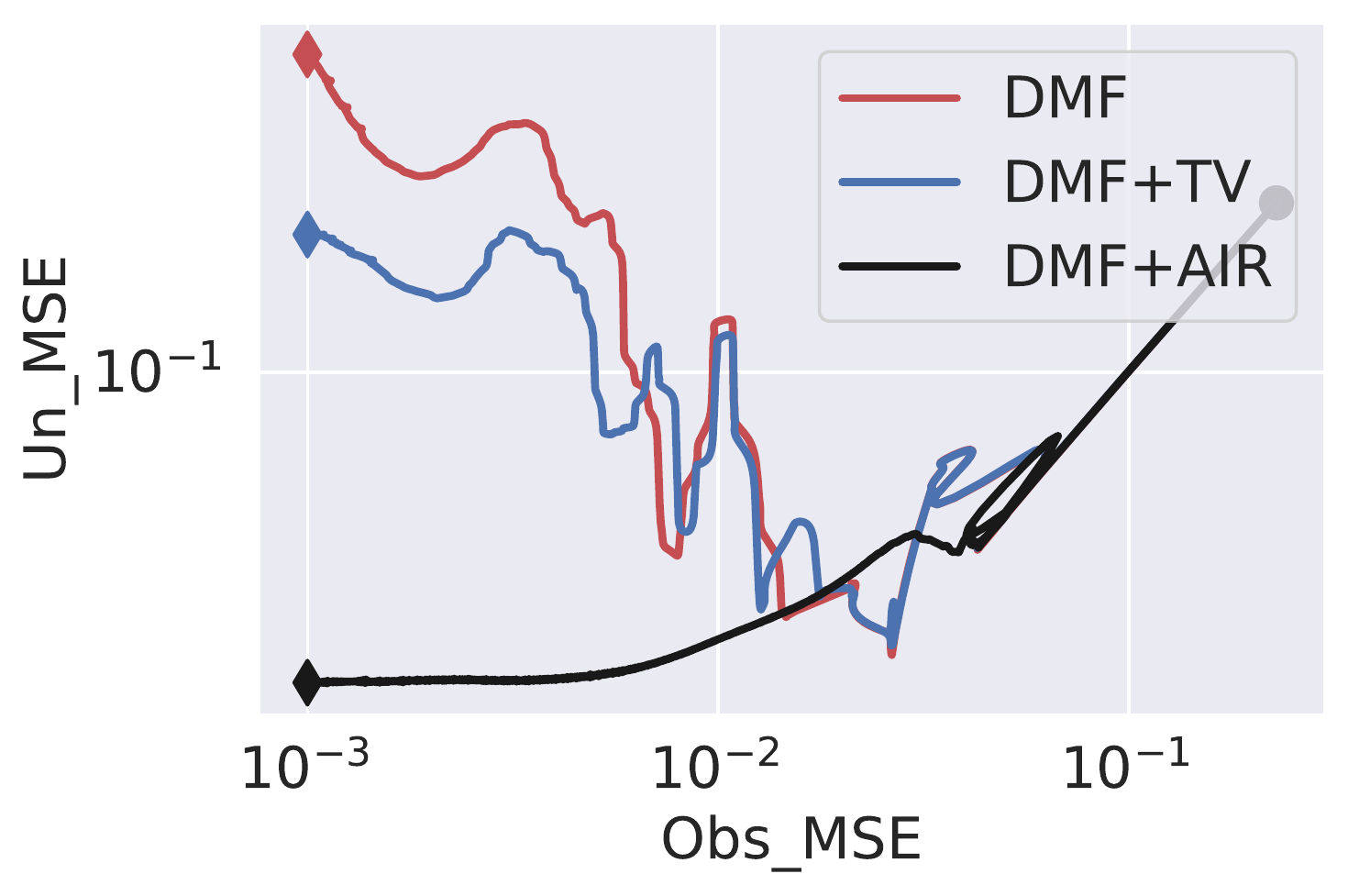}&
\hspace{-0.3cm}\includegraphics[width=0.29\columnwidth]{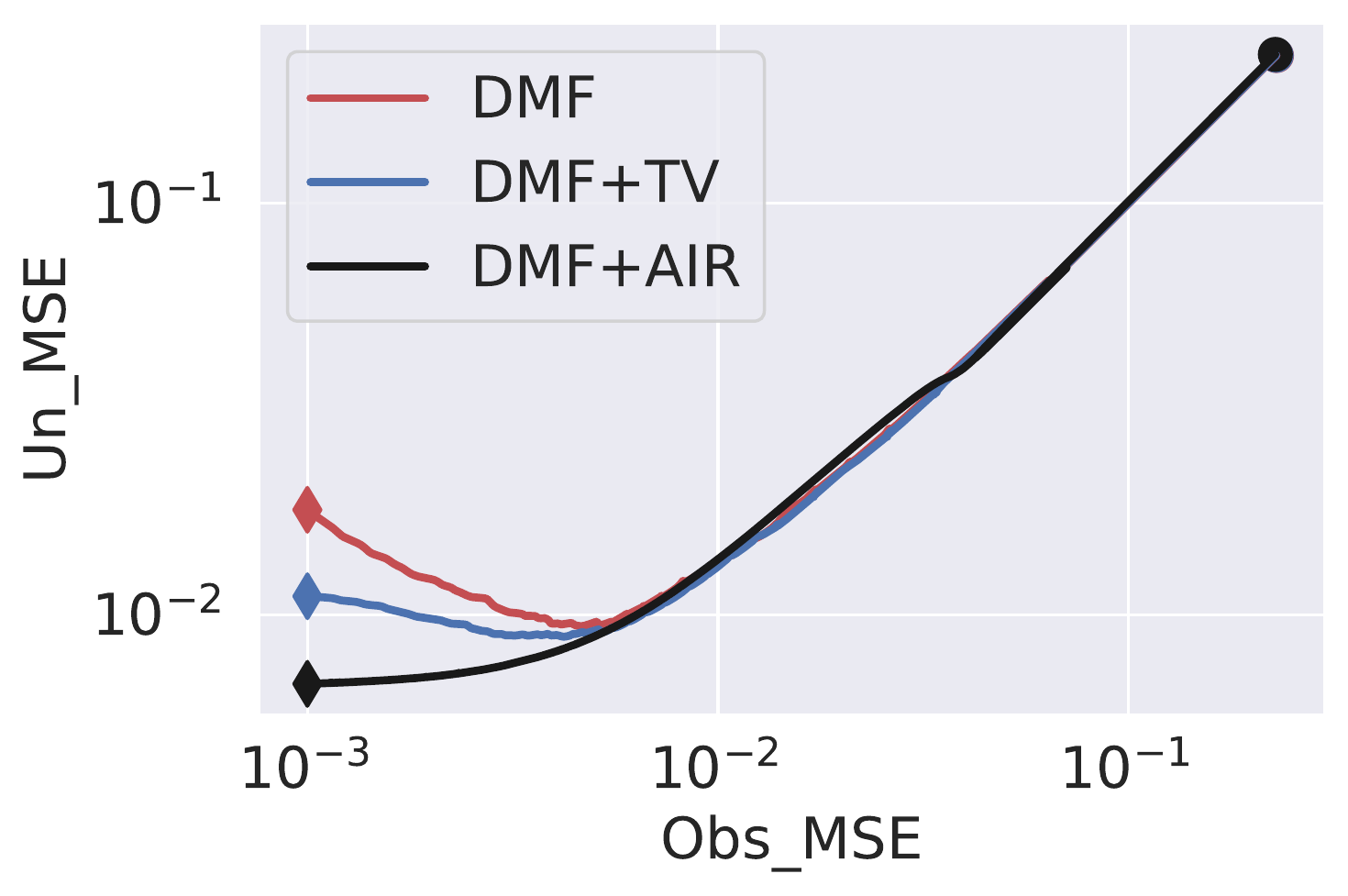}&
\hspace{-0.3cm}\includegraphics[width=0.29\columnwidth]{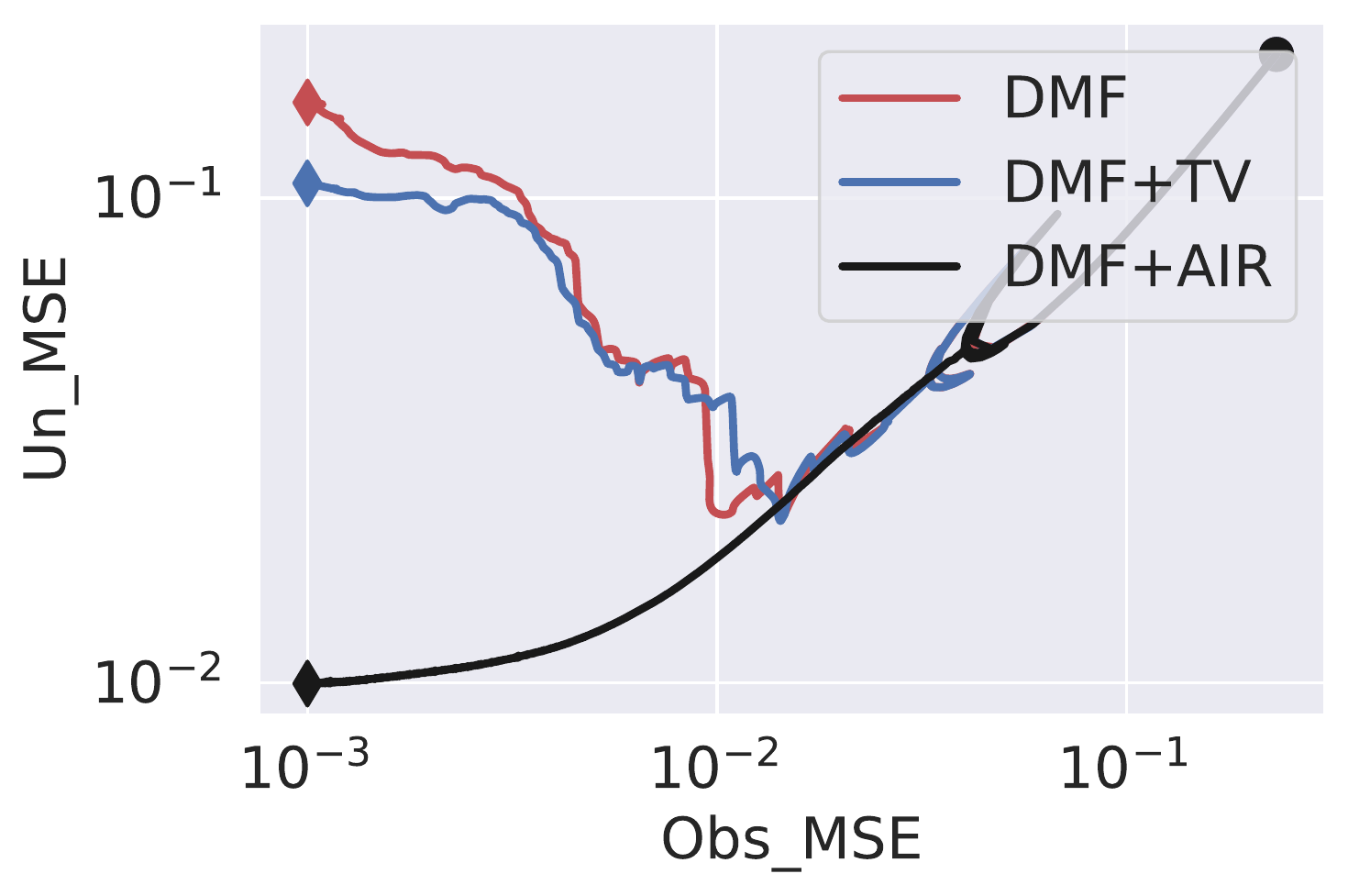}\\[-4pt]
{\footnotesize (a) Patch} & {\footnotesize (b) Random} & {\footnotesize (c) Fixed} \\
\end{tabular}
\end{center}\vspace{-0.4cm}
\caption{\footnotesize Trajectories of training DMF model without regularization, with TV, and with AIR regularization for inpainting the Baboon image with three types of missing pixels: (a) patch missing, (b) randomly missing 30\% pixels, and (c) texture missing. The dot point indicates $X(0)$ and the diamond shape of different color indicate $X(10000)$ for different models.
}\label{fig:TraDMFVar}
\end{figure}

\subsection{Adaptive regularizer performance on varied data and missing pattern}\label{sec..performance}
We apply AIR for matrix completion on three data types with different missing patterns. We will also compare AIR with a few baseline algorithms.  \Figref{fig:rec_drug} plots the raw matrices 
of Syn-Netflix and DTI datasets.

\textbf{Baseline algorithms.}
In the following experiments, we compare AIR with several popular matrix completion algorithms, including 
KNN \cite{Goldberger2004NeighbourhoodCA}, SVD \cite{Troyanskaya2001MissingVE}, PNMC \cite{Yang2020ANP}, DMF \cite{Arora2019ImplicitRI}, and RDMF \cite{Li2020ARD}.
For Syn-Netflix-related experiments, we replace RDMF with DMF+DE \cite{Boyarski2019SpectralGM}, which is a better fit for that task.

\textbf{Avoid Over-fitting.} \Figref{fig:NMAEDuringTaining} shows the evolution of training NMAE of DMF and AIR.
AIR avoids over-fitting and performs better on all three data types with various missing patterns compared with vanilla DMF.

\begin{figure}
\begin{center}
\begin{tabular}{ccc}
\hspace{-0.2cm}\includegraphics[width=0.29\columnwidth]{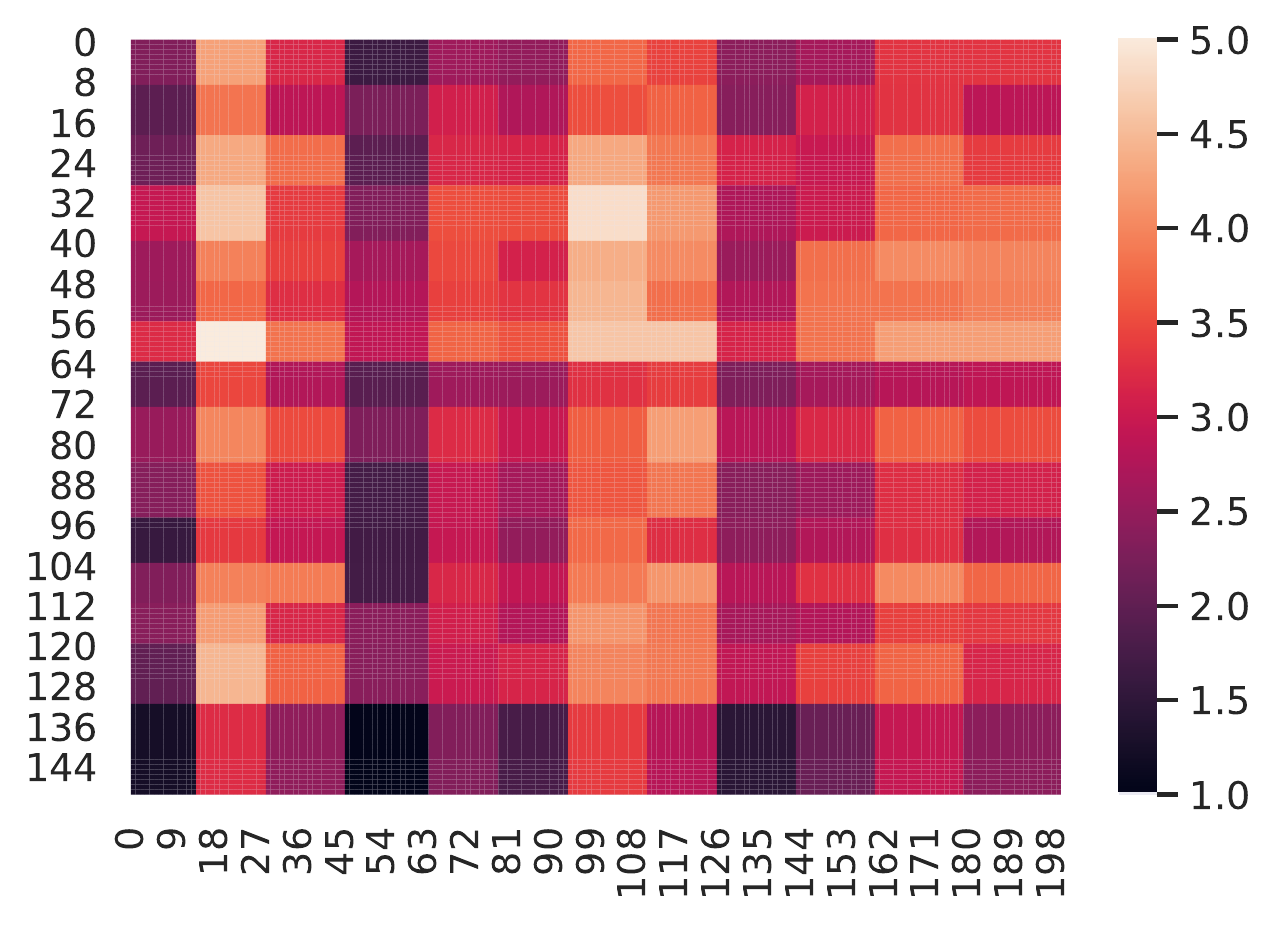}&
\hspace{-0.3cm}\includegraphics[width=0.29\columnwidth]{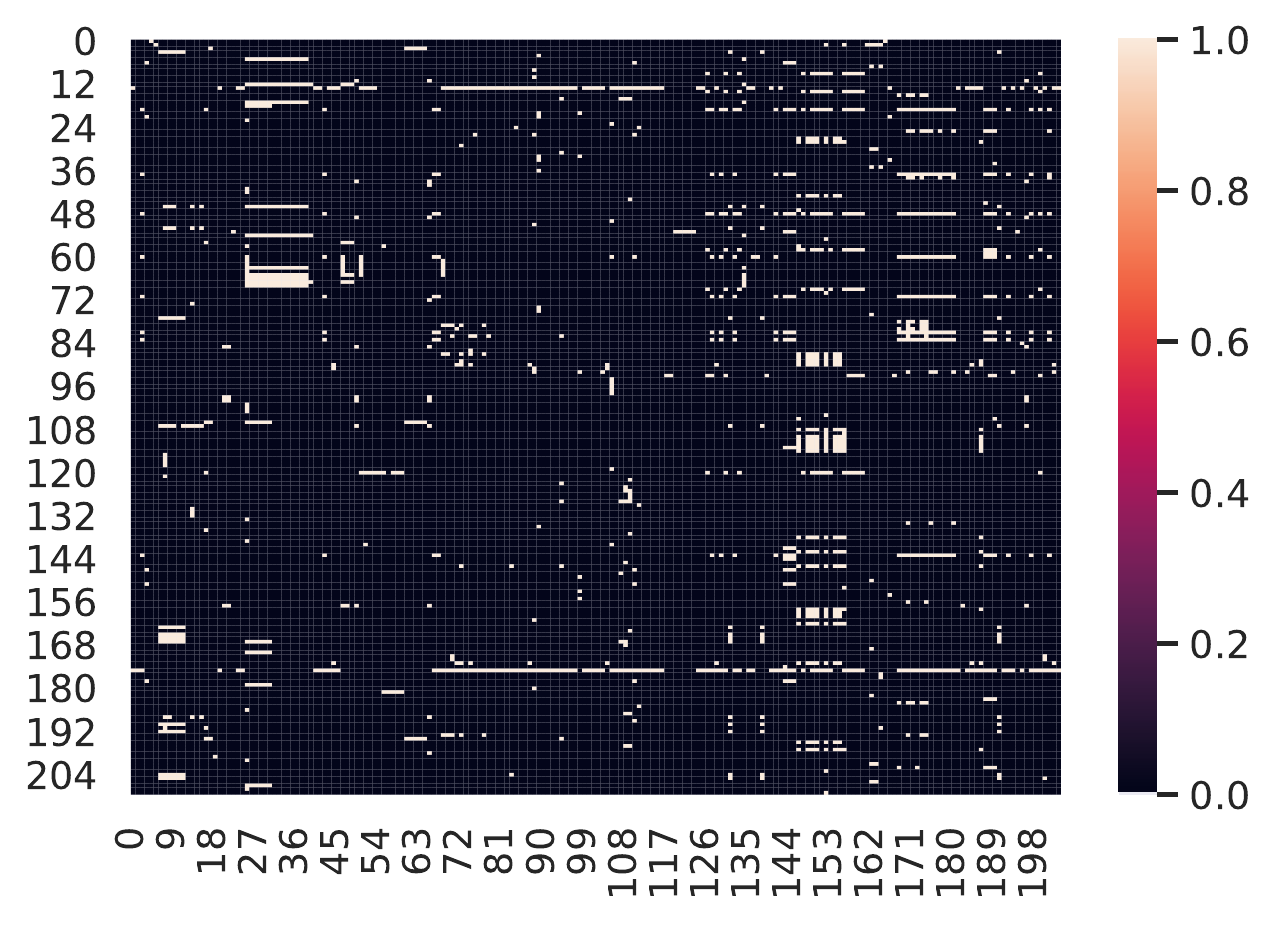}&
\hspace{-0.3cm}\includegraphics[width=0.29\columnwidth]{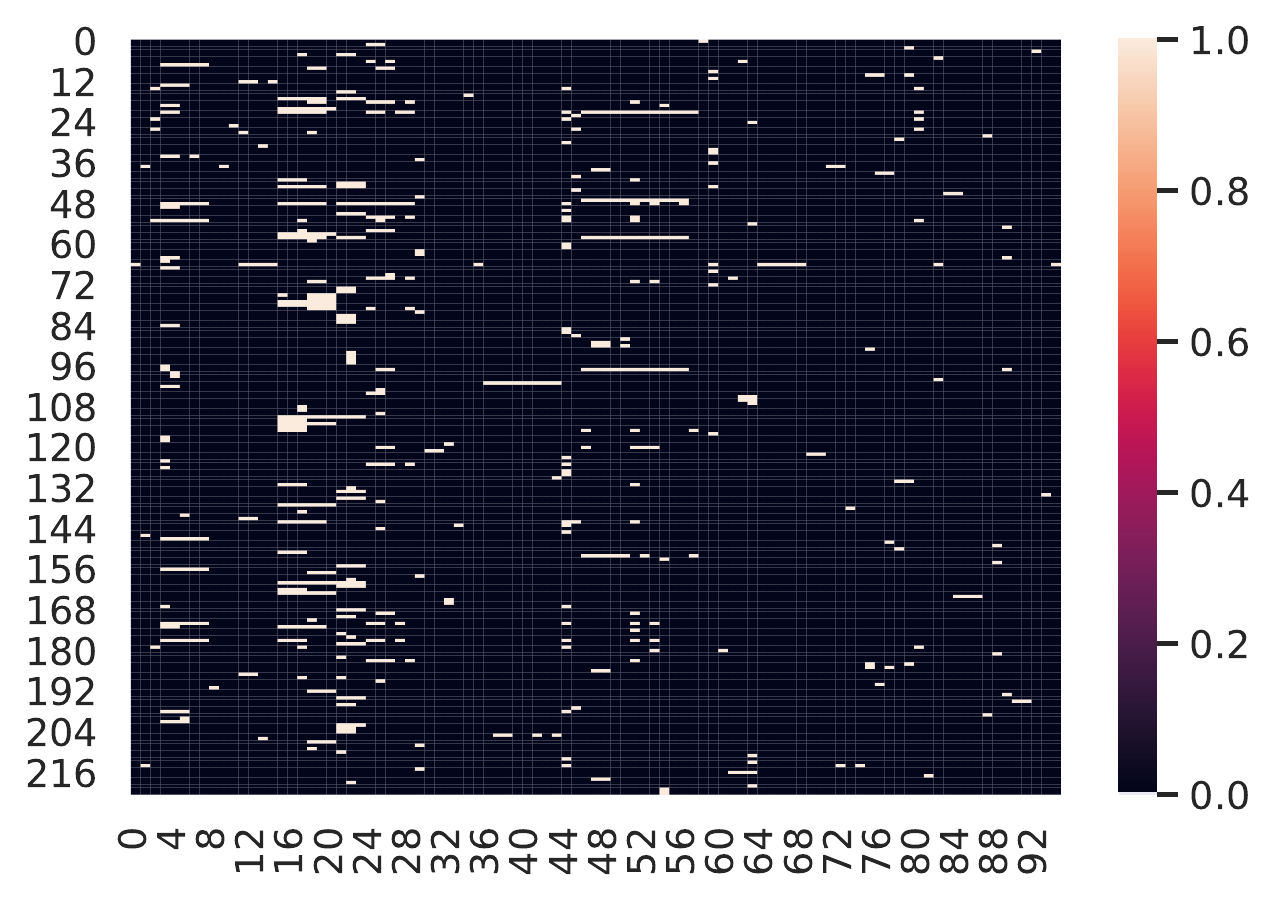}\\[-4pt]
{\footnotesize (a) Syn-Netflix} & {\footnotesize (b) IC} & {\footnotesize (c) GPCR} \\
\end{tabular}
\end{center}\vspace{-0.4cm}
\caption{\footnotesize Matrix completion 
for several benchmark datasets, including Syn-Netflix, IC, and GPCR. Deeper color in Syn-Netflix heatmap indicate a higher rate from a user to a given item. IC and GPCR are both drug-target interaction (DTI) \cite{Mongia2020DrugtargetIP} matrix, and the entries of the matrix represent the interaction between drug and target with deeper color represents stronger indication. We consider binary interaction for both IC and GPCR.
}\label{fig:rec_drug}
\end{figure}

\begin{figure}
\begin{center}
\begin{tabular}{ccc}
\hspace{-0.2cm}\includegraphics[width=0.3\columnwidth]{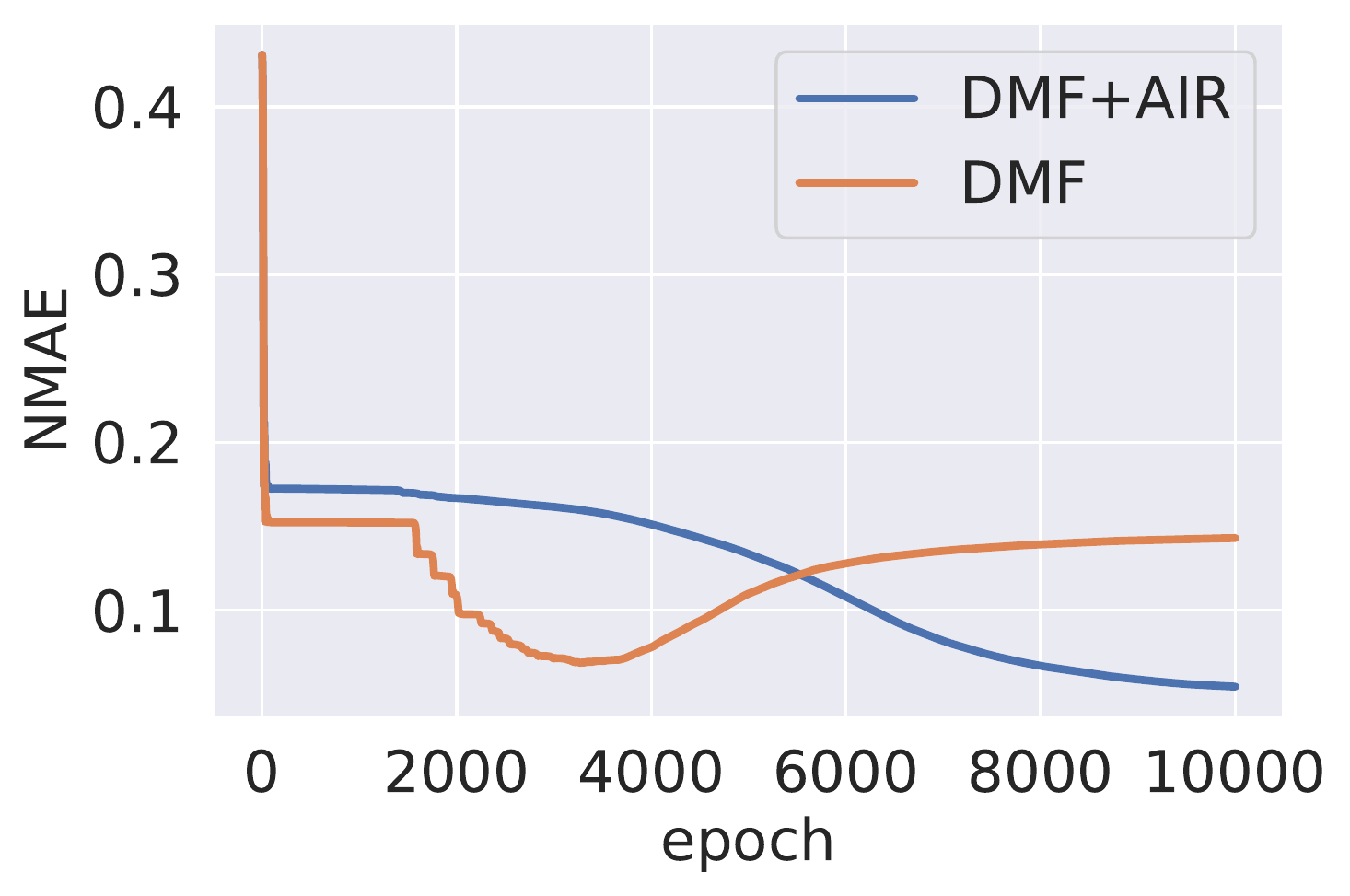}&
\hspace{-0.3cm}\includegraphics[width=0.3\columnwidth]{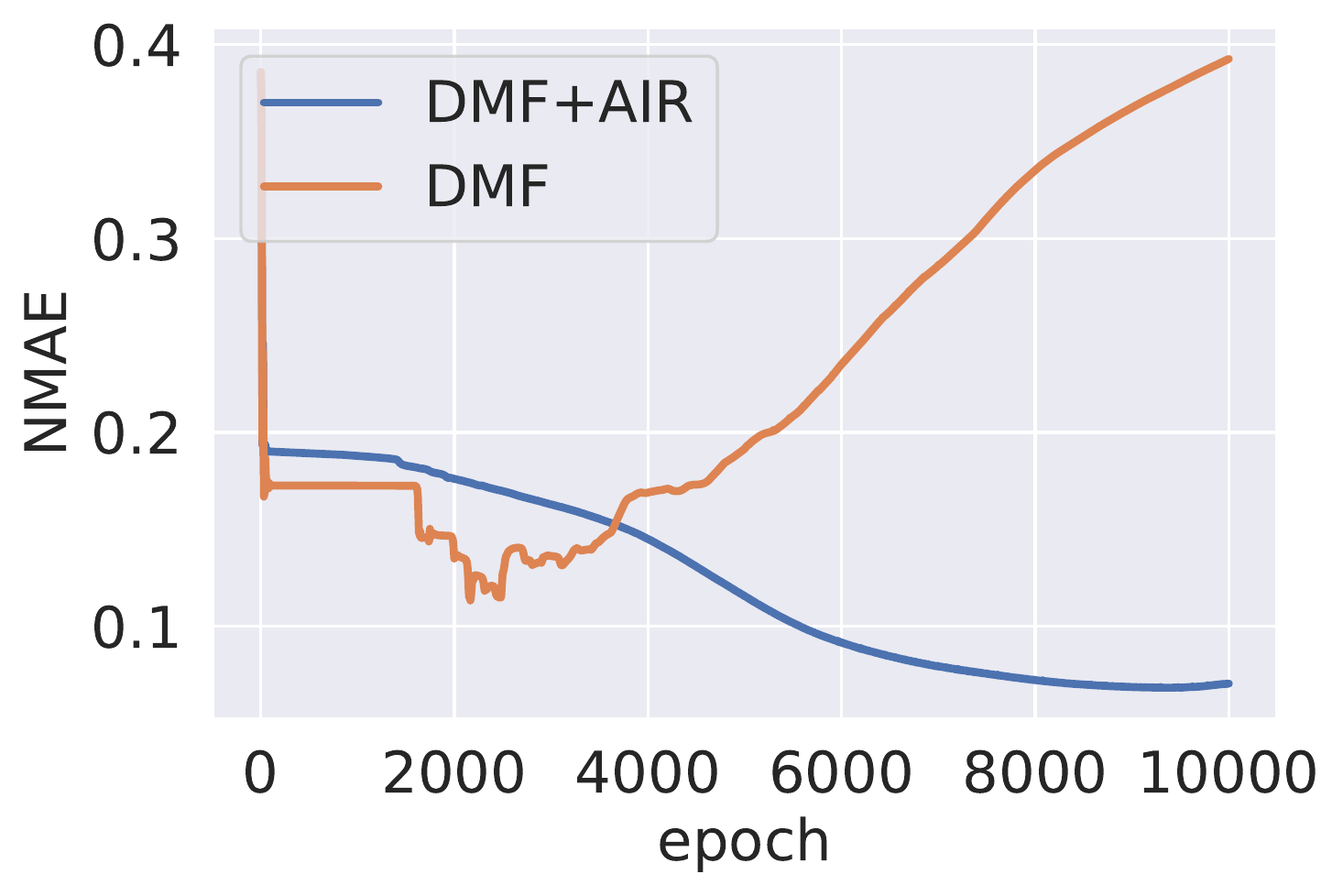}&
\hspace{-0.3cm}\includegraphics[width=0.3\columnwidth]{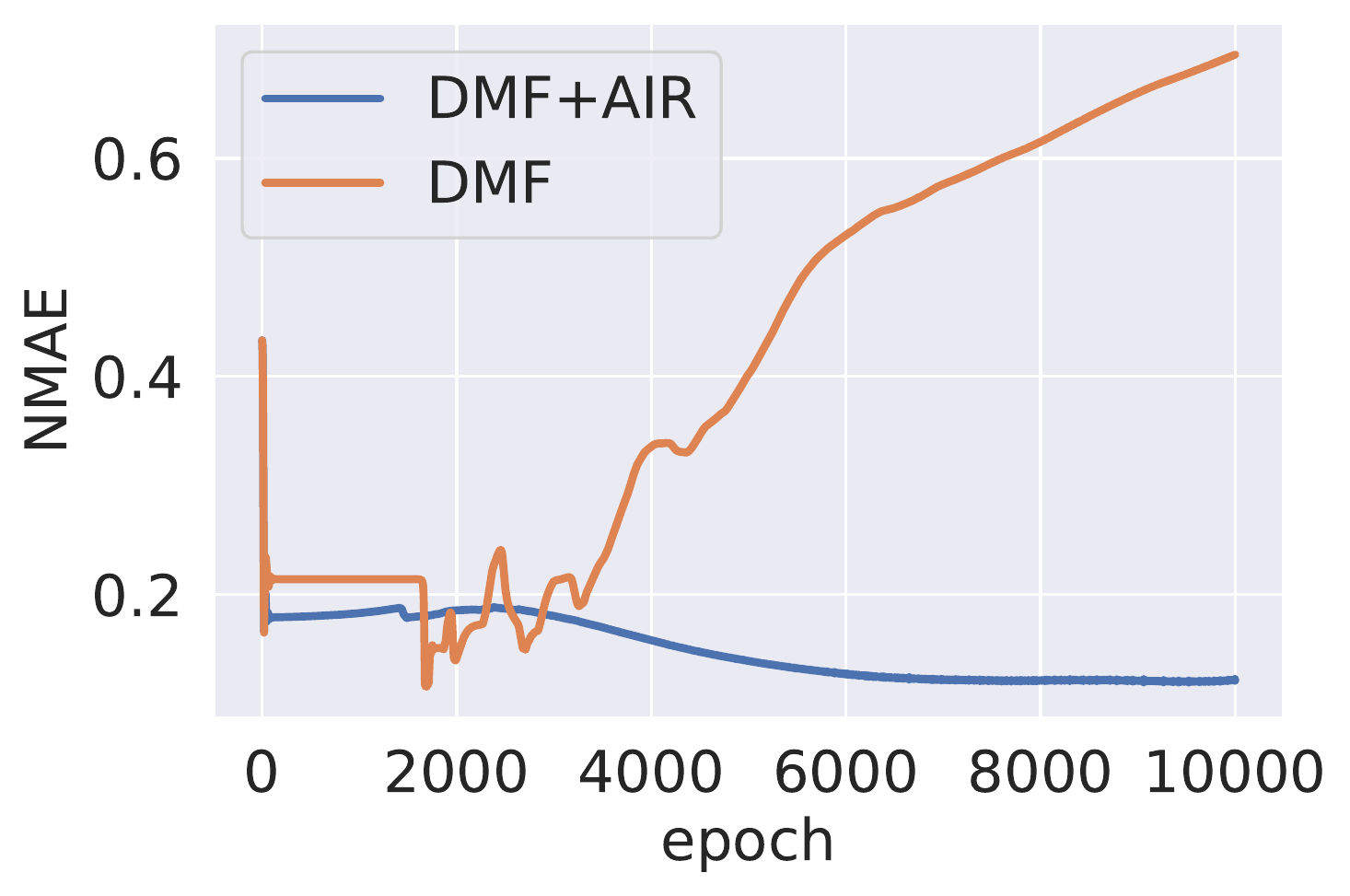}\\[-4pt]
{\footnotesize (a) Random Cameraman} & {\footnotesize (b) Textural Cameraman} & {\footnotesize (c) Patch Cameraman} \\
\hspace{-0.2cm}\includegraphics[width=0.3\columnwidth]{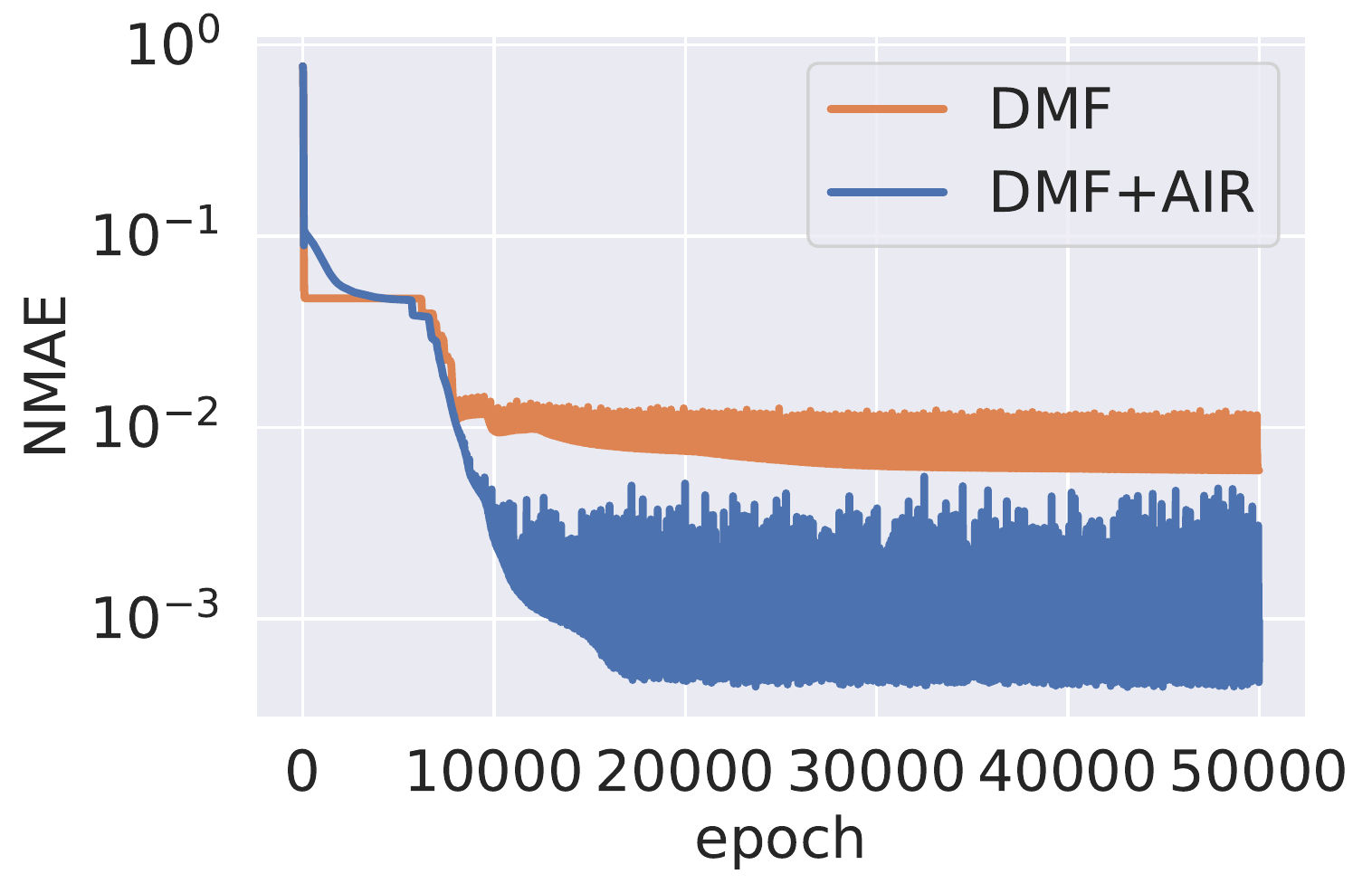}&
\hspace{-0.3cm}\includegraphics[width=0.3\columnwidth]{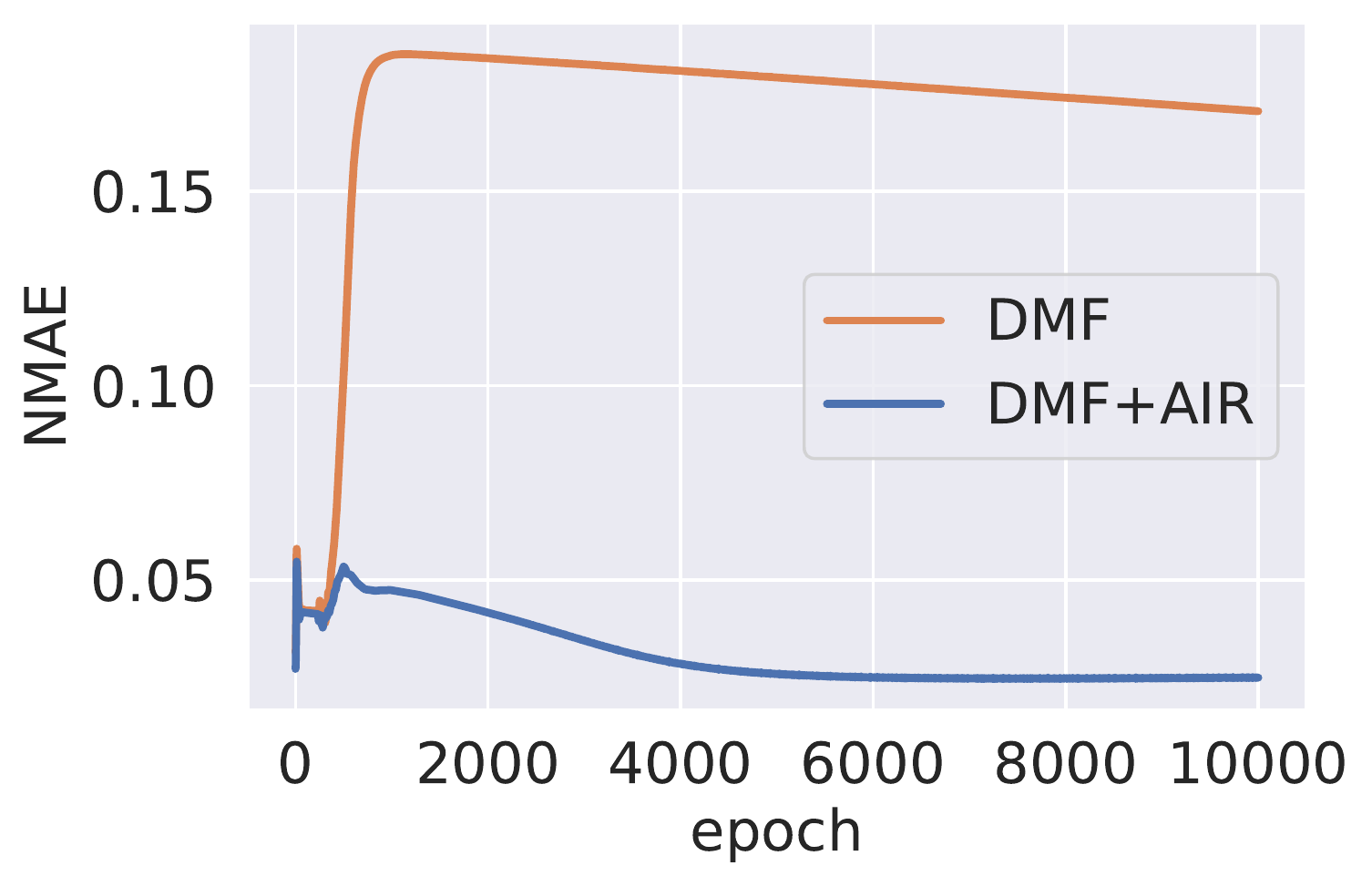}&
\hspace{-0.3cm}\includegraphics[width=0.3\columnwidth]{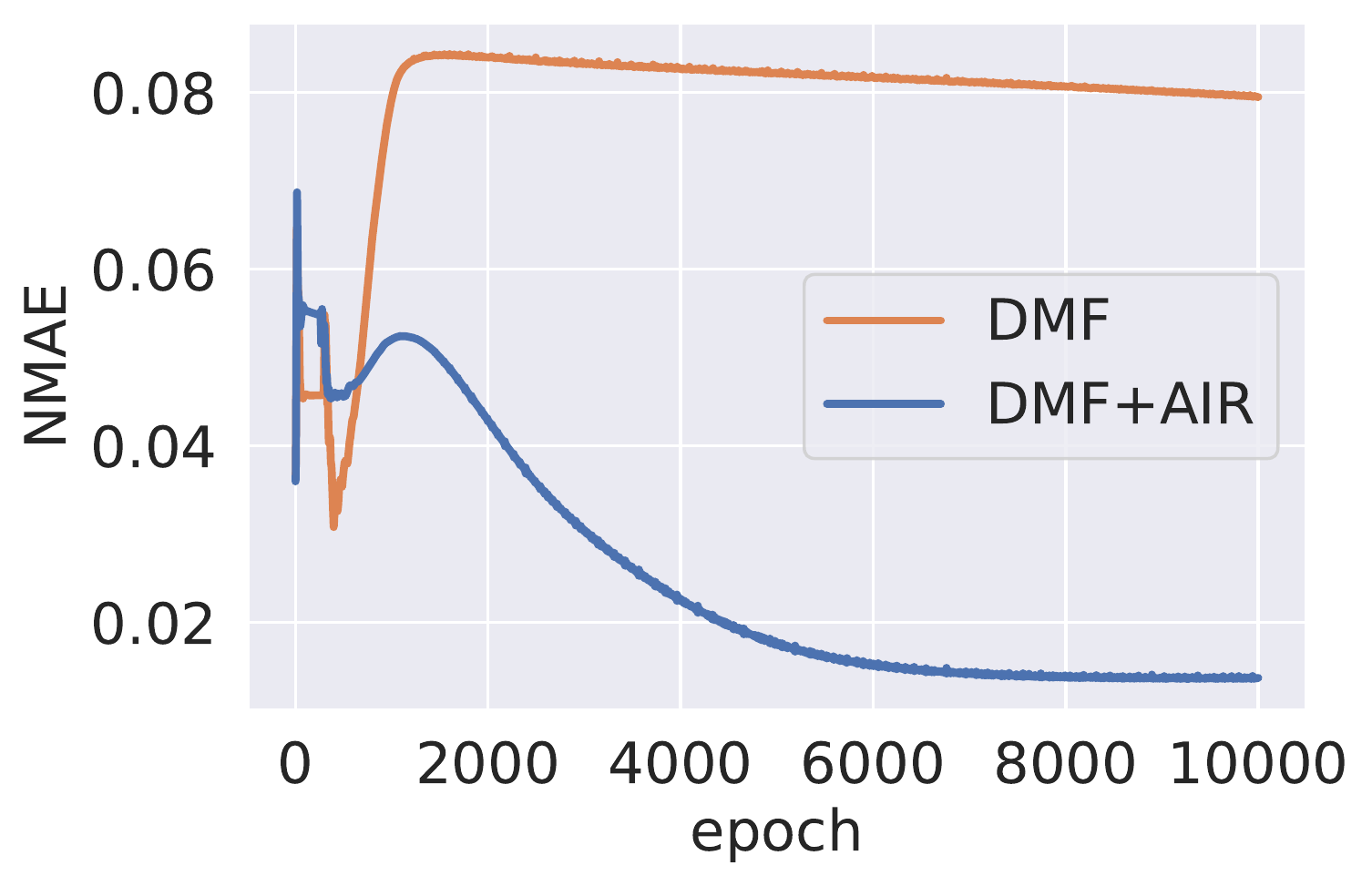}\\[-4pt]
{\footnotesize (d) Random Syn-Netflix} & {\footnotesize (e) Random IC} & {\footnotesize (f) Random GPCR} \\
\end{tabular}
\end{center}\vspace{-0.4cm}
\caption{\footnotesize Evolution of NMAE during training DMF and AIR (DMF with AIR regularization). 
The first row plots the results of Cameraman with (a) random missing, 
(b) textural missing, and (c) patch missing. 
The second row plots three other benchmarks 
with random missing:
(d) Syn-Netflix, (e) IC, and (f) GPCR
} \label{fig:NMAEDuringTaining}
\end{figure}

\begin{table}[htp]
\centering
\caption{
NMAE values of different algorithms for solving different matrix completion problems with different missing patterns.
NA indicates that the method is not suitable for that task.
The hyper-parameters of models and algorithms keep consistent with the original paper.}
\label{tab:multidata}
\begin{tabular}{cccccccccc}
\hline
\multicolumn{1}{l}{\scriptsize Data}  & {\scriptsize Missing} & {\scriptsize KNN\cite{Goldberger2004NeighbourhoodCA}}   & {\scriptsize SVD\cite{Troyanskaya2001MissingVE}}   & {\scriptsize PNMC\cite{Yang2020ANP}}  & {\scriptsize DMF\cite{Arora2019ImplicitRI}}   & {\scriptsize RDMF\cite{Li2020ARD}}    &{\scriptsize DMF+DE\cite{Boyarski2019SpectralGM}}       & {\scriptsize AIR}       \\ \hline
\multirow{3}{*}{\scriptsize Barbara}   & $30\%$  & 0.083 & 0.0621 & 0.0622 & 0.0613 & 0.0494  &     NA  & \textbf{0.0471} \\
                           & Patch   & 0.1563 & 0.2324 & 0.2055 & 0.7664 & 0.3025  &     NA   & \textbf{0.1195} \\
                           & Texture & 0.0712  & 0.1331 & 0.1100 & 0.3885 & 0.1864  &    NA    & \textbf{0.0692} \\
\hline
\multirow{3}{*}{\scriptsize Baboon}   & $30\%$  & 0.0831 & 0.1631 & 0.0965 & 0.2134 & 0.0926   &   NA    & \textbf{0.0814} \\
                           & Patch   & 0.1195  & 0.1571 & 0.1722 & 0.8133 & 0.2111  &   NA     & \textbf{0.1316} \\
                           & Texture & 0.1237  & 0.1815 & 0.1488 & 0.5835 & 0.2818  &   NA     & \textbf{0.1208} \\ 
\hline
\multirow{3}{*}{\scriptsize Syn-Netflix}   &$70\%$ & 0.0032 & 0.0376  & NA & 0.0003 &    NA  & 0.0008    & \textbf{0.0002} \\
                                &$75\%$ & 0.0046 & 0.0378  & NA & 0.0004 &    NA   & 0.0009    & \textbf{0.0003} \\
                                &$80\%$ & 0.0092  & 0.0414   & NA & 0.0014   &    NA   & 0.0012   & \textbf{0.0007} \\
\hline
{\scriptsize IC}                & $20\%$ &  0.0169    &  0.0547    & NA   &   0.0773   & NA &   0.0151      &     \textbf{0.0134}  \\
{\scriptsize GPCR}                         &  $20\%$  &   0.0409    &   0.0565      &  NA   &  0.1513  &  NA  &    \textbf{0.0245}       &    0.0271     \\
                           \hline
\end{tabular}
\end{table}

\textbf{Adaptive to data.} Table~\ref{tab:multidata} lists the NMAEs of matrix completion using AIR and several benchmark algorithms for different data with different missing patterns. We see that, in general, AIR performs the best and even better than the well-calibrated algorithms for some particular datasets. We further plot the recovered images in \Figref{fig:rec_img}; which shows that AIR consistently gives appealing results visually. Some baseline algorithms also perform well for some specific missing patterns. For instance, RDMF performs well for the random missing case but poorly for the other missing patterns. PNMC can complete the missing patch well but does not work well when the texture is missing. Overall, AIR achieves decent results visually and quantitatively for all data under all missing patterns.

\begin{figure}[htp]
    \hspace{1cm}\includegraphics[width=0.9\linewidth]{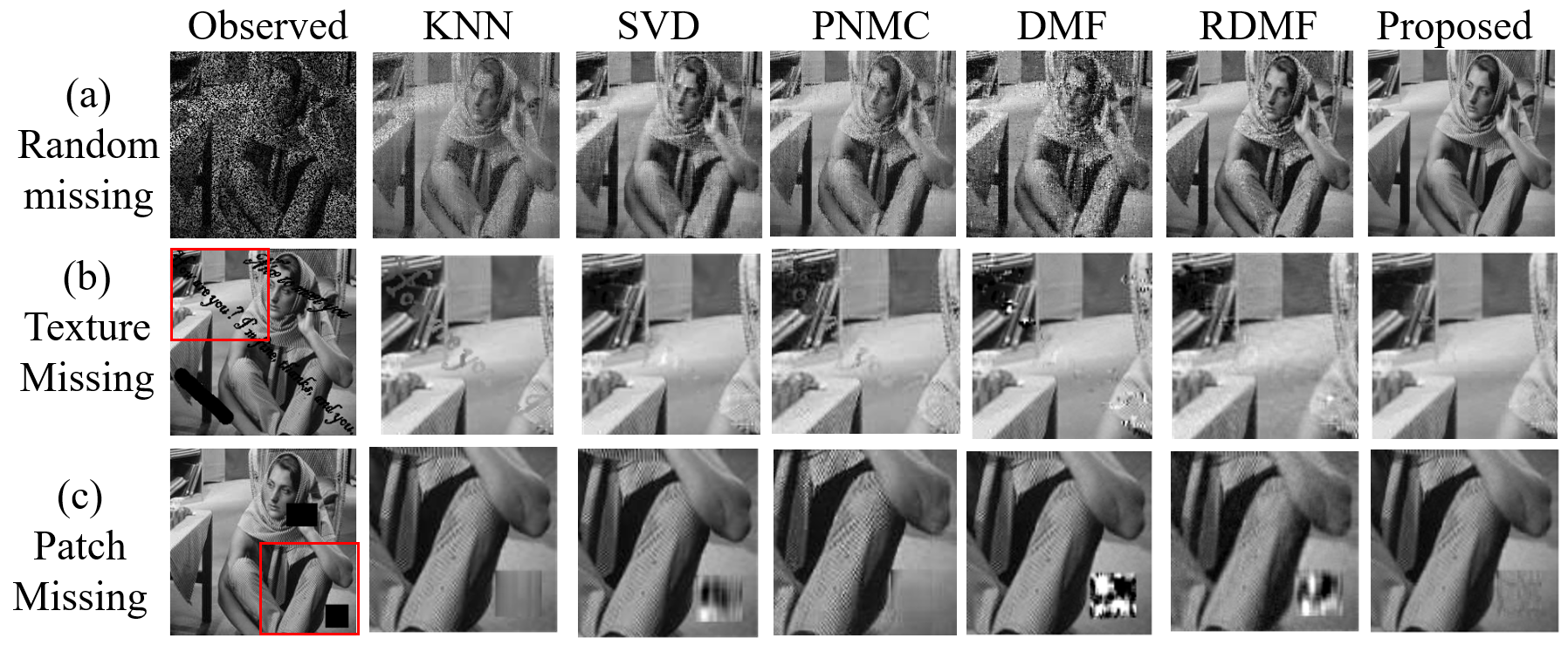}\vspace{-0.3cm}
    \caption{\footnotesize Contrasting KNN \cite{Goldberger2004NeighbourhoodCA},  SVD \cite{Troyanskaya2001MissingVE}, PNMC \cite{Yang2020ANP}, DMF \cite{Arora2019ImplicitRI}, RDMF \cite{Li2020ARD}, and our proposed AIR on the Babara image inpainting with three types of missing data respectively. The hyper-parameters of benchmark models and algorithms are adopted from the original paper.
    }
    \label{fig:rec_img}
\end{figure}

\subsection{Contrasting computational time of AIR with DMF}
The proposed AIR has several advantages, including avoiding over-fitting and adapting to data. However, 
AIR has more parameters than vanilla DMF, requiring 
more computational time in 
each iteration. In this subsection, we compare the complexity of AIR with two benchmark algorithms, namely, DMF and RDMF.

We focus on square matrices of the size $m\times m$. 
The loss function of $L$-layer DMF is given by $\frac{1}{2}\left\|\gA\left(\prod_{l=0}^{L-1}\mW^{[l]}\right)-\mY^*\right\|_F^2$, where $\mW^{[l]}\in\mathbb{R}^{m\times m}$ for $l=0,\ldots,L-1$ and $\mX = \prod_{l=0}^{L-1}\mW^{[l]}$ is the recovered matrix. Therefore the computational complexity of computing the loss function of DMF is $\gO((L-1)m^3)$. As for RDMF, its loss function involves the calculation of an additional TV term whose computational complexity is  $\gO(m^2)$. Therefore, RDMF is only slightly more expensive than DMF. 
The AIR's loss function calculates two additional adaptive regularization terms $tr(\mX\mL_r\mX^T)$ and $tr(\mX^T\mL_c\mX)$, with computational complexity being $\gO(2m^3)$.

The previous analysis shows that the computational complexity of each model is dominated by the size of $\mX$ ($m$) and the depth of the model ($L$). We numerically verify the previous computational complexity analysis on the Barbara image with different $m$ and $L$. In particular, we iterate each model for $10000$ iterations and report the computational time in Table~\ref{tab:comtime}. We conduct our experiments on Tesla V100 GPUs and report the GPU time consumption of three kinds of DMF-based methods, where the reported GPU times are averaged over ten independent runs. The numerical results show that:
\begin{enumerate}
\item For any fixed $m$ and $L$, AIR takes more computational time than RDMF, and DMF is computationally the cheapest among the three models.

\item For any fixed $m$, as $L$ increases, the ratio between the computational time of AIR and DMF decays. This is because the computational complexity of DMF increases as DMF becomes deeper. Nevertheless, the computational complexity of both TV and adaptive regularization is independent of the depth $L$.

\item For any fixed depth $L$, as $m$ increases, the ratio between the computational time of RDMF and DMF decays, while the ratio between the computational time of AIR and DMF keeps near a constant. This result echoes our computational complexity analysis. In particular, the computational complexity ratio between RDMF and DMF is $\gO(1+\frac{1}{(L-1)m})$ and the ratio between AIR and DMF is $\gO(1+\frac{2}{(L-1)})$.
\end{enumerate}

    
    

Although AIR takes more computational time than the baseline algorithms, it achieves significantly better accuracy than the baselines, which is valuable in many applications.

\begin{table}[htp]
\caption{Contrasting computational time of DMF, RDMF, and AIR with different $m,L$. RDMF/DMF represents the ratio of the computational time of RDMF and DMF. AIR/DMF represents the 
ratio of the computational time of AIR and DMF. Unit: second.}
\label{tab:comtime}
\centering
\begin{tabular}{ccccccc}
\hline
\multicolumn{1}{l}{$m$}  & {$L$} & {\scriptsize DMF\cite{Arora2019ImplicitRI}}   & {\scriptsize RDMF\cite{Li2020ARD}} &{\scriptsize RDMF/DMF} & {\scriptsize AIR}&{\scriptsize AIR/DMF}       \\ \hline
\multirow{3}{*}{100}  &2&31.55&44.24&1.40&59.53&1.89 \\
&3&33.93&46.81&1.38&61.53&1.81 \\
&4&36.52&49.16&1.35&61.59&1.69 \\
\hline
\multirow{3}{*}{170}  &2&35.15&47.67&1.36&65.81&1.87 \\
&3&37.97&51.47&1.36&67.97&1.79 \\
&4&41.38&55.11&1.33&70.86&1.71 \\
\hline
\multirow{3}{*}{240}  &2&43.57&53.36&1.22&82.21&1.89 \\
&3&46.50&59.89&1.29&83.32&1.79 \\
&4&52.24&61.89&1.18&88.34&1.69 \\
\hline
\end{tabular}
\end{table}

\section{Conclusion}
This paper proposes the AIR solve matrix completion problems without knowing the prior in advance. AIR parameterizes the Laplacian matrix in DE and can adaptively learn the regularization according to different data at different training steps. 
In addition, we demonstrate that AIR can avoid the over-fitting issue.
AIR is a generic framework for solving inverse problems that simultaneously encode the self-similarity and low-rank prior. Some interesting properties of the proposed AIR deserve further theoretical analysis. Such as the momentum phenomenon, which calls for a more detailed dynamic analysis.


\appendix
\setcounter{assump}{0}
\setcounter{lem}{0}
\setcounter{prop}{0}
\setcounter{thm}{0}
\setcounter{cor}{0}

\section{A Brief Review of DMF Algorithm and Theory}
\label{app..IntroDMF}
\begin{assump}\label{assump..balance}
Factor matrices are balanced at the initialization, i.e.,
$$\left[{\mW^{[l+1]}}(0)\right]^{\top} \mW^{[l+1]}(0)=\mW^{[l]}(0) \left[{\mW^{[l]}}(0)\right]^{\top} ,\quad l=0,\ldots, L-2.$$
\end{assump}
Under this assumption, Arora et al. have studied the gradient flow of the non-regularized risk function $\gL_{\sY}$, which is governed by the following differential equation 
\begin{equation}
\dot{\mW}^{[l]}(t)=-\frac{\partial}{\partial \mW^{[l]}} \gL_{\sY}\left(\mX(t)\right), \quad t \geq 0, \quad l=0, \ldots, L-1,
\label{eq..DynamicsNonReg}
\end{equation}
where the empirical risk
$\gL_{\sY}$ can be any analytic function of $\mX(t)$.
According to the analyticity of $\gL_{\sY}$, $\mX(t)$ has the following singular value decomposition 
\[
\mX(t)=\mU(t) \mS(t) \left[\mV(t)\right]^\top,
\]
where $\mU(t) \in \sR^{m, \min \left\{m, n\right\}},
\mS(t) \in \sR^{\min \left\{m, n\right\},
\min \left\{m, n\right\}}$, and
$\mV(t) \in \sR^{\min \left\{m, n\right\},n}$ are analytic
functions of $t$; and for every $t$, the matrices $\mU(t)$ and
$\mV(t)$ have orthonormal columns, while $\mS(t)$ is
diagonal. 
We denote the diagonal entries of $\mS(t)$ by 
$\sigma_{1}(t), \ldots,$ $\sigma_{\min \left\{m, n\right\}}(t)$, which are the signed singular values of $\mX(t)$. The columns of $\mU(t)$ and $\mV(t)$,
denoted by $\mU_{1}(t), \ldots, \mU_{\min \left\{m, n\right\}}(t)$
and $\mV_{1}(t), \ldots, \mV_{\min \left\{m, n\right\}}(t),$
are the corresponding left and right singular vectors, respectively. We have the following result about the evolutionary dynamics of the singular values of $\mX(t)$.

\begin{prop}[{\cite[Theorem 3]{Arora2019ImplicitRI}}]\label{prop..SinuglarValue}
    Consider the dynamics of \EQREF{eq..DynamicsNonReg} with initial data satisfying Assumption \ref{assump..balance}. Then the dynamics of signed singular values $\sigma_k(t)$ of 
    matrix $\mX(t)$ is governed by the following equation
    \begin{equation}
    \label{eq..arora}
    \begin{aligned}
        \dot{\sigma}_k(t)=-L \left(\sigma_k^{2}(t)\right)^{1-\frac{1}{L}} 
    \left\langle\nabla_{\mX} \gL_{\sY}(\mX(t)), \mU_{k,:}(t) 
    \left[\mV_{k,:}(t)\right]^\top\right\rangle,\  k=1, \ldots, 
    \min \left\{m, n\right\}.
    \end{aligned}
    \end{equation}
\end{prop}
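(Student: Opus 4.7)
The plan is to prove Proposition~\ref{prop..SinuglarValue} in three stages: (i) show the balanced condition is conserved by the gradient flow; (ii) use this invariance to rewrite the coupled dynamics $\{\dot{\mW}^{[l]}\}$ as a single closed-form ODE in $\mX(t)$; and (iii) read off the singular-value dynamics from that ODE using the analytic SVD.

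For step (i), I would define $\mB^{[l]}(t) := [\mW^{[l+1]}(t)]^\top \mW^{[l+1]}(t) - \mW^{[l]}(t)[\mW^{[l]}(t)]^\top$ and compute $\dot{\mB}^{[l]}$. Using the chain rule to express $\partial \gL_{\sY}/\partial \mW^{[l]}$ as a product involving the prefix $\prod_{j<l}\mW^{[j]}$, the suffix $\prod_{j>l}\mW^{[j]}$, and $\nabla_{\mX}\gL_{\sY}(\mX)$, and substituting into the gradient flow \EQREF{eq..DynamicsNonReg}, the two contributions to $\dot{\mB}^{[l]}$ telescope and cancel. Combined with $\mB^{[l]}(0) = \vzero$ from Assumption~\ref{assump..balance}, this gives $\mB^{[l]}(t) \equiv \vzero$ for all $t$.

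For step (ii), balance implies that the positive-semidefinite matrices $\mW^{[l]}[\mW^{[l]}]^\top$ all share the same eigenvalues and chain into a common analytic SVD for $\mX$; in particular, functional calculus on the positive part gives $\mW^{[l]}[\mW^{[l]}]^\top = (\mX\mX^\top)^{1/L}$ and $[\mW^{[l]}]^\top\mW^{[l]} = (\mX^\top\mX)^{1/L}$. Differentiating $\mX = \prod_l \mW^{[l]}$, substituting the gradient-flow formula for each $\dot{\mW}^{[l]}$, and collapsing the resulting matrix products with these fractional-power identities yields
$$
\dot{\mX}(t) = -\sum_{l=0}^{L-1} [\mX(t)\mX(t)^\top]^{(L-l-1)/L}\,\nabla_{\mX}\gL_{\sY}(\mX(t))\,[\mX(t)^\top\mX(t)]^{l/L}.
$$
For step (iii), I would differentiate $\sigma_k(t) = \mU_{k,:}(t)^\top \mX(t) \mV_{k,:}(t)$. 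Because $\mU^\top\mU = \mI$ and $\mV^\top\mV = \mI$, the matrices $\mU^\top\dot{\mU}$ and $\mV^\top\dot{\mV}$ are antisymmetric and their diagonal entries vanish, so the contributions of $\dot{\mU}_{k,:}$ and $\dot{\mV}_{k,:}$ drop out and $\dot{\sigma}_k = \mU_{k,:}^\top \dot{\mX}\, \mV_{k,:}$. Substituting the formula from step (ii) and using $\mU_{k,:}^\top(\mX\mX^\top)^\alpha = \sigma_k^{2\alpha}\mU_{k,:}^\top$ and $(\mX^\top\mX)^\alpha \mV_{k,:} = \sigma_k^{2\alpha}\mV_{k,:}$, each of the $L$ summands reduces to $-\sigma_k^{2(L-1)/L}\langle \nabla_{\mX}\gL_{\sY}, \mU_{k,:}\mV_{k,:}^\top\rangle$, and summing over $l$ produces the factor $L$ in \EQREF{eq..arora}.

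The main obstacle is the functional-calculus identity $\mW^{[l]}[\mW^{[l]}]^\top = (\mX\mX^\top)^{1/L}$ required in step (ii). The chained balance relations $[\mW^{[l+1]}]^\top\mW^{[l+1]} = \mW^{[l]}[\mW^{[l]}]^\top$ must be assembled into a simultaneous SVD across all $L$ factors, and this is delicate wherever singular values collide or where $\sigma_k(t) = 0$, since fractional powers are non-smooth there and sign conventions for the analytic SVD must be chosen carefully. The cleanest workaround is to argue on the generic open set where all signed singular values are distinct and nonzero, verify the ODE there, and then extend to the remaining times by analyticity (guaranteed because the gradient flow of an analytic loss produces analytic trajectories); alternatively, one can bypass a static SVD altogether by differentiating $\mX(t)\mX(t)^\top$ directly and inducting on $L$, which avoids any fractional-power subtleties.
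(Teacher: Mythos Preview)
The paper does not prove Proposition~\ref{prop..SinuglarValue}; it is stated in Appendix~\ref{app..IntroDMF} purely as a citation of \cite[Theorem~3]{Arora2019ImplicitRI} and is used only as a black box (namely as input \EQREF{eq..arora} to the proof of Theorem~\ref{thm..ImplicitReg}). Your three-stage outline---conservation of balance, the closed-form end-to-end ODE, and then projecting onto $\mU_{k,:}\mV_{k,:}^\top$---is exactly the argument in that reference, so you are reproducing the cited proof rather than offering an alternative.

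One small imprecision worth flagging in step~(ii): the identity $\mW^{[l]}[\mW^{[l]}]^\top=(\mX\mX^\top)^{1/L}$ is not true for every layer; balance forces all factors to share singular values, but their left singular vectors differ across $l$, so only the top factor satisfies that equation literally. What the derivation actually needs are the identities for the \emph{partial products},
\[
\Bigl(\textstyle\prod_{j>l}\mW^{[j]}\Bigr)\Bigl(\textstyle\prod_{j>l}\mW^{[j]}\Bigr)^{\!\top}=(\mX\mX^\top)^{(L-l-1)/L},\qquad
\Bigl(\textstyle\prod_{j<l}\mW^{[j]}\Bigr)^{\!\top}\Bigl(\textstyle\prod_{j<l}\mW^{[j]}\Bigr)=(\mX^\top\mX)^{l/L},
\]
which do follow from the chained balance relations via the telescoping SVD you describe. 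You already isolated this as the main obstacle, and your proposed fixes (work on the generic set of distinct nonzero singular values and extend by analyticity, or induct on $L$) are the standard ways to make it rigorous, so the proposal is sound.
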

If the matrix factorization is non-degenerate,
i.e., the factorization has depth $L \geq 2,$ the singular values 
need not be signed (we may assume $\sigma_k(t) \geq 0$ for all $t$).

Arora et al. claim that the term $\left(\sigma_k^2(t)\right)^{1-\frac{1}{L}}$ enhances the dynamics of large singular values while diminishing that of small ones. The enhancement/diminishment becomes more significant as $L$ increases \cite{Arora2019ImplicitRI}. As the $\gL_{\sY}$ can be any analyticity empirical risk function, we can replace $\gL_{\sY}$ with arbitrary analyticity function $\gL$ in \ref{eq..arora}.

\section{Proof of Theorem 2}
\label{app..Theorem2}
\begin{prop}\label{prop..RegGrad}
We have 
$$\nabla_{\mW}\left(\gR_{\mW}\left(\mX\right)\right)=2\mC\odot \mA-2\mathrm{tr}\left(\mC \mA'\right)\mA',$$
where $\mA'=\frac{\exp(\mW^{\top})}{\mathbf{1}_m^\top \exp(\mW)\mathbf{1}_m}$, $\mA=\mA'+{\mA'}^{\top}$, and
$$
\mC=\left(\gT\left(\mX\right){\gT\left(\mX\right)}^{\top}\odot \mI_{m}\right)\mathbf{1}_{m\times m}-\gT\left(\mX\right){\gT\left(\mX\right)}^{\top}.
$$
\end{prop}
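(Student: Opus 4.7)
The strategy is matrix differential calculus. The plan is to (i) rewrite $\gR_\mW$ as a trace that isolates the $\mW$-dependence through $\mA$, (ii) compute the differential $d\mA$ in $d\mW$, and (iii) collect the coefficient of $d\mW$ to read off $\nabla_\mW\gR_\mW$.

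For step (i), set $\mK := \gT(\mX)\gT(\mX)^\top$ (symmetric) and use $\mL = (\mA\mathbf{1}_{m\times m})\odot\mI_m - \mA$. Expanding $\mathrm{tr}(\gT(\mX)^\top\mL\gT(\mX)) = \mathrm{tr}(\mL\mK)$ and using $\sum_i K_{ii}D_{ii} = \sum_{i,j} K_{ii}A_{ij}$ (with $D_{ii}=\sum_j A_{ij}$) gives
\[
\gR_\mW \;=\; \sum_{i,j} A_{ij}\bigl(K_{ii}-K_{ij}\bigr) \;=\; \mathrm{tr}(\mA\mC),
\]
with $\mC$ exactly as in the statement. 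After this reduction the only $\mW$-dependent factor is $\mA$.

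For step (ii), rewrite the parameterization as $\mA = (\exp(\mW)+\exp(\mW^\top))/Z$ with $Z = \mathbf{1}_m^\top\exp(\mW)\mathbf{1}_m$, so that $\exp(\mW)/Z = \mA'^\top$ and $\exp(\mW^\top)/Z = \mA'$. Since $\exp$ acts element-wise, the quotient rule yields a numerator piece $(\exp(\mW)\odot d\mW + \exp(\mW^\top)\odot d\mW^\top)/Z$ and a denominator piece $-\mA\,dZ/Z$, with $dZ/Z$ a scalar of the form $\langle \mA'^\top,\,d\mW\rangle$.

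For step (iii), substitute into $d\gR_\mW = \mathrm{tr}(\mC\,d\mA)$ and apply the Hadamard–trace identity $\mathrm{tr}(\mM(\mN\odot d\mW)) = \langle\mM^\top\odot\mN,\,d\mW\rangle$, along with the transpose trick $\mathrm{tr}(\mM(\mN\odot d\mW^\top)) = \mathrm{tr}(\mM^\top(\mN^\top\odot d\mW))$, to convert every contribution into a coefficient of $d\mW$. The two numerator contributions combine, after exploiting $\mA = \mA'+\mA'^\top$ and the symmetry of $\mC+\mC^\top$, into the Hadamard term $2\mC\odot\mA$; the denominator contribution collapses to $-\,2\,\mathrm{tr}(\mC\mA')\,\mA'$ after recognizing that $\mathrm{tr}(\mC\mA)=2\,\mathrm{tr}(\mC\mA')$ in this setting.

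The main bookkeeping obstacle is the transposed differential $d\mW^\top$ appearing in the numerator of $d\mA$: it must be flipped back to a $d\mW$-coefficient via the Hadamard–trace identities, and the resulting pairs of Hadamard products must be recombined carefully (using the symmetries $\mA = \mA^\top$ and $\mA=\mA'+\mA'^\top$) so that the naturally appearing factors of $\mC$ and $\mC^\top$ consolidate into the compact $2\mC\odot\mA$ and the scalar $\mathrm{tr}(\mC\mA)$ rewrites as $2\,\mathrm{tr}(\mC\mA')$. Once this recombination is executed cleanly, the stated identity drops out.
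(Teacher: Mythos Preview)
Your proposal is correct and follows essentially the same route as the paper: both arguments reduce $d\gR_\mW$ to $\mathrm{tr}(\mC\,d\mA)$, differentiate the quotient defining $\mA$, and then use Hadamard--trace identities to read off the coefficient of $d\mW$. The only cosmetic difference is that you first simplify $\gR_\mW=\mathrm{tr}(\mA\mC)$ and then differentiate, whereas the paper differentiates $\mathrm{tr}(\gT(\mX)^\top\mL\,\gT(\mX))$ directly and simplifies to $\mathrm{tr}(\mC\,d\mA)$ along the way; the remaining bookkeeping (flipping $d\mW^\top$, combining $\mC$ and $\mC^\top$ via the symmetry of $\mA$, and collapsing $\mathrm{tr}(\mC\mA)$ to $2\,\mathrm{tr}(\mC\mA')$) is identical.
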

\begin{proof}
     We denote $\mX=\gT\left(\mX\right)\in\sR^{m\times n}$, then we consider the differential of $\mathrm{tr}\left(\mX^\top \mL\mX\right)$, i.e., $d\left[\mathrm{tr}\left(\mX^\top \mL\mX\right)\right]$, note that
     $$
     \begin{aligned}
         &d\left[\mathrm{tr}\left(\mX^\top \mL\mX\right)\right]=\mathrm{tr}\left[d\left(\mL\mX\mX^\top\right)\right]\\
         = &\mathrm{tr}\left[\left(d\mA\mathbf{1}_{m\times m}\right)\odot I_m  \mX\mX^\top- d\mA\mX\mX^\top\right]\\
         = & \mathrm{tr}\left[\mX\mX^\top \left(\mI_{m}\odot \left(d\mA \mathbf{1}_{m\times m}\right)\right)-\mX\mX^\top d\mA\right]\\
         = & \mathrm{tr}\left[ \left(\mX\mX^\top \odot \mI_{m}\right)^\top d\mA \mathbf{1}_{m\times m}-\mX\mX^\top d\mA\right]\\
         = &  \mathrm{tr}\left[ \left(\left(\mX\mX^\top \odot \mI_{m}\right)\mathbf{1}_{m\times m}\right)^\top d\mA -\mX\mX^\top d\mA\right]\\
         = & \mathrm{tr}\left[\left(\mathbf{1}_{m\times m}\left(\mX\mX^\top \odot \mI_{m}\right)   -\mX\mX^\top\right)d\mA\right]\\
         = & \mathrm{tr}\left[\left(\left(\mX\mX^\top \odot \mI_{m}\right)  \mathbf{1}_{m\times m} -\mX\mX^\top\right)d\mA\right].
     \end{aligned}
     $$
     We denote $\mC = \left(\mX\mX^\top\odot \mI_{m}\right)\mathbf{1}_{m\times m}-\mX\mX^\top, S=\mathbf{1}_{m}^\top \exp\left(\mW\right)  \mathbf{1}_{m}$, then
     $$
     \begin{aligned}
         &d\left[\mathrm{tr}\left(\mX^\top \mL\mX\right)\right]  = \mathrm{tr}\left(\mC d\mA\right)\\
         = & \frac{1}{S^2}\mathrm{tr}\left[\mC \left(S \exp(\mW+\mW^\top)\odot d(\mW+\mW^\top)\right)\right.\\
         &\left.-\mC\left(\mathbf{1}_{m}^\top \left(\exp(\mW)\odot d\mW\right)\mathbf{1}_{m}\right)\exp(\mW+\mW^\top)\right]\\
         = & \mathrm{tr}\left[\mC \left(\mA\odot d \left(\mW+\mW^\top\right)\right)\right]\\
         &-\frac{1}{S^2}\mathrm{tr}\left[\mathbf{1}_{m\times m}\left(\exp(\mW)\odot d\mW\right)\right] \mathrm{tr}\left[\mC \exp\left(\mW+\mW^\top\right)\right]\\
         = & \mathrm{tr}\left[\mC \left(\mA\odot d \left(\mW+\mW^{\top}\right)\right)\right]-\frac{1}{S}\mathrm{tr}\left(\exp(\mW^T) d\mW\right) \mathrm{tr}\left(\mC \mA\right) \\
         = & \mathrm{tr}\left[\mC \left(\mA\odot d \left(\mW+\mW^\top\right)\right)\right]
         -\mathrm{tr}\left[\mathrm{tr}\left(\mC \mA\right)\mA'd\mW \right]\\
         = & \mathrm{tr}\left[\left(\left(\mC^\top \odot \mA\right)^\top+\mC^\top \odot\mA-\mathrm{tr}\left(\mC \mA\right)\mA'\right)d\mW\right].
     \end{aligned}
     $$
     Therefore,
     $$
     \begin{aligned}
         \nabla_{\mW} \mathrm{tr}\left(\mX^\top \mL_i\mX\right)&=\left(\mC^\top \odot \mA\right)^\top+\mC^\top \odot\mA-\mathrm{tr}\left(\mC \mA\right)\mA'\\
         &=2\mC \odot \mA-2\mathrm{tr}\left(\mC \mA'\right)\mA'.
     \end{aligned}
     $$
     Notice that $\mX=\gT\left(\mX\right)\in\sR^{m\times n}$, the proposition is proved.
\end{proof}

\begin{proof}(Proof of Theorem \ref{thm..ConvReg})
We rewrite the gradient in Proposition \ref{prop..RegGrad} into the following element wise formulation:
$$
\dot{\mW}_{kl}\left(t\right)=\left(2Ca(t)-4\mC_{kl}\right) \mA_{kl}'(t),
$$
where $Ca(t) = \mathrm{tr}\left(\mC(t) \mA'(t)\right)$ and the sub-index denotes the corresponding entry of the matrix. 

With the assumption that $\Big\|\gT\left(\mX\right)_{k,:}\Big\|_F^2=1$ and $\gT\left(\mX\right)_{kl}>0$, we have 
$$
0\leq \left(\gT\left(\mX\right)\gT\left(\mX\right)^\top\right)_{kl}\leq 1.
$$
Therefore $\mC_{kl}=\left(\gT\left(\mX\right)\gT\left(\mX\right)^\top\right)_{kk}-\left(\gT\left(\mX\right)\gT\left(\mX\right)^\top\right)_{kl}\geq 0$ and 
$\mC_{kk}=0$ as $\mA_{kl}'=S^{-1}\exp({\mW}_{kl})>0$, thus we have
$$
Ca(t)=\mathrm{tr}\left(\mC(t) \mA'(t)\right)=\sum_{k=1,l=1}^{m} C_{kl}\mA_{kl}'(t)\geq 0.
$$
Denote $\mC_{\hat{k}\hat{l}}\in\min\limits_{k, l} \mC_{kl}$, we have $\mC_{\hat{k}\hat{l}}\leq\mC_{kl}$ and then consider
$$
\begin{aligned}
    \dot{\mW}_{\hat{k}\hat{l}}(t)-\dot{\mW}_{kl}(t)
= 2Ca(t)\left(\mA_{\hat{k}\hat{l}}'(t)-\mA_{kl}'(t)\right)-4\left(\mC_{\hat{k}\hat{l}}\mA_{\hat{k}\hat{l}}'(t)-\mC_{kl}\mA_{kl}'(t)\right).
\end{aligned}
$$
As we initialize $\mW\left(0\right)=\varepsilon \mathbf{1}_{m\times m}$, thus $\mA_{kl}'(0)=\frac{1}{m^2}$, for $\forall k,l$. Therefore,
$$\dot{\mW}_{\hat{k}\hat{l}}(0)-\dot{\mW}_{kl}(0)=-4\left(\mC_{\hat{k}\hat{l}}-\mC_{kl}\right)\mA_{kl}'(0)=-\frac{4}{m^2}\left(\mC_{\hat{k}\hat{l}}-\mC_{kl}\right)\geq 0.$$ 
Then we have 
$\mW_{\hat{k}\hat{l}}(t)\geq \mW_{kl}(t)$ and $\mA_{\hat{k}\hat{l}}'(t)\geq \mA_{kl}'(t)$, the equality holds if and only if $t=0$ or $\mC_{\hat{k}\hat{l}}=\mC_{kl}$. Furthermore, $\dot{\mW}_{\hat{k}\hat{l}}(t)-\dot{\mW}_{kl}(t)\geq -4\left(\mC_{\hat{k}\hat{l}}-\mC_{kl}\right)\mA_{kl}'(0)$, then $\mW_{\hat{k}\hat{l}}(t)-\mW_{kl}(t)\geq  \mD_{\hat{k},\hat{l},k,l} t$, where $\mD_{\hat{k},\hat{l},k,l}=-4\left(\mC_{\hat{k}\hat{l}}-\mC_{kl}\right)\mA_{kl}'(0)\geq 0$. Next, we consider 
$$
\begin{aligned}
    \mA_{\hat{l}\hat{k}}'\left(t\right)
    &=\frac{\exp(\mW_{\hat{k}\hat{l}})}{\sum\limits_{k,l}\exp(\mW_{kl})}\\
    &=\frac{1}{\sum\limits_{k,l}\exp(\mW_{kl}-\mW_{\hat{k}\hat{l}})}\geq \frac{1}{\sum\limits_{k,l}\exp\left(-\mD_{\hat{k},\hat{l},k,l} t\right)}.
\end{aligned}
$$
As $\mC_{kk}=0$ and $\mC_{kl}\geq 0$, therefore $\mC_{kk}\in\min\limits_{kl}\mC_{kl}$=0. It is not difficult to show that $\mC_{\hat{k}\hat{l}}=0$ if and only if ${\gT\left(\mX\right)}_{\hat{k},:}={\gT\left(\mX\right)}_{\hat{l},:}$. We further simplify our notations by introducing the following notations
$$
\sS_1 = \left\{(k,l)\mid k\neq l,\gT\left(\mX\right)_{k,:}\neq \gT\left(\mX\right)_{l,:}\right\},
\sS_2=\left\{(k,l)\mid k\neq l,\gT\left(\mX\right)_{k,:}= \gT\left(\mX\right)_{l,:}\right\},
$$ 

$$
\sS_3=\left\{(k,l)\mid k=l\right\}.
$$
If we denote $\sS_2\cup \sS_3 = \left\{\left(k,l\right)\mid\mC_{kl}=0\right\}$ and $\left|\sS_2\cup \sS_3\right|=m+2s$, then when $\mC_{\hat{k}\hat{l}}=0$, $\mA_{\hat{l}\hat{k}}'\left(t\right)\geq \frac{1}{m+2s+\mE_{\hat{k}\hat{l}}(t)}$, where $\mE_{\hat{k}\hat{l}}(t)=\sum_{(k,l)\in\sS_1}exp(-\mD_{\hat{k},\hat{l},k,l}t)$ and $-\mD_{\hat{k},\hat{l},k,l}>0,\forall (k,l)\in\sS_1$ leads to $\mE_{\hat{k}\hat{l}}\left(+\infty\right)=0$. Notice that $\sum\limits_{k,l}\mA_{kl}'\left(t\right)=1$, we have
$$
\frac{1}{m+2s+\mE_{\hat{k}\hat{l}}(t)}\leq \mA_{\hat{l}\hat{k}}'\left(t\right)\leq \frac{1}{m+2s}.
$$
Therefore, 
$$\mA_{kl}'\left(+\infty\right)=\left\{\begin{array}{cc}
0& , \gT\left(\mX\right)_{k,:}\neq \gT\left(\mX\right)_{l,:} \\
        \frac{1}{m+2s} &, \gT\left(\mX\right)_{k,:} = \gT\left(\mX\right)_{l,:}
\end{array}\right. .$$ 
Furthermore, 
$$
\mA_{kl}\left(+\infty\right)=\mA_{kl}'\left(+\infty\right)+\mA_{kl}'\left(+\infty\right)=2\mA_{kl}'\left(+\infty\right)=2\mA_{kl}'\left(+\infty\right)=\frac{2}{m+2s}=\gamma,
$$ 
$$\mA_{kl}\left(+\infty\right)=\left\{\begin{array}{cc}
        0 &, \gT\left(\mX\right)_{k,:}\neq \gT\left(\mX\right)_{l,:} \\
        \gamma &, \gT\left(\mX\right)_{k,:} = \gT\left(\mX\right)_{l,:}
    \end{array}\right..$$ 
According to the definition of $\mL_i$,
$$
\mL_{kl}^*=\mL_{kl}(+\infty)=\left\{\begin{array}{cc}
    0 &  (k,l)\in\sS_1\\
    \gamma &  (k,l)\in\sS_2\\
    -\sum_{l'=1,l'\neq k}^{m} \mL_{kl'}^*& k=l 
\end{array}\right..
$$
Until now, we have proven that the adaptive regularization part of AIR will converge at the end of the training, which gives an upper bound of $\left|\mL_{kl}^*-\mL_{kl}(t)\right|$.
Next, we will focus on the convergence rate of AIR. We separately discuss the rate under three cases in the formulation above.

If $(k,l)\in\sS_2$ or $k=l$, we denote $D=\min D_{kl}$, according to the definition of $\mE_{kl}(t)$, we have $\mE_{kl}(t)\leq \exp(-D t)$ and
$$
\begin{aligned}
\left|\mA_{kl}^*-\mA_{kl}(t)\right|
&=2\left[\frac{1}{m+2s}-\frac{1}{m+2s+\mE_{kl}(t)}\right]\\
&\leq 2\frac{\mE_{kl}(t)}{\left(m+2s\right)^2}\leq 2\frac{\frac{m\left(m-1\right)}{2} \exp(-D t)}{\left(m+2s\right)^2}\leq \exp\left(-D t\right).
\end{aligned}
$$
In particular, when $(k,l)\in\sS_2$ we have
$$
\left|\mL_{kl}^*-\mL_{kl}(t)\right|=\left|\mA_{kl}^*-\mA_{kl}(t)\right|\leq \exp\left(-D t\right). 
$$

If $(k,l)\in\sS_1$,
$$
\begin{aligned}
\left|\mL_{kl}^*-\mL_{kl}(t)\right|
&=\left|\mA_{kl}^*-\mA_{kl}(t)\right|
=\left|\mA_{kl}(t)\right|\\
&\leq \left|\sum_{k',l'\in\sS_1\cup \sS_3}\mA_{k'l'}(t)\right|\\
&=\left|2-\sum_{k',l'\in\sS_2\cup \sS_3}\mA_{k'l'}(t)\right|\\
&=\left|\gamma \left(m+2s\right)-\sum_{k',l'\in\sS_2\cup \sS_3}\mA_{k'l'}(t)\right|\\
&=\left|\sum_{k',l'\in\sS_2\cup \sS_3}\left(\gamma-\mA_{k'l'}(t)\right)\right|\\
&\leq\sum_{k',l'\in\sS_2\cup \sS_3}\left|\gamma-\mA_{k'l'}(t)\right|\\
&=\sum_{k',l'\in\sS_2\cup \sS_3}\left|\mA_{k'l'}^*-\mA_{k'l'}(t)\right|\leq 2\exp(-D t)/\gamma. \\
\end{aligned}
$$

If $k=l$,
$$
\left|\mL_{kl}^*-\mL_{kl}(t)\right|
=\left|\sum_{l'=1,l'\neq k}^{m}\left(\mL_{kl'}^*-\mL_{kl'}(t)\right)\right|\leq  2\left(m-1\right)\exp(-D t)/\gamma .
$$
\end{proof}

\section{Deeper and wider are beneficial}
\label{sec..deepwidth}
We will illustrate that the multi-layer matrix factorization enjoys implicit low-rank regularization without any special requirement on the initialization and constraint on shared dimension $r$ (width of the matrix), which overcomes the sensitivity on initialization in the two-layer matrix factorization \cite{albright2006algorithms}. That is, we do not need to estimate $r$ in advance. We design this experiment to explore the effects of $L$ and $r$ in \Figref{fig:layer_width}, and we see that the models perform better with bigger $L$ and $r$. However, bigger $L$ and $r$ also increase the computation complexity. Thus, we use $L=3$ and $r=\min\left\{m,n\right\}$ as our factorized model.

\begin{figure}
    \begin{center}
        \begin{tabular}{cc}
           \includegraphics[width=0.43\linewidth]{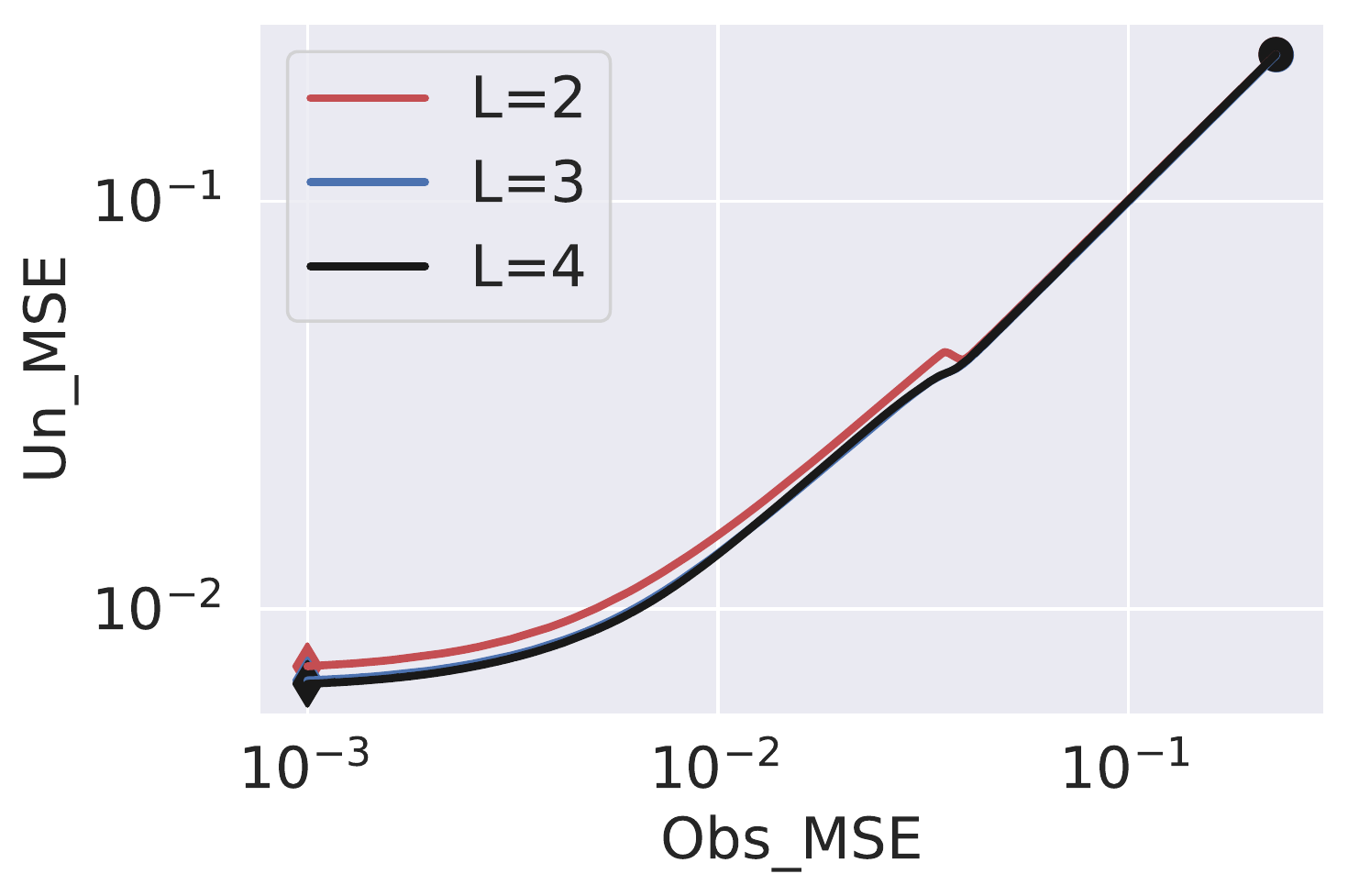}  & \includegraphics[width=0.43\linewidth]{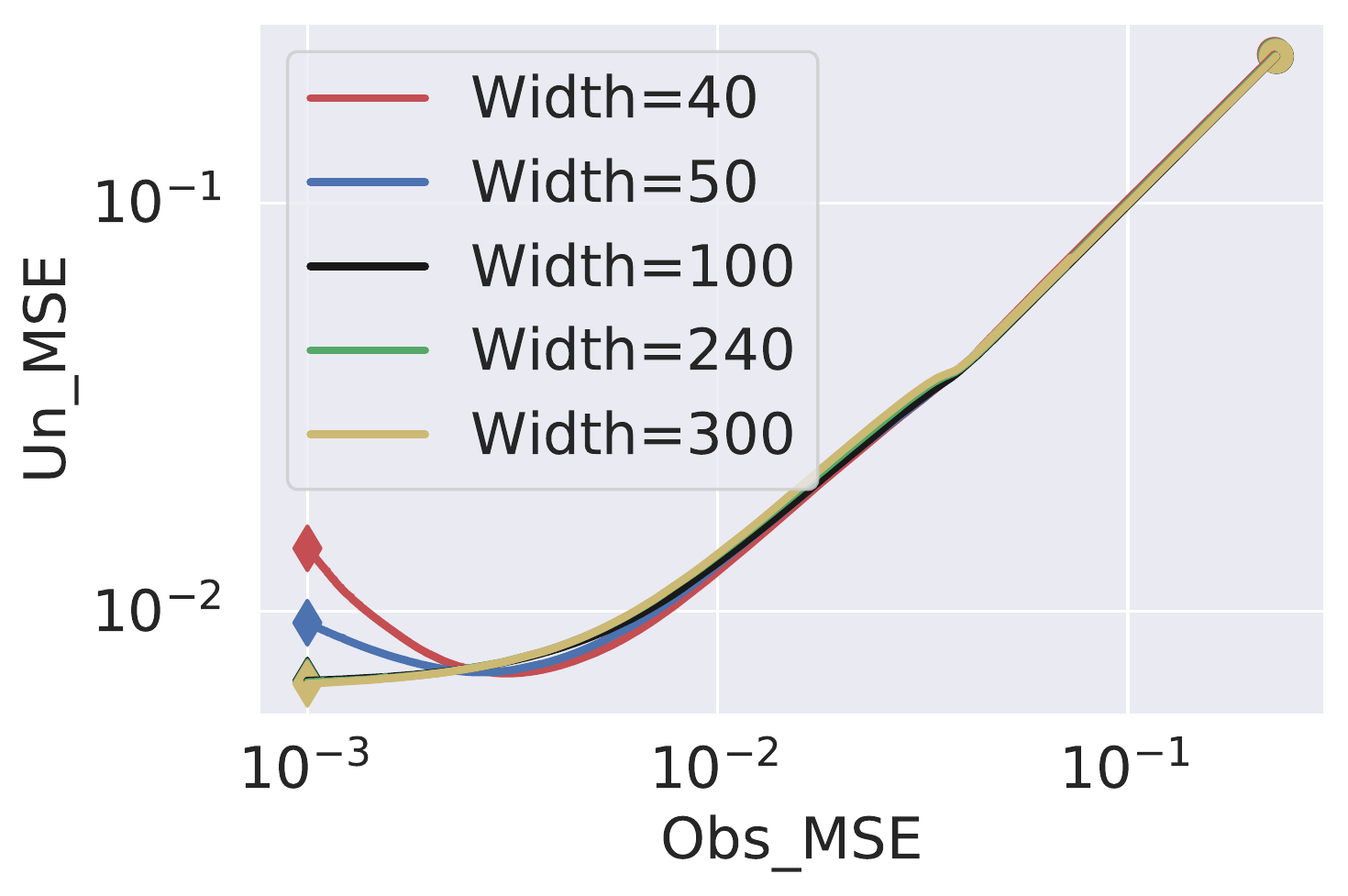} \\
            (a) Different numbers of the factorized matrix. & (b) Different shared dimensions.
        \end{tabular}
    \end{center}
    \caption{Trajectories of training AIR for inpainting the Barbara image with randomly missing $50\%$ pixels with (a) different number of factorized matrix, and (b) different shared common dimension.}
    \label{fig:layer_width}
\end{figure}

\bibliographystyle{unsrt}
\bibliography{references}

\end{document}